\documentclass{article} 
\usepackage{iclr2024_conference,times}


\usepackage{amsmath,amsfonts,bm}




\def\Figref#1{Figure~\ref{#1}}












\def\1{\bm{1}}










\DeclareMathAlphabet{\mathsfit}{\encodingdefault}{\sfdefault}{m}{sl}
\SetMathAlphabet{\mathsfit}{bold}{\encodingdefault}{\sfdefault}{bx}{n}











\newcommand{\E}{\mathbb{E}}



\DeclareMathOperator*{\argmax}{arg\,max}
\DeclareMathOperator*{\argmin}{arg\,min}

\DeclareMathOperator{\Tr}{Tr}

\usepackage[utf8]{inputenc} 
\usepackage[T1]{fontenc}    
\usepackage{hyperref}       
\usepackage{url}            
\usepackage{booktabs}       
\usepackage{amsfonts}       
\usepackage{nicefrac}       
\usepackage{microtype}      
\usepackage{xcolor}         
\usepackage{wrapfig}
\usepackage{makecell}

\usepackage{amsmath}
\usepackage{amssymb}
\usepackage{mathtools}
\usepackage{amsthm}
\usepackage{multirow}
\usepackage{graphicx}
\newbox{\bigpicturebox}

\usepackage{multirow}
\usepackage{adjustbox} 
\usepackage{pdflscape}
\usepackage{afterpage}
\usepackage{tikz}
\usepackage{graphics}
\usepackage{wrapfig}

\usepackage{lipsum,booktabs}

\usepackage[capitalize,noabbrev]{cleveref}

\theoremstyle{plain}
\newtheorem{theorem}{Theorem}[section]
\newtheorem*{theorem*}{Theorem}
\newtheorem{proposition}[theorem]{Proposition}
\newtheorem*{proposition*}{Proposition}

\theoremstyle{definition}
\newtheorem{definition}[theorem]{Definition}

\theoremstyle{remark}

\usepackage{mathtools}
\usepackage[textsize=tiny]{todonotes}

\usepackage{mathtools}
\usepackage{algorithm} 
\usepackage[noend]{algorithmic}
\usepackage{amsmath}

\usepackage{titlesec}
\titlespacing*{\section}{0pt}{0.15\baselineskip}{0.15\baselineskip}
\titlespacing*{\subsection}{0pt}{0.15\baselineskip}{0.15\baselineskip}
\titlespacing*{\subsubsection}{0pt}{0.1\baselineskip}{0.1\baselineskip}

\newcommand{\safa}[1]{\color{brown}{#1}}

\newlength\myindent
\setlength\myindent{2em}

\usepackage{comment}
\usepackage{xspace}
\usepackage{subcaption}
\newcommand{\tabref}[1]{Table~\ref{#1}}

\newcommand{\cH}{\mathcal{H}}
\newcommand{\cS}{\mathcal{S}}
\newcommand{\cD}{\mathcal{D}}
\newcommand{\cL}{\mathcal{\vect L}}

\newcommand{\cO}{\mathcal{O}}
\newcommand{\cA}{\mathcal{\vect A}}

\newcommand{\cN}{\mathcal{N}}

\newcommand{\vect}[1]{\boldsymbol{#1}}

\def\eg{\emph{e.g.}, } 
\def\ie{\emph{i.e.}, }

\def\etal{\emph{et. al} }

\newcommand{\STAC}{S$^2$AC}

\newcommand{\proofsketch}{\vspace*{-1ex}\noindent {\textit{Proof Sketch:} }}

\def\eg{\emph{e.g.}, } 
\def\ie{\emph{i.e.}, }

\def\etal{\emph{et. al} }

\title{\STAC: Energy-Based Reinforcement Learning with Stein Soft Actor Critic}



\author{
    \begin{tabular}{l}Safa Messaoud$^{1\dagger}$, Billel Mokeddem$^{1*}$,
    Zhenghai Xue$^{2*}$,
    Linsey Pang$^3$,
    Bo An$^{4,2}$,\vspace{1mm}\\
    Haipeng Chen$^{5\dagger}$,
    Sanjay Chawla$^{1\dagger}$
    \end{tabular}\vspace{1mm}\\ 
    $^1$Qatar Computing Research Institute, Hamad Bin Khalifa University,  $^2$School of Computer Science and\\ Engineering, Nanyang Technological University, $^3$SalesForce, $^4$Skywork AI, $^5$Data Science, William \& Mary\\
    \small{\texttt{\{smessaoud,bmokeddem,schawla\}@hbku.edu.qa, zhenghai001@e.ntu.edu.sg}}\\
    \small{\texttt{panglinsey@gmail.com, boan@ntu.edu.sg, hchen23@wm.edu}}\\
    \small{$^*$ Equal contribution} \ \ 
    \small{$^{\dagger}$ Corresponding authors}
}

%


\iclrfinalcopy 
\begin{document}

\maketitle
\vspace{-6mm}
\begin{abstract}
\vspace{-3mm}
Learning expressive stochastic policies instead of deterministic ones has been proposed to achieve better stability, sample complexity, and robustness. Notably, in Maximum Entropy Reinforcement Learning (MaxEnt RL), the policy is modeled as an expressive Energy-Based Model (EBM) over the Q-values. However, this formulation requires the estimation of the entropy of such EBMs, which is an open problem. To address this, previous MaxEnt RL methods either implicitly estimate the entropy, resulting in high computational complexity and variance (SQL), or follow a variational inference procedure that fits simplified actor distributions (\eg Gaussian) for tractability (SAC). We propose \underline{S}tein \underline{S}oft \underline{A}ctor-\underline{C}ritic (\STAC), a MaxEnt RL algorithm that learns expressive policies without compromising efficiency. Specifically, \STAC\ uses parameterized Stein Variational Gradient Descent (SVGD) as the underlying policy. We derive a closed-form expression of the entropy of such policies. Our formula is computationally efficient and only depends on first-order derivatives and vector products. Empirical results show that \STAC\ yields more optimal solutions to the MaxEnt objective than SQL and SAC in the multi-goal environment, and outperforms SAC and SQL on the MuJoCo benchmark. Our code is available at: \url{https://github.com/SafaMessaoud/S2AC-Energy-Based-RL-with-Stein-Soft-Actor-Critic}
\end{abstract}

\vspace{-2mm}\section{Introduction}
\label{sec:introduction}

\begin{wrapfigure}{r}{0.4\textwidth} 
\centering
\includegraphics[width=0.4\textwidth]{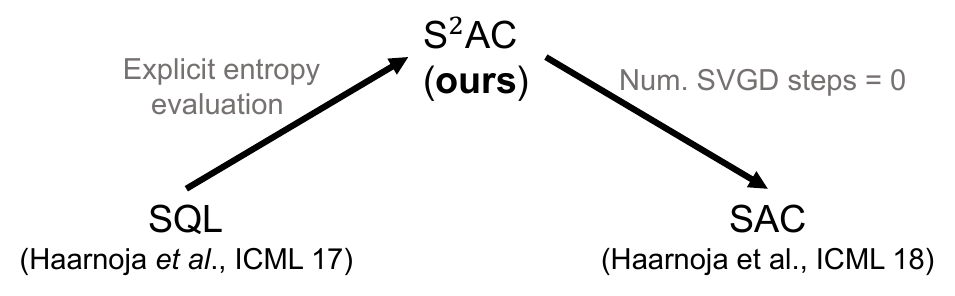}
\caption{Comparing \STAC\ to SQL and SAC. \STAC\ with a parameterized policy is reduced to SAC if the number of SVGD steps is 0. SQL becomes equivalent to \STAC\ if the entropy is evaluated explicitly with our derived formula.}\vspace{-3mm}
\label{fig:teaser_new}
\end{wrapfigure}

MaxEnt RL \citep{todorov2006linearly,ziebart2010modeling,haarnoja2017reinforcement,kappen2005path,toussaint2009robot,theodorou2010generalized,abdolmaleki2018maximum,haarnoja2018soft,vieillard2020munchausen} has been proposed to address challenges hampering the deployment of RL to real-world applications, including stability, sample efficiency ~\citep{gu2017q}, and robustness \citep{eysenbach2022maximum}. Instead of learning a deterministic policy, as in classical RL \citep{sutton1999policy,schulman2017proximal,silver2014deterministic, lillicrap2015continuous}, MaxEnt RL learns a stochastic policy that captures the intricacies of the action space. This enables better exploration during training and eventually better robustness to environmental perturbations at test time, \ie the agent learns multimodal action space distributions which enables picking the next best action in case a perturbation prevents the execution of the optimal one. To achieve this, MaxEnt RL models the policy using the expressive family of EBMs \citep{lecun2006tutorial}. This translates into learning policies that maximize the sum of expected future reward and expected future entropy. However, estimating the entropy of such complex distributions remains an open problem. 
\vspace{-1mm}

To address this, existing approaches either use tricks to go around the entropy computation or make limiting assumptions on the policy. This results in either poor scalability or convergence to suboptimal solutions. For example, SQL \citep{haarnoja2017reinforcement} implicitly incorporates entropy in the Q-function computation. This requires using importance sampling, which results in high variability and hence poor training stability and limited scalability to high dimensional action spaces. SAC \citep{haarnoja2018soft}, on the other hand, follows a variational inference procedure by fitting a Gaussian distribution to the EBM policy. This enables a closed-form evaluation of the entropy but results in a suboptimal solution. For instance, SAC fails in environments characterized by multimodal action distributions. Similar to SAC, IAPO \citep{marino2021iterative} models the policy as a uni-modal Gaussian. Instead of optimizing a MaxEnt objective, it achieves multimodal policies by learning a collection of parameter estimates (mean, variance) through different initializations for different policies. To improve the expressiveness of SAC, SSPG \citep{cetin2022policy} and SAC-NF \citep{mazoure2020leveraging} model the policy as a Markov chain with Gaussian transition probabilities and as a normalizing flow \citep{rezende2015variational}, respectively. However, due to training stability issues, the reported results in \cite{cetin2022policy} show that though both models learn multi-modal policies, they fail to maximize the expected future entropy in positive rewards setups.
\vspace{-1mm}

We propose a new algorithm, \STAC, that yields a more optimal solution to the MaxEnt RL objective. To achieve \textit{expressivity}, \STAC\ models the policy as a Stein Variational Gradient Descent (SVGD) \citep{liu2017stein} sampler from an EBM over Q-values (target distribution). SVGD proceeds by first sampling a set of particles from an initial distribution, and then iteratively transforming these particles via a sequence of updates to fit the target distribution. To compute a \textit{closed-form estimate of the entropy} of such policies, we use the change-of-variable formula for pdfs \citep{devore2012modern}. We prove that this is only possible due to the invertibility of the SVGD update rule, which does not necessarily hold for other popular samplers (\eg Langevin Dynamics \citep{welling2011bayesian}). While normalizing flow models \citep{rezende2015variational} are also invertible, SVGD-based policy is more expressive as it encodes the inductive bias about the unnormalized density and incorporates a dispersion term to encourage multi-modality, whereas normalizing flows encode a restrictive class of invertible transformations (with easy-to-estimate Jacobian determinants). Moreover, our formula is computationally efficient and only requires evaluating first-order derivatives and vector products. To improve \textit{scalability}, we model the initial distribution of the SVGD sampler as an isotropic Gaussian and learn its parameters, \ie mean and standard deviation, end-to-end. We show that this results in faster convergence to the target distribution, \ie fewer SVGD steps. Intuitively, the initial distribution learns to contour the high-density region of the target distribution while the SVGD updates result in better and faster convergence to the modes within that region. Hence, our approach is as parameter efficient as SAC, since the SVGD updates do not introduce additional trainable parameters.\vspace{-1mm}

Note that \STAC\ can be reduced to SAC when the number of SVGD steps is zero. Also, SQL becomes equivalent to \STAC\ if the entropy is computed explicitly using our formula (the policy in SQL is an amortized SVGD sampler). Beyond RL, the backbone of \STAC\ is a new variational inference algorithm with a more expressive and scalable distribution characterized by a closed-form entropy estimate. We believe that this variational distribution can have a wider range of exciting applications. \vspace{-1mm}

\begin{figure*}
    \centering
\includegraphics[width=0.8\linewidth]{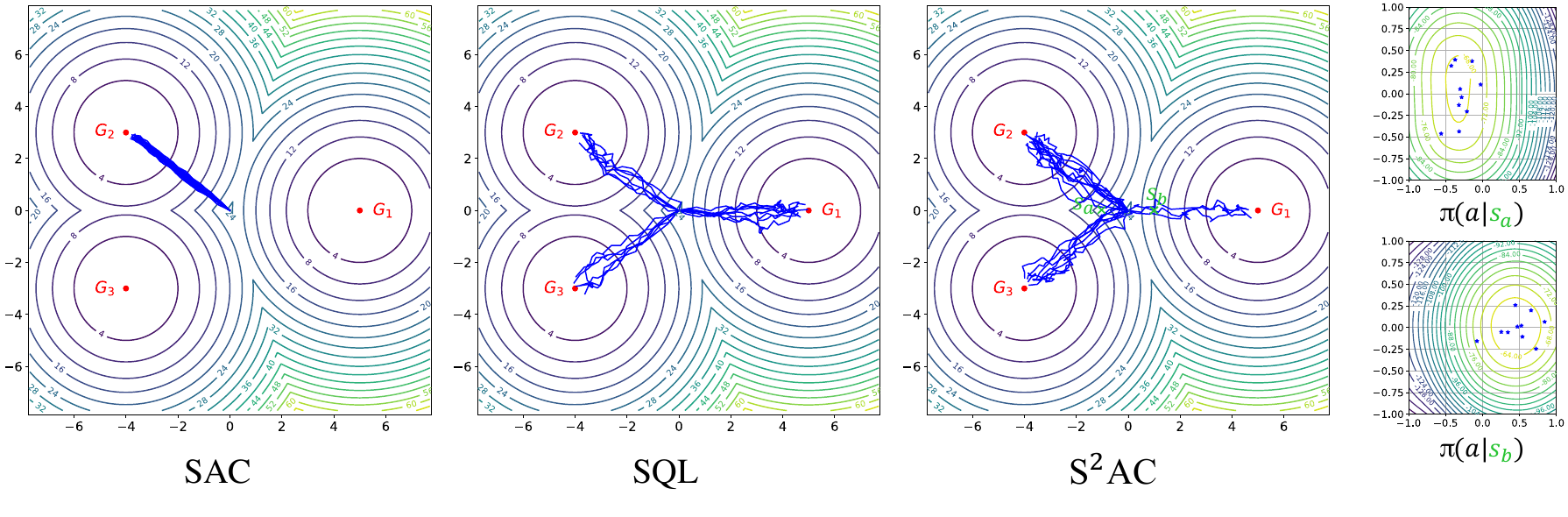}
\vspace{-5mm}
\caption{\STAC\ learns a more optimal solution to the MaxEnt RL objective than SAC and SQL. We design a multigoal environment where an agent starts from the center of the 2-d map and tries to reach one of the three goals ($G_1$, $G_2$, and $G_3$). The maximum expected future reward (level curves) is the same for all the goals but the expected future entropy is different (higher on the path to $G_2/G_3$): the action distribution $\pi(a|s)$ is bi-modal on the path to the left ($G_2$ and $G_3$) and unimodal to the right ($G_1$). Hence, we expect the optimal policy for the MaxEnt RL objective to assign more weights to $G_2$ and $G_3$. We visualize trajectories (in blue) sampled from the policies learned using SAC, SQL, and \STAC. SAC quickly commits to a single mode due to its actor being tied to a Gaussian policy. Though SQL also recovers the three modes, the trajectories are evenly distributed. \STAC\ recovers all the modes and approaches the left two goals more frequently. This indicates that it successfully maximizes not only the expected future reward but also the expected future entropy.}
\vspace{-8mm}
\label{fig:multi-goal-teaser}
\end{figure*}

We conduct extensive empirical evaluations of \STAC\ from three aspects. We start with a sanity check on the merit of our derived SVGD-based entropy estimate on target distributions with known entropy values (\eg Gaussian) or log-likelihoods (\eg Gaussian Mixture Models) and assess its sensitivity to different SVGD parameters (kernel, initial distribution, number of steps and number of particles). We observe that its performance depends on the choice of the kernel and is robust to variations of the remaining parameters. In particular, we find out that the kernel should be chosen to guarantee inter-dependencies between the particles, which turns out to be essential for invertibility. Next, we assess the performance of \STAC\ on a multi-goal environment \citep{haarnoja2017reinforcement} where different goals are associated with the same positive (maximum) expected future reward but different (maximum) expected future entropy. We show that \STAC\ learns multimodal policies and effectively maximizes the entropy, leading to better robustness to obstacles placed at test time. Finally, we test \STAC\ on the MuJoCo benchmark \citep{duan2016benchmarking}. \STAC\ yields better performances than the baselines on four out of the five environments. Moreover, \STAC\ shows higher sample efficiency as it tends to converge with fewer training steps. These results were obtained from running SVGD for only three steps, which results in a small overhead compared to SAC during training. Furthermore, to maximize the run-time efficiency during testing, we train an amortized SVGD version of the policy to mimic the SVGD-based policy. Hence, this reduces inference to a forward pass through the policy network without compromising the performance.

\vspace{-1mm}\section{Preliminaries}\label{sec:preliminary}
\vspace{-1mm}
\subsection{Samplers for Energy-based Models}\label{sec:ebm_samplers}
\vspace{-1mm}
In this work, we study three representative methods for sampling from EBMs: (1) Stochastic Gradient Langevin Dynamics (SGLD) \& Deterministic Langevin Dynamics (DLD)~\citep{welling2011bayesian}, (2) Hamiltonian Monte Carlo (HMC)~\citep{neal2011mcmc}, and (3) Stein Variational Gradient Descent (SVGD)~\citep{liu2016stein}. We review SVGD here since it is the sampler we eventually use in \STAC, and leave the rest to Appendix~\ref{appendix:additional_samplers}. 
SVGD is a particle-based Bayesian inference algorithm. Compared to SGLD and HMC which have a single particle in their dynamics, SVGD operates on a set of particles. Specifically, SVGD samples a set of $m$  particles $\{a_j\}_{j=1}^m$ from an initial distribution $q^{0}$ which it then transforms through a sequence of updates to fit the target distribution. Formally, at every iteration $l$, SVGD applies a form of functional gradient descent $\Delta f$ that minimizes the KL-divergence between the target distribution $p$ and the proposal distribution $q^{l}$ induced by the particles, \ie the update rule for the $i^{\text{th}}$ particles is: $a^{l+1}_{i} = a^{l}_{i} + \epsilon \Delta f(a_{i}^{l})$ with 
\vspace{-1mm}
\begin{equation}\label{eq:svgd_update}
\Delta f(a_{i}^{l}) = \mathbb{E}_{a_{j}^l\sim q^l}\big[ k(a^{l}_{i},a^{l}_{j})\nabla_{a^{l}_j}\log p(a^{l}_{j}) + \nabla_{a^{l}_{j}}k(a^{l}_i,a^{l}_j)\big] .
\end{equation} 
Here, $\epsilon$ is the step size and $k(\cdot,\cdot)$ is the kernel function, \eg the RBF kernel: 
$k(a_i,a_j) = \exp(||a_i -a_j||^2/2 \sigma^2)$. 
The first term within the gradient drives the particles toward the high probability regions of $p$, while the second term serves as a repulsive force to encourage dispersion. 
\subsection{Maximum-Entropy RL}
\vspace{-2mm}
We consider an infinite horizon Markov Decision Process (MDP) defined by a tuple $(\cS,\cA,p,r)$, where $\cS$ is the state space, $\cA$ is the action space and $p: \cS \times \cA \times \cS \rightarrow [0,\infty]$ is the state transition probability modeling the density of the next state $s_{t+1} \in \cS$ given the current state $s_t \in \mathcal{S}$ and action $a_t \in \mathcal{A}$. Additionally, we assume that the environment emits a bounded reward function $r \in [r_{\text{min}}, r_{\text{max}}]$ at every iteration. We use $\rho_{\pi}(s_t)$ and $\rho_{\pi}(s_t, a_t)$ to denote the state and state-action marginals of the trajectory distribution induced by a policy $\pi (a_t | s_t)$. We consider the setup of continuous action spaces \cite{lazaric2007reinforcement,lee2018deep,zhou2023single}. MaxEnt RL~\citep{todorov2006linearly,ziebart2010modeling,rawlik2012stochastic} learns a policy $\pi^{*}(a_t|s_t)$,  that instead of maximizing the expected future reward, maximizes the sum of the expected future reward and entropy:
\begin{equation}\label{eq:max_entr_obj}
\pi^{*}= \argmax\nolimits_{\pi}  \sum\nolimits_t \gamma^t \mathbb{E}_{(s_t,a_t)\sim \rho_{\pi}} \big[ r(s_t,a_t) + \alpha \mathcal{H}(\pi(\cdot|s_t))\big],
\end{equation}
where $\alpha$ is a temperature parameter controlling the stochasticity of the policy and $\mathcal{H}(\pi(\cdot|s_t))$ is the entropy of the policy at state $s_t$. The conventional RL objective can be recovered for $\alpha=0$. Note that the MaxEnt RL objective above is equivalent to approximating the policy, modeled as an EBM over Q-values, by a variational distribution $\pi(a_t|s_t)$ (see proof of equivalence in Appendix~\ref{appendix:MaxEnt_sol}), \ie 
\begin{equation}\label{eq:ebm}
\pi^{*} = \argmin\nolimits_{\pi} \sum\nolimits_t \mathbb{E}_{s_t \sim \rho_\pi}\big[ D_{K\!L}\big(\pi(\cdot|s_t) \| \exp\!{(  Q(s_t,\cdot)/\alpha )}/Z \big)\big],
\end{equation}
where $D_{KL}$ is the KL-divergence and $Z$ is the normalizing constant. We now review two landmark MaxEnt RL algorithms: SAC \citep{haarnoja2018soft} and SQL \citep{haarnoja2017reinforcement}.

\textbf{SAC} is an actor-critic algorithm that alternates between policy evaluation, \ie evaluating the Q-values for a policy $\pi_{\theta}(a_t|s_t)$:
\begin{equation}\label{eq:sac_pe}
Q_{\phi}(s_t,a_t) \leftarrow r(s_t,a_t)+ \gamma\mathop{\mathbb{E}}\nolimits_{s_{t+1},a_{t+1}\sim \rho_{\pi_{\theta}}}\big[ Q_{\phi}(s_{t+1},a_{t+1}) + \alpha  \cH(\pi_{\theta}(\cdot|s_{t+1}))  \big]
\end{equation} 
and policy improvement, \ie using the updated Q-values to compute a better policy:
\begin{equation}\label{eq:sac_pi}
\pi_{\theta}= \argmax\nolimits_{\theta} \sum\nolimits_t \mathop{\mathbb{E}_{s_t,a_t \sim \rho_{\pi_{\theta}} } } \big[ Q_{\phi}(a_t,s_t) + \alpha \cH(\pi_{\theta}(\cdot|s_t) )  \big].
\end{equation}
SAC models $\pi_{\theta}$ as an isotropic Gaussian, \ie $\pi_{\theta}(\cdot |s ) = \cN(\mu_{\theta}, \sigma_{\theta}I)$.  While this enables computing a closed-form expression of the entropy, it incurs an over-simplification of the true action distribution, and thus cannot represent complex distributions, \eg multimodal distributions.

\textbf{SQL} goes around the entropy computation, by defining a soft version of the value function $V_{\phi}=\alpha \log \big(  \int_{\cA} \exp\big( \frac{1}{\alpha} Q_{\phi}(s_t,a')  \big) da' \big)$. This enables expressing the Q-value (Eq~\eqref{eq:sac_pe}) independently from the entropy, \ie $Q_{\phi}(s_t,a_t)=r(s_t,a_t)+\gamma \E_{s_{t+1}\sim p}[V_{\phi}(s_{t+1})]$. Hence, SQL follows a soft value iteration which alternates between the updates of the ``soft'' versions of $Q$ and value functions:
\begin{eqnarray}
\label{eq:sql_q}&Q_{\phi}(s_t,a_t) \leftarrow r(s_t,a_t)+\gamma \E_{s_{t+1}\sim p}[V_{\phi}(s_{t+1})], \ \forall (s_t, a_t)  \\
\label{eq:sql_v}&V_{\phi}(s_t) \leftarrow  \alpha \log \big(  \int_{\cA} \exp\big( \frac{1}{\alpha} Q_{\phi}(s_t,a')  \big) da' \big), \ \forall s_t.
\end{eqnarray} 
Once the $Q_{\phi}$ and $V_{\phi}$ functions converge, SQL uses amortized SVGD~\cite{wang2016learning} to learn a stochastic sampling network $f_{\theta}( \xi, s_t )$ that maps noise samples $\xi$ into the action samples from the EBM policy distribution $\pi^{*}(a_t|s_t)= \exp \big( \frac{1}{\alpha} ( Q^{*}(s_t,a_t) - V^{*}(s_t))\big)$. The parameters $\theta$ are obtained by minimizing the loss $ J_{\theta}(s_t) = D_{K\!L} \big( \pi_{\theta}(\cdot|s_t)|| \exp\!{ \big(\frac{1}{\alpha} (Q^{*}_{\phi}(s_t,\cdot)-V^{*}_{\phi}(s_t)}) \big)$ with respect to $\theta$. Here, $\pi_{\theta}$ denotes the policy induced by $f_{\theta}$. SVGD is designed to minimize such KL-divergence without explicitly computing $\pi_{\theta}$. In particular, SVGD provides the most greedy direction as a functional $\Delta f_{\theta}(\cdot,s_t)$ (Eq~\eqref{eq:svgd_update}) which can be used to approximate the gradient $\partial J_{\theta}/\partial a_t$. Hence, the gradient of the loss $J_\theta$ with respect to $\theta$ is: $\partial J_{\theta}(s_t)/\partial \theta \propto \E_{\xi} \big[ \Delta f_{\theta}(\xi, s_t)  \partial f_{\theta}(\xi, s_t)/\partial \theta \big]$. Note that the integral in Eq~\eqref{eq:sql_v} is approximated via importance sampling, which is known to result in high variance estimates and hence poor scalability to high dimensional action spaces. Moreover, amortized generation is usually unstable and prone to mode collapse, an issue similar to GANs. Therefore, SQL is outperformed by SAC~\cite{haarnoja2018soft} on benchmark tasks like MuJoCo.

\vspace{-1mm}\section{Approach}\label{sec:approach}
\vspace{-1mm}
We introduce \STAC, a new actor-critic MaxEnt RL algorithm that uses SVGD as the underlying actor to generate action samples from policies represented using EBMs. This choice is motivated by the expressivity of distributions that can be fitted via SVGD. Additionally, we show that we can derive a closed-form entropy estimate of the SVGD-induced distribution, thanks to the invertibility of the update rule, which does not necessarily hold for other EBM samplers. Besides, we propose a parameterized version of SVGD to enable scalability to high-dimensional action spaces and non-smooth Q-function landscapes. \STAC\ is hence capable of learning a more optimal solution to the MaxEnt RL objective (Eq~\eqref{eq:max_entr_obj}) as illustrated in Figure~\ref{fig:multi-goal-teaser}.
\subsection{Stein Soft Actor Critic}
\vspace{-2mm}
Like SAC, \STAC\ performs soft policy iteration which alternates between policy evaluation and policy improvement. The difference is that we model the actor as a \textit{parameterized sampler from an EBM}. Hence, the policy distribution corresponds to an expressive EBM as opposed to a Gaussian. 

\textbf{Critic.} The critic's parameters $\phi$ are obtained by minimizing the Bellman loss as traditionally:
\begin{equation}
\phi^{*} = \argmin\nolimits_{\phi}\E_{(s_t,a_t) \sim \rho_{\pi_{\theta}}} \left[ (Q_{\phi}(s_t,a_t) - \hat{y})^2  \right],
\label{eq:critic_update}
\end{equation} 
with the target $\hat{y}
= r_{t}(s_t,a_t) + \gamma \E_{(s_{t+1},a_{t+1}) \sim \rho_{\pi}} \left[Q_{\bar{\phi}}(s_{t+1},a_{t+1}) + \alpha\mathcal{H}(  \pi(\cdot|s_{t+1}))\right].$
Here $\bar{\phi}$ is an exponentially moving average of the value
network weights \citep{mnih2015human}. 

\textbf{Actor as an EBM sampler.} The actor is modeled as a sampler from an EBM over the Q-values. To generate a set of valid actions, the actor first samples a set of particles $\{a^{0}\}$ from an initial distribution $q^{0}$ (\eg Gaussian). These particles are then updated over several iterations $l \in [1,L]$, \ie $\{a^{l+1}\} \leftarrow \{a^{l}\} + \epsilon h(\{a^{l}\},s)$ following the sampler dynamics characterized by a transformation $h$ (\eg for SVGD, $h=\Delta f$ in Eq~\eqref{eq:svgd_update}). If $q^{0}$ is tractable and $h$ is invertible, it's possible to compute a closed-form expression of the distribution of the particles at the $l^{\text{th}}$ iteration via the change of variable formula \cite{devore2012modern}: $q^{l}( a^{l}|s ) = q^{l-1}( a^{l-1}|s )\left|\det (I + \epsilon \nabla_{a^l} h(a^{l},s)) \right|^{-1}, \forall l \in[1,L]$. In this case, the policy is represented using the particle distribution at the final step $L$ of the sampler dynamics, \ie $\pi(a|s)= q^{L}(a^L|s)$ and the entropy can be estimated by averaging $\log q^{L}(a^L|s)$ over a set of particles (Section \ref{sec:entropy}). We study the invertibility of popular EBM samplers in Section \ref{sec:invertible_policies}. 
\begin{wrapfigure}{r}{0.5\textwidth} 
    \centering
    \includegraphics[width=\linewidth]{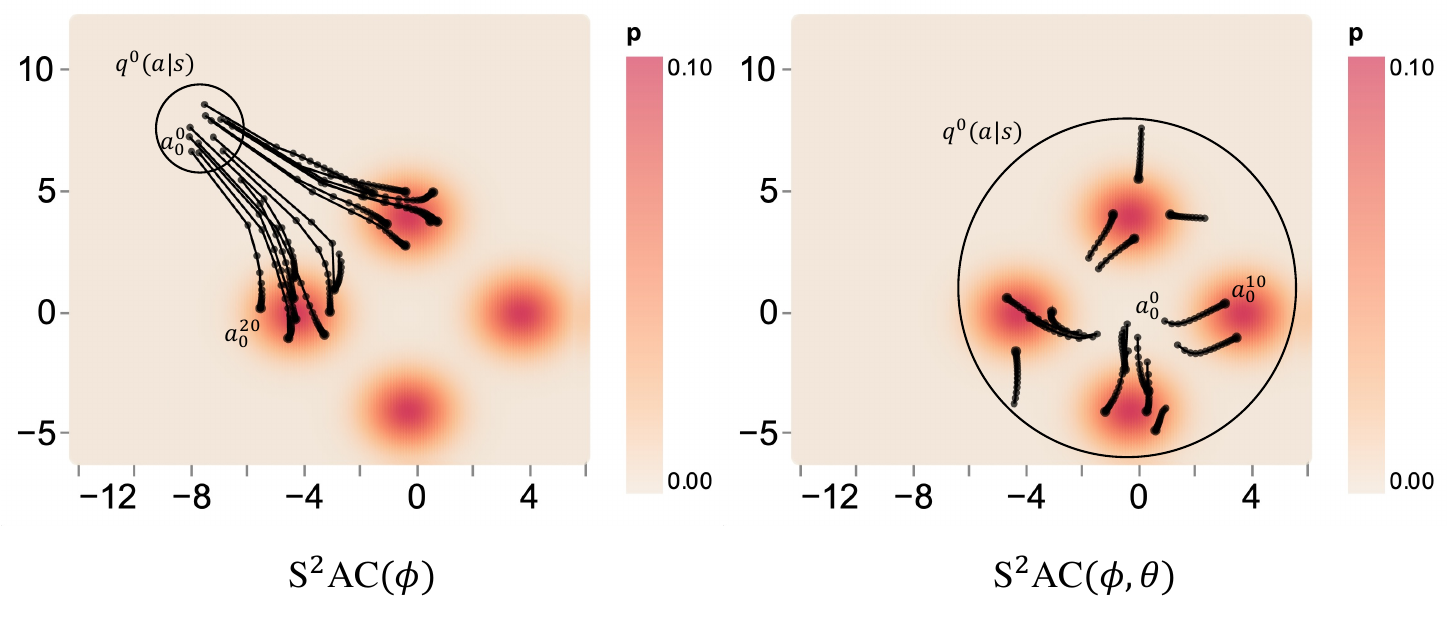}
    \vspace{-7mm}
    \caption{\STAC($\phi, \theta$) achieves faster convergence to the target distribution (in orange) than \STAC($\phi$) by parameterizing the initial distribution $\mathcal{N}(\mu_{\theta},\sigma_{\theta})$ of the SVGD sampler.}
    \label{fig:actor}
    \vspace{-3.5mm}
\end{wrapfigure}

 
\vspace{-4mm}
\textbf{Parameterized initialization.} To reduce the number of steps required to converge to the target distribution (hence reducing computation cost), we further propose modeling the initial distribution as a parameterized isotropic Gaussian, \ie $a^0\sim\cN(\mu_{\theta}(s),\sigma_{\theta}(s))$. The parameterization trick is then used to express $a^{0}$ as a function of $\theta$. Intuitively, the actor would learn $\theta$ such that the initial distribution is close to the target distribution. Hence, fewer steps are required to converge, as illustrated in \Figref{fig:actor}. Note that if the number of steps $L=0$, \STAC\ is reduced to SAC. Besides, to deal with the non-smooth nature of deep Q-function landscapes which might lead to particle divergence in the sampling process, we bound the particle updates to be within a few standard deviations ($t$) from the mean of the learned initial distribution, \ie $-t\sigma_{\theta} \leq a_{\theta}^{l} \leq t\sigma_{\theta}$, $\forall l \in [1,L]$. Eventually, the initial distribution $q_{\theta}^{0}$ learns to contour the high-density region of the target distribution and the following updates refine it by converging to the spanned modes. Formally, the parameters $\theta$ are computed by minimizing the expected KL-divergence between the policy $q_{\theta}^L$ induced by the particles from the sampler and the EBM of the Q-values:
\vspace{-2mm}
\begin{align} 
\theta^{*} \!=\! &\argmax\nolimits_{\theta}\!  \E_{s_t \sim \cD,  a_{\theta}^L \sim \pi_{\theta}} \left[ Q_{\phi}(s_t, a_{\theta}^L) \right] + \alpha  \E_{s_t \sim \cD} \left[  \mathcal{H}( \pi_{\theta}(\cdot|s_t) ) \right] \nonumber\\
\text{s.t. } &-t\sigma_{\theta} \leq a_{\theta}^{l} \leq t\sigma_{\theta}, \quad \forall l \in [1,L] .
\label{eq:actor_loss}
\end{align}
Here, $\cD$ is the replay buffer. The derivation is in Appendix~\ref{appendix:actor_loss}. Note that the constraint does not truncate the particles as it is not an invertible transformation which then violates the assumptions of the change of variable formula. Instead, we sample more particles than we need and select the ones that stay within the range. We call \STAC($\phi,\theta$) and \STAC($\phi$) as two versions of  \STAC\ with/without the parameterized initial distribution. The complete \STAC\ algorithm is in Algorithm~\ref{alg:stac} of Appendix~\ref{appendix:summary}.

\subsection{A Closed-Form Expression of the Policy's Entropy}
\label{sec:entropy}
\vspace{-1mm}
A critical challenge in MaxEnt RL is how to efficiently compute the entropy term $\mathcal{H}(\pi(\cdot|s_{t+1}))$
in Eq~\eqref{eq:max_entr_obj}. We show that, if we model the policy as an iterative sampler from the EBM, under certain conditions, we can derive a closed-form estimate of the entropy at convergence. 

\begin{theorem}\label{thm:generic_entropy}
Let $F:\mathbb{R}^{n} \rightarrow \mathbb{R}^{n}$ be an invertible transformation of the form $F(a) = a + \epsilon h(a) $. We denote by $q^L(a^L)$ the distribution obtained from repeatedly applying $F$ to a set of samples $\{a^{0}\}$ from an initial distribution $q^{0}(a^0)$ over $L$ steps, \ie $a^{L} = F \circ F \circ \cdots \circ F(a^{0})$. Under the condition $ \epsilon ||\nabla_{a^{l}_{i} } h(a_i)||_{\infty} \ll 1 $, $\forall l \in [1,L]$,
the distribution of the particles at the $L^{\text{th}}$ step is:
\vspace{-2mm}

\begin{equation}\label{eq:generic_entropy}
\log q^{L}(a^L) \approx \log q^0(a^0) - \epsilon \sum\nolimits_{l = 0}^{L-1}  \, \Tr(\nabla_{a^{l}} h(a^{l})) + \cO(\epsilon^2 dL).
\end{equation}
\vspace{-2mm} 
Here, $d$ is the dimensionality of $a$, \ie $a\in \mathbb{R}^d$ and $\cO(\epsilon^2 dL)$ is the order of approximation error.
\end{theorem}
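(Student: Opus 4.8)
The plan is to treat this as a direct application of the change-of-variable formula for densities, followed by a telescoping sum and a controlled expansion of a matrix logarithm. Since $F(a)=a+\epsilon h(a)$ is invertible with Jacobian $\nabla_a F = I + \epsilon\,\nabla_a h$, the change-of-variable formula applies at each step and gives
\[
q^{l}(a^{l}) = q^{l-1}(a^{l-1})\,\bigl|\det\bigl(I + \epsilon\,\nabla_{a^{l-1}} h(a^{l-1})\bigr)\bigr|^{-1}, \qquad l\in[1,L].
\]
First I would take logarithms and sum over $l=1,\dots,L$. The intermediate $\log q^{l}$ terms cancel telescopically, leaving, with the shorthand $J_l := \nabla_{a^{l}} h(a^{l})$,
\[
\log q^{L}(a^{L}) = \log q^{0}(a^{0}) - \sum\nolimits_{l=0}^{L-1} \log\bigl|\det(I + \epsilon J_l)\bigr|.
\]
Because the smallness hypothesis $\epsilon\|J_l\|_\infty \ll 1$ keeps $\det(I+\epsilon J_l)$ close to $\det I = 1$, this determinant is positive and the absolute value can be dropped.

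Next I would convert each summand into a trace. Using the identity $\log\det M = \Tr\log M$ together with the power-series expansion of the matrix logarithm, $\log(I+\epsilon J_l) = \epsilon J_l - \tfrac12(\epsilon J_l)^2 + \tfrac13(\epsilon J_l)^3 - \cdots$, and taking traces yields
\[
\log\det(I + \epsilon J_l) = \epsilon\,\Tr(J_l) + R_l, \qquad R_l = -\tfrac{\epsilon^2}{2}\Tr(J_l^2) + \tfrac{\epsilon^3}{3}\Tr(J_l^3) - \cdots .
\]
The hypothesis guarantees that the spectral radius of $\epsilon J_l$ is below $1$, so the series converges and the expansion is legitimate. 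Substituting the leading term $\epsilon\,\Tr(J_l)$ back into the telescoped identity reproduces the claimed formula, and what remains is to show $\sum_l R_l = \cO(\epsilon^2 dL)$.

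\textbf{Main obstacle.} The step I expect to require the most care is the error accounting, in particular getting the dimension dependence \emph{linear} rather than quadratic in $d$. The clean way is to bound the trace through the eigenvalues: $|\Tr(J_l^k)| = |\sum_i \lambda_i^k| \le d\,\rho(J_l)^k \le d\,\|J_l\|_\infty^k$, since the spectral radius is dominated by the induced norm. The geometric tail is then controlled by
\[
|R_l| \le d \sum\nolimits_{k\ge 2} \frac{(\epsilon\|J_l\|_\infty)^k}{k} = \cO(\epsilon^2 d),
\]
where the factor $d$ enters because the trace aggregates $d$ eigenvalues (not all $d^2$ matrix entries). Summing the per-step remainders over the $L$ iterations produces the advertised $\cO(\epsilon^2 dL)$ term. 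The two points to nail down are therefore (i) that invertibility of $F$, assumed in the statement, is exactly what licenses the change-of-variable formula at every step of the telescope, and (ii) that the remainder is bounded by $d$ times the spectral radius rather than by the full entrywise sum, which is what keeps the error linear in the dimension.
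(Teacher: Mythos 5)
Your proof follows essentially the same route as the paper's: the change-of-variable formula at each step, a telescoping (inductive) accumulation of the log-densities, and a linearization of $\log\left|\det(I+\epsilon J_l)\right|$ to $\epsilon\,\Tr(J_l)$ under the smallness hypothesis. The only substantive difference is that where the paper invokes its corollary of Jacobi's formula as a black box for this last step, you prove it inline via the matrix-logarithm series and the eigenvalue bound $|\Tr(J_l^k)|\le d\,\rho(J_l)^k$, which correctly delivers the per-step $\cO(\epsilon^2 d)$ remainder and hence the stated $\cO(\epsilon^2 dL)$ error.
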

\proofsketch{
As $F$ is invertible, we apply the change of variable formula (Appendix~\ref{appendix:change_of_variable}) on the transformation $F \circ F \circ \cdots F$ and obtain: $\log q^{L}(a^L) = \log q^0(a^0) - \sum\nolimits_{l = 0}^{L-1} \log \left| \det (I+\epsilon \nabla_{a^{l}} h(a^{l})) \right|$.
Under the assumption $ \epsilon ||\nabla_{a_i} h(a_i)||_{\infty} \ll 1 $, we apply the corollary of Jacobi's formula (Appendix~\ref{appendix:jacobi}) and get Eq.~\eqref{eq:generic_entropy}. The detailed proof is in Appendix \ref{app:proof_theorem_41}.} Note that the condition $ \epsilon ||\nabla_{a_i} h(a_i)||_{\infty} \ll 1 $ can always be satisfied when we choose a sufficiently small step size $\epsilon$, \textit{or} the gradient of $h(a)$ is small, \ie $h(a)$ is Lipschitz continuous with a sufficiently small constant. \\
It follows from the theorem above, that the entropy of a policy modeled as an EBM sampler (Eq~\eqref{eq:actor_loss}) can be expressed analytically as:
\begin{equation}\label{eq:closed_form_entropy}
\mathcal{H}(\pi_{\theta}(\cdot|s))\!=\! -\mathbb{E}_{a_{\theta}^{0}\sim q_{\theta}^{0}}\Big[\log q_{\theta}^{L}(a_{\theta}^{L} |s)\Big]\!\approx\!-\mathbb{E}_{a_{\theta}^{0}\sim q_{\theta}^{0} }\Big[\!\log q_{\theta}^{0}(a^{0}|s)\!-\!\epsilon\!\sum\nolimits_{l = 0}^{L-1} \Tr\!\Big(\!\nabla_{a_{\theta}^{l}}\!h(a_{\theta}^{l},s)\!\Big)\!\Big].
\end{equation}
In the following, we drop the dependency of the action on $\theta$ for simplicity of the notation.
\subsection{Invertible Policies}
\label{sec:invertible_policies}
\vspace{-2mm}
Next, we study the invertibility of three popular EBM samplers: SVGD, SGLD, and HMC as well as the efficiency of computing the trace, \ie $\Tr(\nabla_{a^l} h(a^{l}, s))$ in Eq~\eqref{eq:generic_entropy} for the ones that are invertible.
\begin{proposition}[SVGD invertibility]\label{prop:svgd_invertibility}
 Given the SVGD learning rate $\epsilon$ and RBF kernel $k(\cdot,\cdot)$ with variance $\sigma$, if $\epsilon \ll \sigma $, the update rule of SVGD dynamics defined in Eq~\eqref{eq:svgd_update} is invertible.
\end{proposition}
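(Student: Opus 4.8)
The plan is to exploit the fact that the SVGD update $F(a) = a + \epsilon h(a)$, with $h = \Delta f$ from Eq~\eqref{eq:svgd_update}, is a perturbation of the identity, and that such maps are invertible as soon as the perturbation $\epsilon h$ is a contraction. Concretely, to invert $y = a + \epsilon h(a)$ one solves the fixed-point equation $a = y - \epsilon h(a)$; the map $T_y(a) = y - \epsilon h(a)$ is a contraction precisely when its Lipschitz constant $\epsilon\,\mathrm{Lip}(h) < 1$, and the Banach fixed-point theorem then yields a unique $a = F^{-1}(y)$ for every $y$. Equivalently, and this is the property actually needed for the change-of-variable determinant in \thmref{thm:generic_entropy} to be well defined, the Jacobian $I + \epsilon\nabla_a h(a)$ is nonsingular whenever $\epsilon\|\nabla_a h(a)\|_{op} < 1$, by the Neumann series. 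So the proposition reduces to a single quantitative estimate: bounding $\sup_a \|\nabla_a h(a)\|_{op}$ and showing it is $O(1/\sigma)$, so that $\epsilon \ll \sigma$ forces the contraction constant below $1$.

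Next I would differentiate the drift. Writing $h(a_i) = \frac{1}{m}\sum_j [\, k(a_i,a_j)\nabla_{a_j}\log p(a_j) + \nabla_{a_j}k(a_i,a_j)\,]$ and holding the remaining particles fixed, the Jacobian splits into an \emph{attractive} piece $\frac{1}{m}\sum_j \nabla_{a_i}k(a_i,a_j)\,\nabla_{a_j}\log p(a_j)^{\top}$ and a \emph{repulsive} piece $\frac{1}{m}\sum_j \nabla_{a_i}\nabla_{a_j}k(a_i,a_j)$. For the RBF kernel I would invoke the identities $\nabla_{a_i}k = -\sigma^{-2}(a_i-a_j)k$ and $\nabla_{a_i}\nabla_{a_j}k = \sigma^{-2}k\,I - \sigma^{-4}k\,(a_i-a_j)(a_i-a_j)^{\top}$, together with the elementary maximizer bounds $k\le 1$, $\|a_i-a_j\|\,k \le \sigma e^{-1/2}$, and $\|a_i-a_j\|^2 k \le 2\sigma^2 e^{-1}$. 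Under a bound on the score, $\|\nabla\log p\| \le G$ (which holds on the bounded particle region enforced by the $\pm t\sigma_\theta$ constraint in Eq~\eqref{eq:actor_loss}), the attractive piece is then $O(G/\sigma)$.

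The hard part will be the repulsive piece, which contributes terms of order $\sigma^{-2}$ (one from $\sigma^{-2}k\,I$ and one from $\sigma^{-4}\|a_i-a_j\|^2 k$); this is the dominant contribution and does not by itself yield the clean $1/\sigma$ scaling that matches $\epsilon \ll \sigma$. I expect the crux of the argument to be controlling this kernel-Hessian term: either by working in the regime where $\sigma$ is bounded below, so that $\sigma^{-2} \lesssim \sigma^{-1}$ and the two pieces combine into $\|\nabla_a h\| \le C/\sigma$ with $C$ depending on $G$ and $m$, or by reading the hypothesis $\epsilon \ll \sigma$ as "$\epsilon$ is small enough that $\epsilon\sup_a\|\nabla_a h\| < 1$," a statement the explicit $\sigma$-power bound makes precise. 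A secondary technical point to dispatch is the particle coupling: since $h(a_i)$ depends on all particles through the empirical $q^l$, I would note that the $j=i$ summand is constant ($k(a_i,a_i)=1$ and $\nabla_{a_j}k|_{j=i}=0$) and hence contributes nothing to $\nabla_{a_i}h$, so the per-particle Jacobian used in the change-of-variable formula is well defined and the bound above applies uniformly across the $L$ iterations.
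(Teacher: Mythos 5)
Your proposal is sound and lands on the same quantitative core as the paper's argument, but it packages the invertibility step differently, and in a way that is arguably stronger. The paper's proof (Appendix~\ref{appendix: svgd_invertibility}) computes the same per-particle Jacobian $\nabla_{a_i}F(a_i) = I + \epsilon \nabla_{a_i}h(a_i)$, uses the same RBF identities, and bounds the same two pieces (attractive and repulsive); it then concludes via strict monotonicity in one dimension and via diagonal dominance plus the implicit function theorem in higher dimensions. The latter is only a local statement: an everywhere-nonsingular Jacobian does not imply global injectivity on $\mathbb{R}^d$. Your contraction/Banach fixed-point argument (invert $y = a + \epsilon h(a)$ by iterating $T_y(a) = y - \epsilon h(a)$, a contraction once $\epsilon\,\mathrm{Lip}(h) < 1$) yields global bijectivity directly, which is what the change-of-variables step in Theorem~\ref{thm:generic_entropy} actually requires, and your Neumann-series remark recovers nonsingularity of $I + \epsilon\nabla_a h$ as a corollary. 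Your scaling diagnosis is also accurate and more candid than the paper's: both your bound and the paper's contain an attractive contribution of order $\epsilon G/\sigma$ (with $G$ a score bound) and a repulsive contribution of order $\epsilon/\sigma^2$; the latter appears in the paper's own lower bound as the $8\epsilon\alpha/\sigma^2$ term, so the stated hypothesis $\epsilon \ll \sigma$ suffices only if $\sigma$ is bounded away from zero (or if one reads the hypothesis as ``$\epsilon$ small enough that the contraction constant is below one''), which are exactly the two repairs you propose.

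One step of your plan is wrong as written: the $j=i$ summand of $h(a_i)$ is not constant. You correctly observe that $k(a_i,a_i)\equiv 1$ and $\nabla_{a_j}k(a_i,a_j)|_{j=i} = 0$, but that summand also carries the factor $\nabla_{a_i}\log p(a_i)$, which varies with $a_i$; its contribution to $\nabla_{a_i}h(a_i)$ is the Hessian term $\frac{1}{m}\nabla^2_{a_i}\log p(a_i)$ (i.e., $\frac{1}{\alpha m}\nabla^2_{a_i}Q(s,a_i)$ in the RL setting), not zero. The paper sidesteps precisely this issue by defining the empirical mean over particles $a_j \neq a_i$ (stated explicitly in the proof of Theorem~\ref{thm:svgd_entropy}, ``in order to avoid computing Hessians''), and its invertibility computation implicitly does the same by treating $\nabla_{a_j}\log p(a_j)$ as constant with respect to $a_i$. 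Either adopt that convention, or add a bounded-log-density-Hessian assumption of the form $\frac{\epsilon}{m}\|\nabla^2_{a}\log p\| \ll 1$ to your Lipschitz estimate; with that one-line fix, your argument goes through.
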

\proofsketch{We use the explicit function theorem to show that the Jacobian $\nabla_{a} F(a,s)$ of the update rule $F(a,s)$ is diagonally dominated and hence invertible. This yields invertibility of $F(a,s)$. See detailed proof in Appendix~\ref{appendix: svgd_invertibility}.}
\begin{theorem}\label{thm:svgd_entropy} 
The closed-form estimate of $\log q^L(a^L|s)$ for the SVGD based sampler with an RBF kernel $k(\cdot,\cdot)$ is 
\vspace{-4mm}
\begin{equation*}\small
\begin{aligned}
\log q^L(a^L|s)\!\approx\!\log\!q^{0}(a^{0}|s)\!+\!\frac{\epsilon}{m \sigma^2}\!\sum_{l=0}^{L-1}\sum_{j=1, a^{l} \neq a_{j}^{l}}^{m}\!k(a^{l}_{j},a^l)\Big(\!(a^{l}\!-\!a^{l}_{j})^{\top} \nabla_{a^{l}_{j}} Q(s,a^{l}_{j})\!+\!\frac{\alpha}{\sigma^2}\|a^{l}\!-\!a^{l}_{j}\|^{2}\!-\!d \alpha \Big).
\end{aligned}\label{eq:svgd_entr_formula}
\vspace{-4mm}
\end{equation*}
\end{theorem}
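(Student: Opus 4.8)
The plan is to specialize the generic entropy formula of Theorem~\ref{thm:generic_entropy} to the concrete SVGD transformation and to evaluate the trace $\Tr(\nabla_{a^l} h(a^l,s))$ in closed form. Since Proposition~\ref{prop:svgd_invertibility} guarantees that the SVGD map $F(a) = a + \epsilon\,\Delta f(a)$ is invertible whenever $\epsilon \ll \sigma$, Theorem~\ref{thm:generic_entropy} applies and gives $\log q^L(a^L|s) \approx \log q^0(a^0|s) - \epsilon\sum_{l=0}^{L-1}\Tr(\nabla_{a^l} h(a^l,s))$, so it suffices to compute the trace of the Jacobian of one SVGD step. Here $h$ is the empirical SVGD drift obtained from Eq.~\eqref{eq:svgd_update} by approximating the expectation over $q^l$ with the average over the $m$ particles and taking the EBM target $p(\cdot|s)\propto\exp(Q(s,\cdot)/\alpha)$; after absorbing the temperature into the step size this reads $h(a) = \frac{1}{m}\sum_{j}\big[k(a,a_j)\nabla_{a_j}Q(s,a_j) + \alpha\,\nabla_{a_j}k(a,a_j)\big]$, so that the drift is carried by $\nabla Q$ and the repulsion is weighted by $\alpha$.

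First I would differentiate the two terms of $h$ with respect to the particle $a$, treating the remaining particles $a_j$ (the fixed empirical measure $q^l$) as constants. For the drift term $\frac{1}{m}\sum_j k(a,a_j)\nabla_{a_j}Q(s,a_j)$, the vector $\nabla_{a_j}Q$ does not depend on $a$, so the Jacobian is a sum of rank-one matrices $\frac{1}{m}\sum_j \nabla_{a_j}Q\,(\nabla_a k(a,a_j))^{\top}$, whose trace is $\frac{1}{m}\sum_j (\nabla_a k(a,a_j))^{\top}\nabla_{a_j}Q$. For the repulsion term $\frac{\alpha}{m}\sum_j \nabla_{a_j}k(a,a_j)$ I would apply the product rule to the vector-times-scalar expression, which produces an identity-matrix piece (contributing $d$ to the trace through $k$) together with a rank-one piece that contributes the quadratic $\|a-a_j\|^2$ term. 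Substituting the RBF gradients $\nabla_a k(a,a_j) = \frac{1}{\sigma^2}(a_j-a)k(a,a_j)$ and $\nabla_{a_j}k(a,a_j) = \frac{1}{\sigma^2}(a-a_j)k(a,a_j)$ turns these traces into $\frac{1}{m\sigma^2}\sum_j k(a,a_j)(a_j-a)^{\top}\nabla_{a_j}Q$ and $\frac{\alpha}{m\sigma^2}\sum_j k(a,a_j)\big(d - \frac{1}{\sigma^2}\|a-a_j\|^2\big)$ respectively.

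Collecting the two contributions, multiplying by $-\epsilon$ as dictated by Theorem~\ref{thm:generic_entropy}, and summing over $l$ flips every sign and yields exactly the stated expression, with prefactor $\frac{\epsilon}{m\sigma^2}$ and bracket $(a^l - a^l_j)^{\top}\nabla_{a^l_j}Q + \frac{\alpha}{\sigma^2}\|a^l - a^l_j\|^2 - d\alpha$. The restriction $a^l \neq a^l_j$ in the sum reflects that the self-interaction term ($j=i$) is excluded: there the first-argument kernel gradient vanishes, and the only surviving contribution would be a Hessian-of-$Q$ term that is not part of the inter-particle dynamics. The main obstacle I anticipate is the repulsion Jacobian: one must carefully differentiate the product of the displacement vector $(a-a_j)$ and the scalar kernel, track the two signs coming from the RBF exponent, and correctly thread the temperature $\alpha$ through the drift/repulsion split so that the final $\frac{\alpha}{\sigma^2}$ and $d\alpha$ factors land where the theorem states; verifying that the diagonal term may indeed be dropped, consistent with the diagonal dominance established in Proposition~\ref{prop:svgd_invertibility}, is the other point requiring care.
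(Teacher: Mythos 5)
Your proposal is correct and follows essentially the same route as the paper's proof: invoke Proposition~\ref{prop:svgd_invertibility} so that Theorem~\ref{thm:generic_entropy} applies, split the empirical SVGD drift into the $\nabla Q$ term and the kernel-repulsion term, evaluate each Jacobian trace via the rank-one/identity structure of the RBF gradients, and exclude the $j=i$ self-interaction precisely to avoid the Hessian-of-$Q$ contribution. Your accounting of where the temperature $\alpha$ enters (absorbed into the step size, so the drift carries $\nabla Q$ and the repulsion carries $\alpha$) is in fact slightly more explicit than the paper's own derivation.
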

Here, $(\cdot)^\top$ denotes the transpose of a matrix/vector.
Note that the entropy does not depend on any matrix computation, but only on vector dot products and first-order vector derivatives. The proof is in Appendix~\ref{appendix: svgd_likelihood_proof}. Intuitively, the derived likelihood is proportional to (1) the concavity of the curvature of the Q-landscape, captured by a weighted average of the neighboring particles' Q-value gradients and (2) pairwise-distances between the neighboring particles ($\sim\!\|a^{l}_{i}\!-\!a^{l}_{j}\|^2\cdot \exp{(\|a^{l}_{i}\!-\!a^{l}_{j}\|^2)}$), \ie the larger the distance the higher is the entropy. We elaborate on the connection between this formula and non-parametric entropy estimators in Appendix~\ref{appendix:related_work}.
\begin{proposition}[SGLD, HMC]\label{prop:sgld_invertibility}
The SGLD and HMC updates are not invertible w.r.t. $a$. 
\end{proposition}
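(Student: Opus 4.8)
The plan is to exhibit, for each sampler, the precise feature of its update that violates the hypothesis of Theorem~\ref{thm:generic_entropy}: that the one-step map be a \emph{deterministic} bijection of the form $F(a)=a+\epsilon h(a)$ depending on $a$ alone. The obstruction has a different flavor in the two cases, so I would treat them separately.

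For \textbf{SGLD}, I would first write the update explicitly as
\begin{equation*}
a^{l+1}=a^{l}+\epsilon\nabla_{a^{l}}\log p(a^{l})+\sqrt{2\epsilon}\,\xi^{l},\qquad \xi^{l}\sim\cN(0,I),
\end{equation*}
and argue that the injected noise $\xi^{l}$ prevents $a^{l+1}$ from being a function of $a^{l}$ at all: the transition is the Gaussian kernel $T(a^{l+1}\mid a^{l})=\cN\big(a^{l+1};\,a^{l}+\epsilon\nabla_{a^{l}}\log p(a^{l}),\,2\epsilon I\big)$, which has full support. Every target point is therefore reachable from every $a^{l}$ with positive density, so no inverse map $a^{l+1}\mapsto a^{l}$ exists and the update is not invertible w.r.t.\ $a$. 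I would close with the structural consequence: the density propagates by the convolution $q^{l+1}(a^{l+1})=\int q^{l}(a^{l})\,T(a^{l+1}\mid a^{l})\,da^{l}$, a smoothing operation, rather than by a Jacobian pushforward, so the change-of-variable formula underpinning Theorem~\ref{thm:generic_entropy} does not apply.

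For \textbf{HMC}, the delicate point, and the step I expect to be the main obstacle, is that the leapfrog integrator \emph{is} invertible (indeed volume-preserving) on the \emph{joint} phase space $(a,v)$ of position and momentum; one must therefore argue carefully that invertibility fails on the position coordinate $a$ that actually enters the policy and entropy computation. I would recall that each HMC step draws a fresh momentum $v^{l}\sim\cN(0,M)$, evolves $(a^{l},v^{l})$ by leapfrog, and applies a Metropolis accept/reject. Two observations then suffice: (i) since $v^{l}$ is resampled at every step, the position update $a^{l+1}=G(a^{l},v^{l})$ depends on the exogenous random variable $v^{l}$, so, exactly as for SGLD, it is a stochastic kernel rather than a deterministic map in $a$, and recovering $a^{l}$ from $a^{l+1}$ is impossible without knowing $v^{l}$, which $a^{l+1}$ does not determine; and (ii) the accept/reject step adds a further non-injective, stochastic component, since with positive probability a proposal is rejected and $a^{l+1}=a^{l}$, collapsing distinct proposals onto the previous state. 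Either observation shows the HMC update is not invertible w.r.t.\ $a$, so Theorem~\ref{thm:generic_entropy} cannot be invoked to obtain a closed-form entropy, in contrast with the SVGD case established in Proposition~\ref{prop:svgd_invertibility}.
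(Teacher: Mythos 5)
Your proof is correct and follows essentially the same route as the paper's: both arguments identify the exogenous randomness (the Gaussian noise $\xi^{l}$ in SGLD, the resampled momentum $v^{l}$ in HMC) as the obstruction to invertibility in $a$ alone, with the paper likewise noting that HMC is invertible only on the joint space $(a,v)$ and reducing that case to the SGLD one. Your additional observations (the full-support transition-kernel framing, the Metropolis accept/reject collapse) are nice refinements but do not change the underlying argument, which matches the paper's; the only element of the paper you omit is its secondary Jacobian-singularity argument via the implicit function theorem, which is not needed for the claim as stated.
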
 
\vspace{-1mm}
\proofsketch{SGLD is stochastic (noise term) and thus not injective. HMC is only invertible if conditioned on the velocity $v$. Detailed proofs are in Appendices~\ref{appendix: sgld_invertibility_proof}-\ref{appendix: hmc_invertibility_proof}.}\\
From the above theoretic analysis, we can see that SGLD update is not invertible and hence is not suitable as a sampler for \STAC. While the HMC update is invertible, its derived closed-form entropy involves calculating Hessian and hence computationally more expensive. Due to these considerations, we choose to use SVGD with an RBF kernel as the underlying sampler of \STAC.

\vspace{-1mm}\section{Results}\vspace{-1mm}
\label{sec:results}
We first evaluate the correctness of our proposed closed-form entropy formula. Then we present the results of different RL algorithms on multigoal and MuJoCo environments.




\begin{figure*}[t!]
\vspace{-2mm}
\centering
    \begin{subfigure}[t]{0.34\linewidth}
        \centering
        \includegraphics[width=1\linewidth]{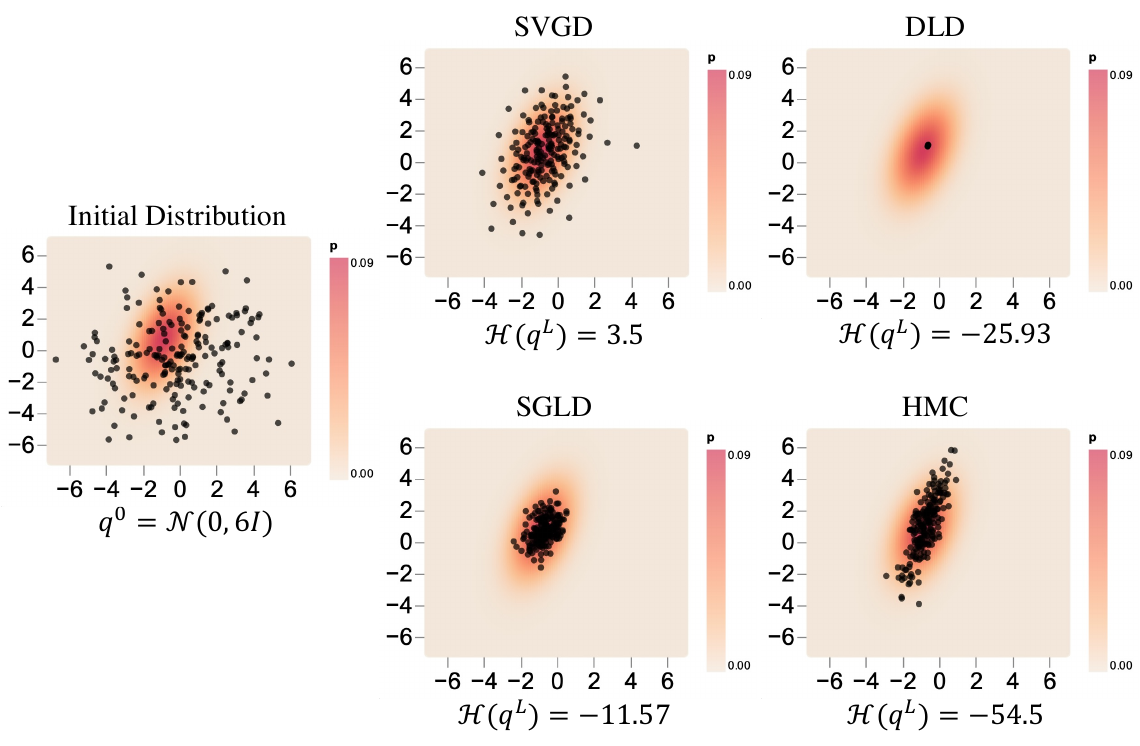}
        \vspace{-6mm}
        \caption{Recovering the GT entropy}
        \label{fig:test_entr}
    \end{subfigure} \hfill
        \begin{subfigure}[t]{0.37\linewidth}
        \centering         
        \includegraphics[width=1\linewidth]{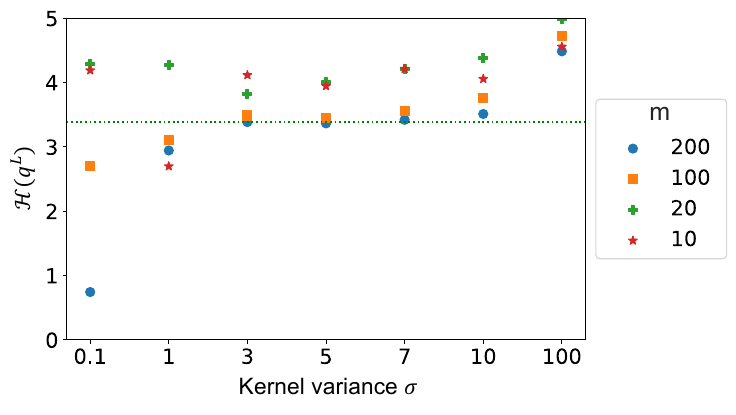}
        \vspace{-6mm}
        \caption{Effect of $\sigma$ on $\cH(q^L)$ 
        }\label{fig:kernel_variance}
    \end{subfigure}\hfill
    \begin{subfigure}[t]{0.28\linewidth}
        \centering
        \includegraphics[width=\linewidth]{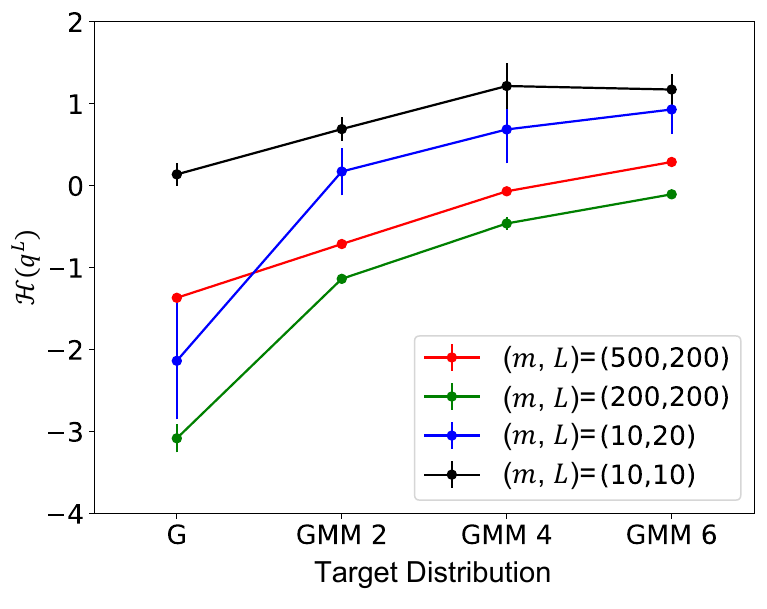}
        \vspace{-6mm}
        \caption{Effect of $m$ and $L$ on $\cH(q^L)$}
        \label{fig:steps_particles}
    \end{subfigure}
    \vspace{-3mm}
\caption{Entropy evaluation results.} 
\label{fig:table3}
\vspace{-8mm}
\end{figure*}

\vspace{-1mm}\subsection{Entropy Evaluation}
\vspace{-2mm}
This experiment tests the correctness of our entropy formula. We compare the estimated entropy for distributions (with known ground truth entropy or log-likelihoods) using different samplers and study the sensitivity of the formula to different samplers' parameters.
\noindent\textbf{(1) Recovering the ground truth entropy.} In \Figref{fig:test_entr}, we plot samples (black dots) obtained by SVGD, SGLD, DLD and HMC at convergence to a Gaussian with ground truth entropy $\cH(p)=3.41$, starting from the same initial distribution (leftmost sub-figure). We also report the entropy values computed via Eq.\eqref{eq:closed_form_entropy}. Unlike SGLD, DLD, and HMC, SVGD recovers the ground truth entropy. This empirically supports Proposition \ref{prop:sgld_invertibility} that SGLD, DLD, and HMC are not invertible. \noindent\textbf{(2) Effect of the kernel variance.} Figure~\ref{fig:kernel_variance} shows the effect of different SVGD kernel variances $\sigma$, where we use the same initial Gaussian from Figure~\ref{fig:test_entr}. We also visualize the particle distributions after $L$ SVGD steps for the different configurations in Figure \ref{fig:kernel_variance_vis} of Appendix~\ref{sec:entr_eval}. We can see that when the kernel variance is too small (\eg $\sigma\!=\!0.1$), the invertibility is violated, and thus the estimated entropy is wrong even at convergence. On the other extreme when the kernel variance is too large (\eg $\sigma\!=\!100$), \ie when the particles are too scattered initially, the particles do not converge to the target Gaussian due to noisy gradients in the first term of Eq.\eqref{eq:svgd_update}. The best configurations hence lie somewhere in between (\eg $\sigma\!\in\!\{3, 5, 7\}$). \noindent\textbf{(3) Effect of SVGD steps and particles.} Figure~\ref{fig:steps_particles} and Figure~\ref{fig_spp_iv} (Appendix.~\ref{sec:entr_eval}) show the behavior of our entropy formula under different configurations of the number of SVGD steps and particles, on two settings: (i) GMM $M$ with an increasing number of components $M$, and (ii) distributions with increasing ground truth entropy values, \ie Gaussians with increasing variances $\sigma$. Results show that our entropy consistently grows with an increasing $M$ (Figure~\ref{fig:steps_particles}) and increasing $\sigma$ (Figure~\ref{fig_spp_iv}), even when a small number of SVGD steps and particles is used (\eg $L=10, m=10$).

\vspace{-1mm}\subsection{Multi-goal Experiments} \vspace{-1mm}
\begin{wrapfigure}{r}{0.24\textwidth} 
    \vspace{-9mm}
    \centering
    \includegraphics[width=\linewidth]{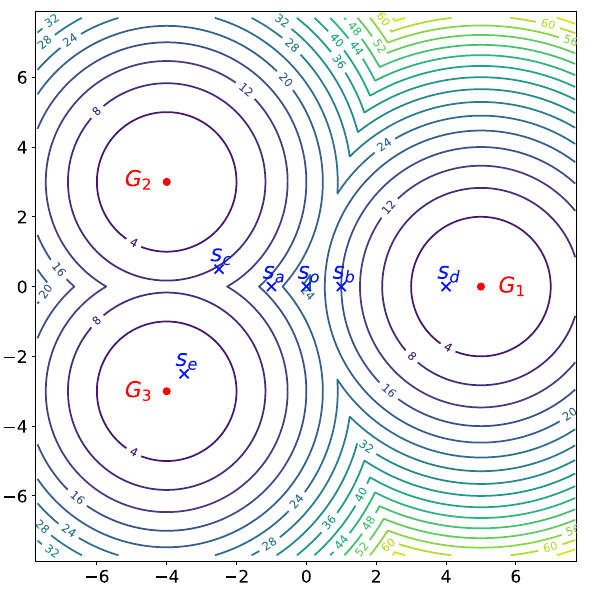} 
    \vspace{-5mm}
    \caption{Multigoal env.}
    \label{fig:multigoal}
    \vspace{-3.5mm}
\end{wrapfigure}
To check if \STAC\ learns a better solution to the max-entropy objective (Eq~\eqref{eq:max_entr_obj}), we design a new multi-goal environment as shown in Figure~\ref{fig:multigoal}. The agent is a 2D point mass at the origin trying to reach one of the goals (in red). Q-landscapes are depicted by level curves. Actions are bounded in $[-1, 1]$ along both axes. Critical states for the analysis are marked with blue crosses. It is built on the multi-goal environment in~\cite{haarnoja2017reinforcement} with modifications such that all the goals have (i) the same maximum expected future reward (positive) but (ii) different maximum expected future entropy. This is achieved by asymmetrically placing the goals (two goals on the left side and one on the right, leading to a higher expected future entropy on the left side) while assigning the same final rewards to all the goals. The problem setup and hyperparameters are detailed in Appendix~\ref{appendix:multi-goal}. \textbf{(1) Multi-modality.} Figure~\ref{fig:multigoal-trajectory} visualizes trajectories (blue lines) collected from 20 episodes of \STAC($\phi,\theta$), \STAC($\phi$), SAC, SQL and SAC-NF (SAC with a normalizing flow policy, \cite{mazoure2020leveraging}) agents (rows) at test time for increasing entropy weights $\alpha$ (columns). \STAC\ and SQL consistently cover all the modes for all $\alpha$ values, while this is only achieved by SAC and SAC-NF for large $\alpha$ values. Note that, in the case of SAC, this comes at the expense of accuracy. Although normalizing flows are 
expressive enough in theory, they are known to quickly collapse to local optima in practice \cite{kobyzev2020normalizing}. The dispersion term in \STAC\ encodes an inductive bias to mitigate this issue. \textbf{(2) Maximizing the expected future entropy.} We also see that with increasing $\alpha$, more \STAC\ and SAC-NF trajectories converge to the left goals ($G_2$/$G_3$). This shows both models learn to maximize the expected future entropy. This is not the case for SQL whose trajectory distribution remains uniform across the goals. SAC results do not show a consistent trend. This validates the hypothesis that the entropy term in SAC only helps exploration but does not lead to maximizing future entropy. The quantified distribution over reached goals is in \Figref{fig:multi-goal-distribution} of Appendix~\ref{appendix:multi-goal}. \textbf{(3) Robustness/adaptability.} To assess the robustness of the learned policies, we place an obstacle (red bar in \Figref{fig:multigoal-robustness}) on the path to $G_2$. We show the test time trajectories of $20$ episodes using \STAC, SAC, SQL and SAC-NF agents trained with different $\alpha$'s. We observe that, for \STAC\ and SAC-NF, with increasing $\alpha$, more trajectories reach the goal after hitting the obstacles. This is not the case for SAC, where many trajectories hit the obstacle without reaching the goal. SQL does not manage to escape the barrier even with higher $\alpha$. Additional results on the \textbf{(4) effect of parameterization of $q^{0}$}, and the \textbf{(5) entropy's effect on the learned Q-landscapes} are respectively reported in \Figref{fig:mean_std_curve_initial_actor} and \Figref{fig:entropy_states} of Appendix~\ref{appendix:multi-goal}.
\vspace{-3.5mm}
\begin{figure*}[h]
\centering
\begin{minipage}{.48\textwidth}
  \centering
  \includegraphics[width=0.85\linewidth]{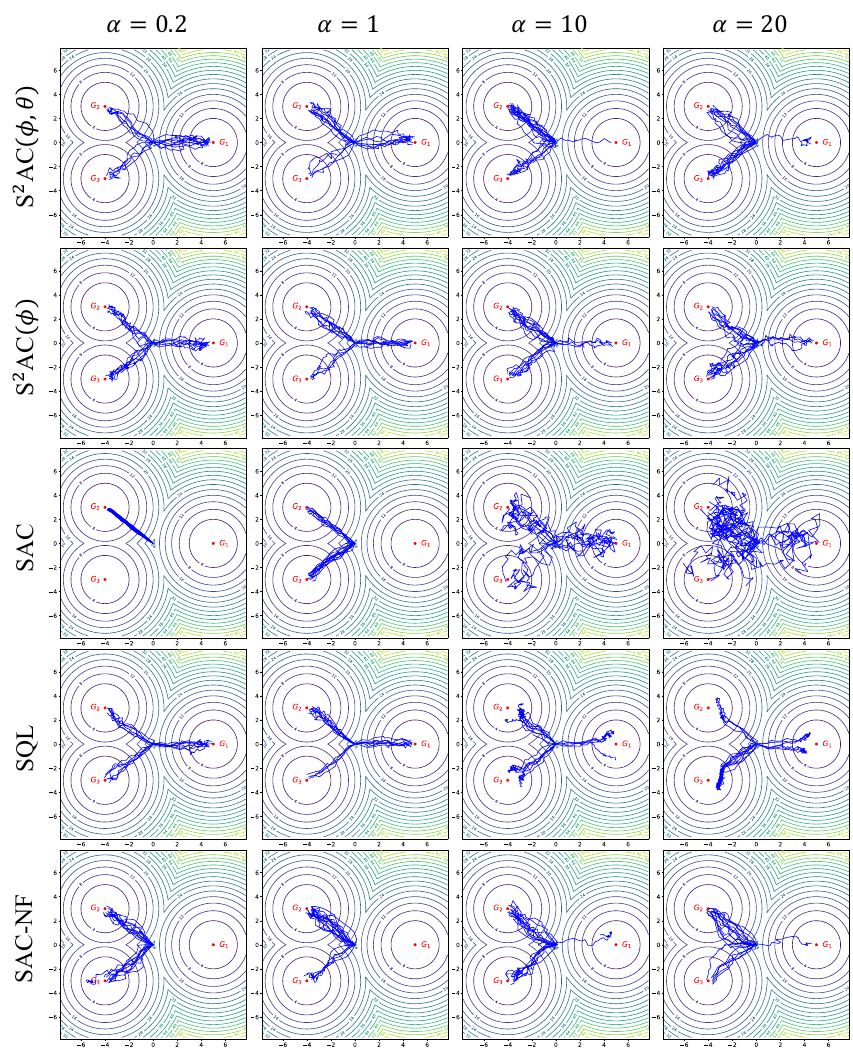}
  \vspace{-2mm}
  \caption{\STAC\ and SAC-NF learn to maximize the expected future entropy (biased towards $G_2/G_3$) while SAC and SQL do not. \STAC\ consistently recovers all modes, while SAC-NF with smaller $\alpha$'s does not, indicating its instability.}
  \vspace{-6mm}
  \label{fig:multigoal-trajectory}
\end{minipage}\hfill
\begin{minipage}{.48\textwidth}
  \centering
  \includegraphics[width=0.85\linewidth]{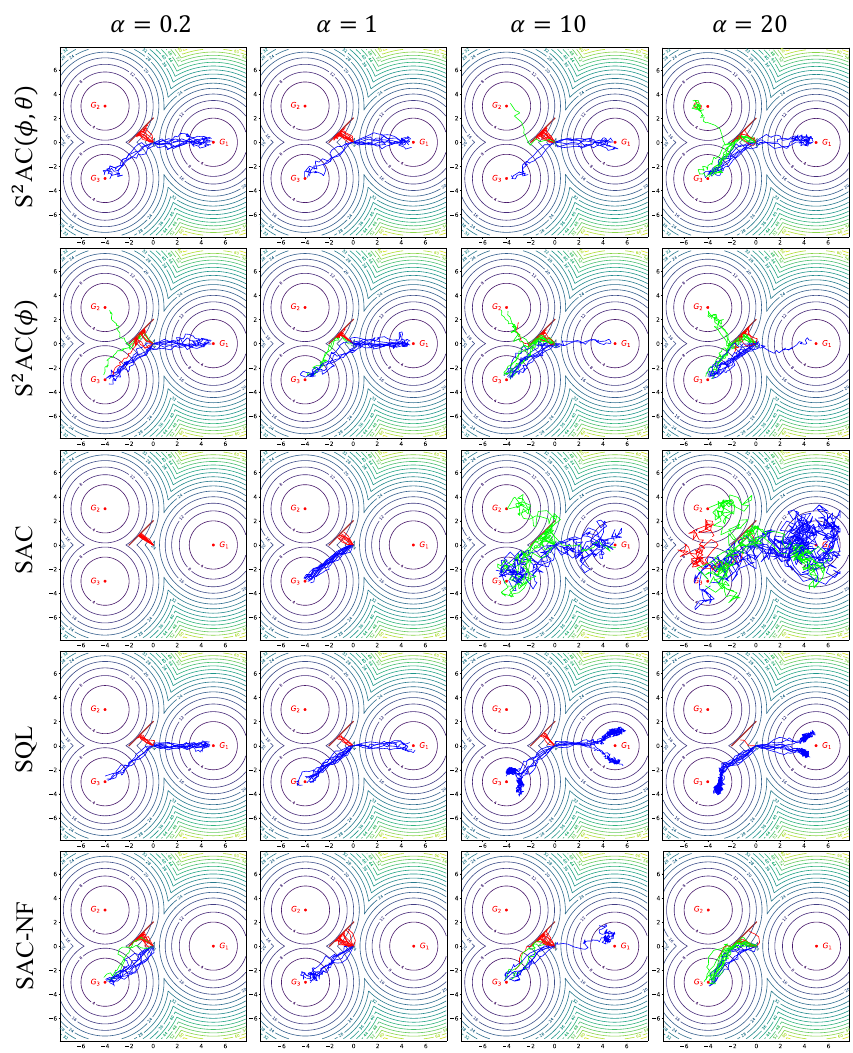}
  \vspace{-2mm}
  \caption{\STAC\ and SAC-NF are more robust to perturbations. Obstacle $O$ is placed diagonally at $[-1,1]$. Trajectories that did and did not reach the goal after hitting $O$ are in green and red, respectively.}
  \vspace{-5mm}
  \label{fig:multigoal-robustness}
\end{minipage}
\end{figure*}

\vspace{2mm}
\subsection{Mujoco Experiments}
\vspace{-1mm}
We evaluate \STAC\ on five environments from MuJoCo \citep{brockman2016openai}: Hopper-v2, Walker2d-v2, HalfCheetah-v2, Ant-v2, and Humanoid-v2. As baselines, we use (1) DDPG \citep{gu2017q}, (2) PPO \citep{schulman2015trust}, (3) SQL \citep{haarnoja2017reinforcement}, (4) SAC-NF \citep{mazoure2020leveraging}, and (5) SAC \citep{haarnoja2018soft}. Hyperparameters are in Appendix \ref{appendix:mujoco}.

\vspace{-1mm}
\textbf{(1) Performance and sample efficiency.} We train five different instances of each algorithm with different random seeds, with each performing $100$ evaluation rollouts every $1000$ environment steps. Performance results are in \Figref{fig:mujoco_3env}(a)-(e). The solid curves correspond to the mean returns over the five trials and the shaded region represents the minimum and maximum. \STAC($\phi,\theta$) is consistently better than SQL and SAC-NF across all the environments and has superior performance than SAC in four out of five environments. Results also show that the initial parameterization was key to ensuring the scalability (\STAC($\phi$) has poor performance compared to \STAC($\phi, \theta$)). \Figref{fig:mujoco_3env}(f)-(j) demonstrate the statistical significance of these gains by leveraging statistics from the rliable library \citep{agarwal2021deep} which we detail in Appendix ~\ref{appendix:mujoco}.
\vspace{-1mm}

\setlength{\tabcolsep}{2.3pt}
\begin{wraptable}{r}{7.0cm} 
\vspace{-3mm}
\begin{footnotesize}
    \begin{tabular}{c|cccc} 
      \toprule
      & Hopper  & Walker2d & HalfCheetah & Ant \\\midrule
      Action dim & 3 & 6 & 6 & 8 \\
      State dim & 11 & 17 & 17 & 111\\
      \midrule
      SAC  & 0.723 & 0.714 & 0.731 & 0.708\\
      SQL  & 0.839 & 0.828 & 0.815 & 0.836\\ 
      \STAC($\phi,\theta$) & 3.267 & 4.622 & 4.583 & 5.917 \\
      \STAC($\phi,\theta, \psi$) & 0.850 & 0.817 & 0.830 & 0.837\\
      \bottomrule
    \end{tabular}
\end{footnotesize}
\vspace{-2mm}
\caption{Action selection run-time on MuJoCo.}
\label{tab:run_time2}
\vspace{-2mm}
\end{wraptable}

\textbf{(2) Run-time.} We report the run-time of action selection of SAC, SQL, and \STAC\ algorithms in Table \ref{tab:run_time2}. \STAC($\phi,\theta$) run-time increases linearly with the action space. To improve the scalability, we train an amortized version that we deploy at test-time, following \citep{haarnoja2017reinforcement}. Specifically, we train a feed-forward deepnet $f_{\psi}(s,z)$ to mimic the SVGD dynamics during testing, where $z$ is a random vector that allows mapping the same state to different particles. Note that we cannot use $f_{\psi}(s,z)$ during training as we need to estimate the entropy in Eq~\eqref{eq:closed_form_entropy}, which depends on the unrolled SVGD dynamics (details in Appendix~\ref{appendix:mujoco}). The amortized version \STAC($\phi,\theta,\psi$) has a similar run-time to SAC and SQL with a slight tradeoff in performance (\Figref{fig:mujoco_3env}).

\begin{figure}
    \centering
    \includegraphics[width=13cm]{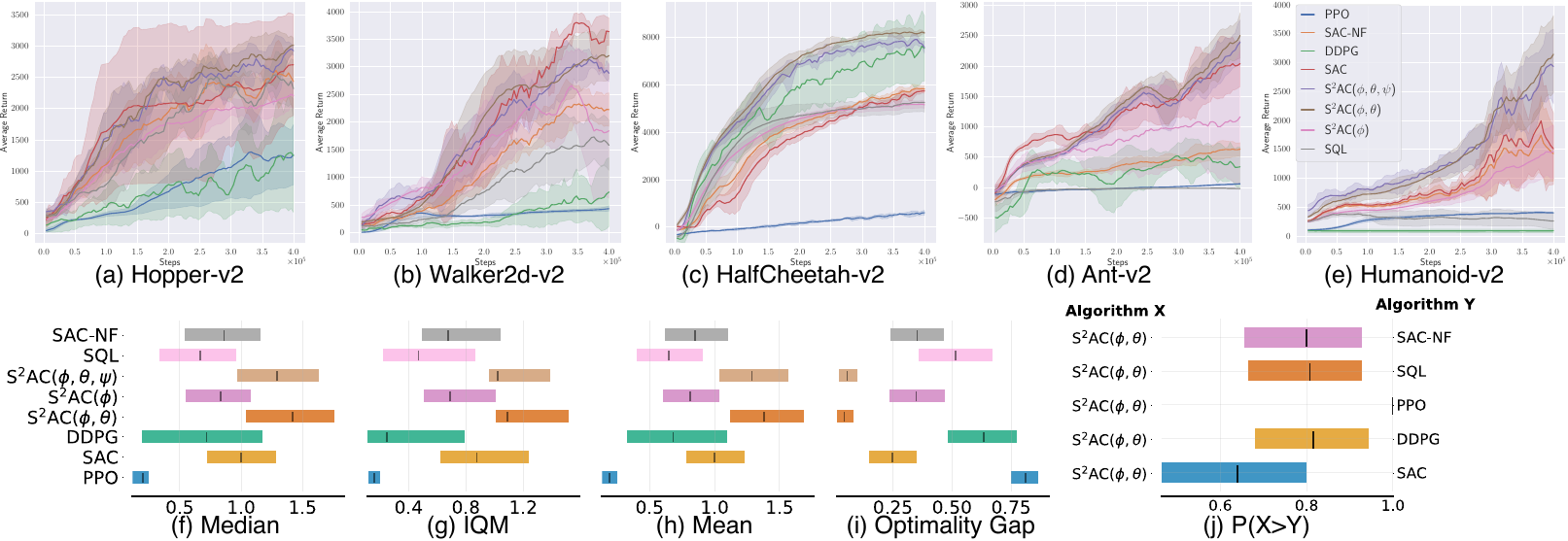}
    \vspace{-2mm}
    \caption{\textbf{(a)-(e)}: Performance curves on the MuJoCo benchmark (training). \STAC\ outperforms SQL and SAC-NF on all environments and SAC on 4 out of 5 environments. \textbf{(f)-(i)}: Comparison of Median, IQM, Mean, and Optimality Gap between \STAC\ and baseline algorithms. \textbf{(j)}: The probabilities of \STAC\ outperforming baseline algorithms.}
    \vspace{-5mm}
    \label{fig:mujoco_3env}
\end{figure}

\section{Related Work}
\label{sec:related_work}
\textbf{MaxEnt RL}~\citep{todorov2006linearly,ziebart2010modeling,rawlik2012stochastic} aims to learn a policy that gets high rewards while acting as randomly as possible. To achieve this, it maximizes the sum of expected future reward and expected future entropy. It is different from entropy regularization~\citep{schulman2015trust,o2016combining,schulman2017proximal} which maximizes entropy at the current time step. It is also different from multi-modal RL approaches \citep{tang2018boosting} which recover different modes with equal frequencies without considering their future entropy. MaxEnt RL has been broadly incorporated in various RL domains, including inverse RL~\citep{ziebart2008maximum,finn2016guided}, stochastic control~\citep{rawlik2012stochastic,toussaint2009robot}, guided policy search~\citep{levine2013guided}, and off-policy learning~\citep{haarnoja2018soft,haarnoja2018soft1}. MaxEnt RL is shown to maximize a lower bound of the robust RL objective~\citep{eysenbach2022maximum} and is hence less sensitive to perturbations in state and reward functions. From the variational inference lens, MaxEnt RL aims to find the policy distribution that minimizes the \textit{KL}-divergence to an EBM over Q-function. The desired family of variational distributions is (1) expressive enough to capture the intricacies of the Q-value landscape (\eg multimodality) and (2) has a tractable entropy estimate. These two requirements are hard to satisfy. SAC~\citep{haarnoja2018soft} uses a Gaussian policy. Despite having a tractable entropy, it fails to capture arbitrary Q-value landscapes. SAC-GMM~\citep{haarnoja2018acquiring} extends SAC by modeling the policy as a Gaussian Mixture Model, but it requires an impractical grid search over the number of components. Other extensions include IAPO \citep{marino2021iterative} which also models the policy as a uni-modal Gaussian but learns a collection of parameter estimates (mean, variance) through different initializations. While this yields multi-modality, it does not optimize a MaxEnt objective. SSPG \citep{cetin2022policy} and SAC-NF \citep{mazoure2020leveraging} respectively improve the policy expressivity by modeling the policy as a Markov chain with Gaussian transition probabilities and as a normalizing flow. Due to training instability, the reported multi-goal experiments in \citep{cetin2022policy} show that, though both models capture multimodality, they fail to maximize the expected future entropy in positive reward setups. SQL \citep{haarnoja2017reinforcement}, on the other hand, bypasses the explicit entropy computation altogether via a soft version of value iteration. It then trains an amortized SVGD \citep{wang2016learning} sampler from the EBM over the learned Q-values. However, estimating soft value functions requires approximating integrals via importance sampling which is known to have high variance and poor scalability. We propose a new family of variational distributions induced by a parameterized SVGD sampler from the EBM over Q-values. Our policy is expressive and captures multi-modal distributions while being characterized by a tractable entropy estimate. \\
\textbf{EBMs} \citep{lecun2006tutorial,wu2018sparse} are represented as Gibbs densities $p(x) = \exp{E(x)}/Z$, where $E(x) \in \mathbb{R}$ is an energy function describing inter-variable dependencies and $Z= \int \exp{E(x)}$ is the partition function. Despite their expressiveness, EBMs are not tractable as the partition function requires integrating over an exponential number of configurations. Markov Chain Monte Carlo (MCMC) methods \citep{van2018simple} (\eg HMC~\citep{hoffman2014no}, SGLD~\citep{welling2011bayesian}) are frequently used to approximate the partition function via sampling. There have been recent efforts to parameterize these samplers via deepnets \citep{levy2017generalizing,gong2018meta,feng2017learning} to improve scalability. Similarly to these methods, we propose a parameterized variant of SVGD \citep{liu2016stein} as an EBM sampler to enable scalability to high-dimensional action spaces. Beyond sampling, we derive a closed-form expression of the sampling distribution as an estimate of the EBM. This yields a tractable estimate of the entropy. This is opposed to previous methods for estimating EBM entropy which mostly rely on heuristic approximation, lower bounds \cite{dai2017calibrating,dai2019kernel}, or neural estimators of mutual information \citep{kumar2018engan}. The idea of approximating the entropy of EBMs via MCMC sampling by leveraging the change of variable formula was first proposed in \cite{dai2019exponential}. The authors apply the formula to HMC and LD, which, as we show previously, violate the invertibility assumption. To go around this, they augment the EBM family with the noise or velocity variable for LD and HMC respectively. But the derived log-likelihood of the sampling distribution turns out to be --counter-intuitively-- independent of the sampler's dynamics and equal to the initial distribution, which is then parameterized using a flow model (details in Appendix \ref{sec:entropy_ebms}). We show that SVGD is invertible, and hence we sample from the original EBM, so that our derived entropy is more intuitive as it depends on the SVGD dynamics.\\
\textbf{SVGD-augmented RL} \citep{liu2016stein} has been explored under other RL contexts. \citet{liu2017steinn} use SVGD to learn a distribution over policy parameters. While this leads to learning diverse policies, it is fundamentally different from our approach as we are interested in learning a single multi-modal policy with a closed-form entropy formula. \citet{castanet2023stein, chen2021variational} use SVGD to sample from multimodal distributions over goals/tasks. We go beyond sampling and use SVGD to derive a closed-form entropy formula of an expressive variational distribution.



\vspace{-2mm}
\section{Conclusion}
\vspace{-2mm}
We propose \STAC, an actor-critic algorithm that yields a more optimal solution to the MaxEnt RL objective than previously proposed approaches. \STAC\ achieves this by leveraging a new family of variational distributions characterized by SVGD dynamics. The proposed distribution has high expressivity, \ie it is flexible enough to capture multimodal policies in high dimensional spaces, and a tractable entropy estimate. Empirical results show that \STAC\ learns expressive and robust policies while having superior performance than other MaxEnt RL algorithms. For future work, we plan to study the application of the proposed variational distribution to other domains and develop benchmarks to evaluate the robustness of RL agents.

\section*{Acknowledgments}
Bo An is supported by the National Research Foundation Singapore and DSO National Laboratories under the AI Singapore Programme (AISGAward No: AISG2-GC-2023-009). Haipeng Chen is supported by William \& Mary FRC Faculty Research Grants.

\bibliography{rl}
\bibliographystyle{iclr2024_conference}

\newpage

\appendix
\onecolumn

\begin{center}
\textbf{\Large Supplementary Material}
\end{center}

\section{Summary}\label{appendix:summary}
In this paper, we propose a new variational distribution that we use to model the actor in the context of actor-critic MaxEntr RL algorithms. Our distribution is induced by an SVGD sampler with a parametrized initial distribution (isotropic Gaussian). It enables fitting multi-modal distribution (\eg EBM) and is characterized by a closed-form entropy estimate. Hence, it addresses the major bottleneck in classical MaxEntr RL algorithms. Our derivation is based on the unique invertibility property of the SVGD sampler, which is not satisfied for other popular samplers (\eg SGLD, HMC). The key to achieving scalability was to learn the initial Gaussian distribution such that it contours the high-density region of the target distribution, by limiting particles' updates to be always within few standard deviations of the mean of this Gaussian. This resulted in better and faster exploration of the relevant regions of the target distribution. Our proposed approach \STAC\ is summarized in Algorithm~\ref{alg:stac}.The rest of the supplementary is organized as follows:
\begin{itemize}
\item Appendix~\ref{appendix:related_work} provides additional related work on the entropy estimation.
\item Appendix~\ref{appendix:preliminary} introduces additional preliminaries on EBM samplers, the change of variable formula and the Jacobi formula. 
\item Appendix~\ref{appendix:MaxEnt_sol} provides the derivation of the optimal policy for the MaxEntr RL objective.
\item Appendix~\ref{appendix:actor_loss} provides the derivation of the actor objective. 
\item Appendices~\ref{app:proof_theorem_41}-\ref{appendix:closed_form_entropy} provide proofs for theorems related to (1) a generic closed-form expression of log-likelihood of inverible samplers, (2) discussion of samplers invertibility and (3) closed-form likelihood derivation for SVGD. 
\item  Appendices~\ref{sec:entr_eval}-\ref{appendix:mujoco} provide additional results for the (1) entropy evaluation, (2) multigoal environment, and (3) MuJoCo environments.
\end{itemize}

\begin{algorithm}[H]
\caption{Stein Soft Actor Critic (\STAC)}
\begin{algorithmic}[1]
\STATE Initialize parameters $\phi$, $\theta$, hyperparameter $\alpha$, and replay buffer $\mathcal{D} \leftarrow \emptyset$ 
\FOR{each iteration}
\FOR{each environment step $t$}
\STATE Sample action particles $\{a\}$ from $\pi_{\theta}(\cdot|s_t)$
\STATE Select $a_t \in \{a\}$ using  exploration strategy
\STATE Sample next state $s_{t+1} \sim p(s_{t+1}|s_{t},a_{t})$
\STATE Update replay buffer $\mathcal{D} \leftarrow \mathcal{D} \cup (s_t,a_t,r_t,s_{t+1})$
\ENDFOR
\FOR{each gradient step}
\STATE \textbf{Critic update:}
\STATE \hspace{4mm} Sample particles $\{a\}$ from an EMB sampler $\pi_{\theta}(\cdot|s_{t+1})$
\STATE \hspace{4mm} Compute entropy $\mathcal{H}(\pi_{\theta}(\cdot|s_{t+1}))$ using Eq.\eqref{eq:closed_form_entropy} 
\STATE \hspace{4mm} Update $\phi$ using Eq.\eqref{eq:critic_update}
\STATE \textbf{Actor update:}
\STATE \hspace{4mm} Update $\theta$ using Eq.\eqref{eq:actor_loss}
\ENDFOR
\ENDFOR
\end{algorithmic}
\label{alg:stac}
\end{algorithm}


\newpage
\section{Additional Related-work}
\label{appendix:related_work}

\subsection{Entropy}\label{appendix:additional_related_work}
The differential entropy \cite{cover1999elements,shannon2001mathematical} of a $p$-dimensional random variable $X$ with a probability density function $p(x)$ is defined by: $H(p) = - \int p(x) \ln{p(x)} dx$. The differential entropy plays a central role in information and communication theory, statistics \cite{1448714}, signal processing \cite{comon, learned2003ica}, machine learning and pattern recognition \cite{mannor2005cross, Rubinstein, hino2010conditional, liu2022combating, wulfmeier2015maximum}. For example, Max-Entropy RL \cite{wulfmeier2015maximum, haarnoja2017reinforcement, haarnoja2018soft} methods augment the expected reward objective with an entropy maximization term which results in learning multi-modal policies and more robustness. Recently, Liu \etal \cite{liu2022combating} propose maximizing the entropy of the discriminator distribution to combat mode collapse. In statistical mechanics entropy appears as the negative of the rate function to quantify the fluctuations around thermodynamic equilibrium \cite{roldan2021quantifying}. Estimating the differential entropy for expressive distributions is a challenging problem as it requires computing a closed-form expression of the probability density function. Several non-parametric approaches \cite{beirlant1997nonparametric, GYORFI1987425, paninski2003estimation, NIPS2008_ccb09896} based on approximating the entropy using samples $\cD=\{x_i\}_{i=1}^{|\cD|}$ from $p(x)$, have been proposed in the literature. These methods can be classified into (1) plug-in estimates \cite{1055550, ivanov1981properties, Joe} which approximate $p(x)$ via a kernel density estimate, (2) samples spacing \cite{BEIRLANT1985300,Noel,dudewicz1981entropy, HALL1986201} and (3) nearest-neighbor distances based estimates \cite{bernhofen1996ranking,Peter,kozachenko1987sample} which express the entropy in terms of pairwise distances between the samples (larger distances imply higher entropy). Next, we review the work on entropy estimation of Energy-Based-Models (EBMs).

\subsection{Entropy of EBMs}\label{sec:entropy_ebms}
In this work, we are interested in computing entropy estimates for the class of EBMs \cite{lecun2006tutorial} represented as Gibbs densities $p(x) = \frac{\exp{E(x)}}{Z}$, where $E(x) \in \mathbb{R}$ is an energy function describing inter-variable dependencies and $Z= \int \exp{E(x)}$ is an intractable partition function. EBMs provide a unified framework for many probabilistic and non-probabilistic approaches, particularly for learning and inference in structured models and are widely used in computer science (\eg semantic segmentation, colorization, image generation, inverse optimal control, collaborative filtering) \cite{salakhutdinov2007restricted,messaoud2018structural,zhao2021learning,gao2020learning,xie2020generative,Zheng_2021_CVPR,carleo2017solving,messaoud2020can,xie2021learning,pang2021trajectory,messaoud2021toward,xie2016theory,xie2018cooperative,xie2021learning1,xie2022tale} and physics \cite{carleo2017solving,gao2017efficient,torlai2018neural,melko2019restricted} (\eg to model the wavefunctions of quantum systems). To estimate the entropy of EBMs, previous methods mostly rely on heuristic approximation, lower bounds \cite{dai2017calibrating,dai2019kernel}, or neural estimators of mutual information to approximate the entropy \cite{kumar2018engan}. The idea of approximating the entropy of EBMs via the one from an MCMC sampler by leveraging the change of variable formula was first proposed by \cite{dai2019exponential}. Specifically, the authors apply the formula to HMC and LD which, as we show in Appendix.~\ref{appendix:invertibility_proof}, violate the invertibility assumption. To go around this, the authors propose augmenting the EBM family with the noise or velocity variable for, respectively, LD and HMC, \ie sampling from $p(x)$ is replaced with sampling from $p(x,v)$ or $p(x,\xi)$. The authors assume that the sampler update rule is invertible with respect to the augmented samples $(x,v)$ and $(x,\xi)$. However, computing the determinant of the update rule with respect to the augmented variable is always equal to 1 in this case. Hence, the resulting log-likelihood of the sampling distribution is, counter-intuitively, independent of the sampler's dynamics and equal to the initial distribution, \ie $\log q^{L}(a^L) = \log q^{0}(a^0)$, which the author model using a flow model.  
Differently, we show that SVGD is invertible, our entropy depends on the dynamics of SVGD, we still sample from the original EBM $p(x)$ and our initial distribution is a simple Gaussian. Similarly to the non-parametrized entropy estimates described above, our formula leverages pairwise distances between the neighboring samples. Differently, our formula is also based on the curvature of the energy function $E(x)$ (measured by a weighted average of neighboring particle gradients $\nabla_{x}E(x)$). Hence maximizing our derived entropy results in the intuitive effect of learning smoother energy landscapes.

\newpage
\section{Additional Preliminaries}\label{appendix:preliminary}
In the following, we review (1) additional samplers for EBMs, (2) the change of variable formula and (3) the corollary of the Jacobi's formula.
\subsection{Additional Samplers for EBMs}\label{appendix:additional_samplers}
\textbf{SGLD}~\citep{welling2011bayesian} is a popular Markov chain Monte Carlo (MCMC) method for sampling from a distribution. It initializes a sample $a^0$ from a random distribution, and then in each step $l+1$ it adds the gradient of the current proposal distribution $p(a)$ to the previous sample $a^l$, together with a Brownian motion $\xi\!\sim\! N(0,I)$. 
We denote the step size as $\epsilon$. The iterative update for SGLD is:
\begin{equation}\label{eq:sgld_update}
a^{ l+ 1} = a^{l} +  \epsilon\nabla_{a^{l}} \, \log p( a^{l} ) + \sqrt{2\epsilon}\xi.
\end{equation}

\textbf{DLD} are equivalent to SGLD without the noise term, \ie
\begin{equation}\label{eq:dld_update}
a^{ l+ 1} = a^{l} +  \epsilon\nabla_{a^{l}} \, \log p( a^{l} ).
\end{equation}

\textbf{HMC} is another popular variant of MCMC samplers. The most commonly used discretized Hamilton’s equations are the leapfrog method~\citep{neal2011mcmc}. The three (half) steps of leapfrog updates in HMC are:
\begin{equation}\label{eq:hmc_update}
\begin{aligned}
v^{l+1/2} &= v^l + (\epsilon/2) \nabla_{a} \log p(a^{l+1}) \\
a^{l+1} &= a^l + \epsilon v^{l+1/2} \\
v^{l+1} &= a^{l+1} + (\epsilon/2) \nabla_{a} \log p(a^{l+1})
\end{aligned}
\end{equation}
Here $v^{l}$ is interpreted the velocity at iteration $l$ (assuming unit mass) and $a^l$ is the ``location'' of a sample in a distribution. 

\subsection{Change of Variable Formula}\label{appendix:change_of_variable}
We first introduce the concept of an invertibile function.
\begin{definition}[Invertibile transformation]\label{def:invertible}
Transformation $F: Z\rightarrow X$ is invertible iff $F(\cdot)$ is bijective, \ie simultaneously injective and surjective:
(i) $F(\cdot)$ is injective iff for any $z,z'\in Z$, $F(z)=F(z')\Rightarrow z=z'$;
(ii) $F(\cdot)$ is surjective iff  for every $x\in X$, there exists some $z\in Z$ such that $F(z)=x$.
\end{definition}

According to change of variable formula, the following holds when $F: Z \rightarrow X $ is an invertible function:
\begin{equation*}
\small
p_X(x)=p_Z(z)\Big| {\det\!\frac {\partial F^{-1}(x)} {\partial x}\Big| } = p_Z(z) \Big| \det\!\frac {\partial F(z)} {\partial z}  \Big|^{-1}
\end{equation*}

\subsection{A Corollary of Jacobi's Formula}\label{appendix:jacobi}
An important corollary of Jacobi's Formula \citep{magnus2019matrix} states that, given an invertible matrix $A$, the following equality holds:
\begin{equation*}\small
\log(\det{A}) =\Tr\left(\log A \right) = \Tr \Big( \!\sum\nolimits_{k = 1}^{\infty}\!\!{(- 1)}^{k + 1}\frac{{(A\  - I)}^{k}}{k}\! \Big). 
\end{equation*}
The second equation is obtained by taking the power series of $\log A$.
Hence, under the assumption $\|A-I\|_{\infty}\ll 1$, we obtain:
\begin{equation*}
\log (\det A) \approx \text{tr}(A-I).
\end{equation*}

\newpage
\section{Derivation of the MaxEnt RL optimal policy}
\label{appendix:MaxEnt_sol}
In this section, we prove that the solution $\pi^{*}$ of the MaxEnt RL objective 
\begin{equation}
\max_{\pi}  J(\pi) \equiv \sum_{t} \mathbb{E}_{(s_t,a_t)\sim \rho_{\pi} } \Big[  \gamma^t  \Big( r(a_t,s_t) - \alpha \log \pi(\cdot|s_t)  \Big)  \Big]
\end{equation}
is $\pi^{*} = \frac{  \exp( \frac{1}{\alpha} Q(s,a))}{Z}$. Here, $Q(s,a)$ is the soft Q-function defined as
\begin{eqnarray}
Q(s,a)& = &\mathbb{E}_{(s_t,a_t)\sim \rho_{\pi} } \Big[ \sum_{t} \gamma^t  \Big( r(a_t,s_t) - \alpha \log p(\pi(\cdot|s_t))  \Big)  | s_0=s, a_0=a  \Big] \nonumber \\
& = &  r(a,s)+ \alpha \cH(\pi(\cdot|s)) + \mathbb{E}_{\pi(a'|s)\rho_{\pi}(s')} \Big[Q(s',a')\Big].
\end{eqnarray}

Consequently, we deduce that $\pi^{*}$ is also the solution of the expected KL divergence:
\begin{equation}\label{eq:ebm_}
\pi^{*} = \argmin\nolimits_{\pi} \sum_t \mathbb{E}_{s_t \sim \rho_\pi}\Big[ D_{K\!L}\big(\pi(\cdot|s_t) \| \exp\!{(  Q(s_t,\cdot)/\alpha )}/Z \big)\Big].
\end{equation}

\begin{proof}
We express the MaxEnt loss as a function of $Q(s,a)$, \ie $J(\pi) = \mathbb{E}_{(s,a)\sim \rho_{\pi} } \Big[ Q(s,a) \Big]$. To find $\pi^{*} = \arg\max_{\pi} J(\pi)$ under the constraint $\int_a \pi(a|s)da=1$, we evaluate the Lagrangian (with $\lambda\in\mathbb{R}$ being the Lagrange multiplier):
\begin{equation}
\cL(\pi,\lambda)=  \mathbb{E}_{(s,a)\sim \rho_{\pi} } \Big[ Q(s,a) \Big] + \lambda \Big(\int_{a} \pi(a|s) da - 1 \Big),
\end{equation}
and compute $\frac{\partial \cL(\pi,\lambda)  }{\partial \pi(a|s)}$:
\begin{eqnarray*}
\frac{\partial \cL(\pi,\lambda)  }{\partial \pi(a|s)}\!&\!=\!&\!\frac{\partial}{\partial \pi(a|s) } \Big( \int_{s} \int_{a} \pi(a|s) \rho_{\pi}(s) Q(s,a) da \, ds + \lambda \Big(\int_{a} \pi(a|s) da - 1 \Big)  \Big)\\
\!&\!=\!&\!\frac{\partial}{\partial \pi(a|s) } \Big(\pi(a|s) \rho_{\pi}(s) \Big( r(a,s)- \alpha \log \pi(a|s) + \mathbb{E}_{\pi(a'|s)\rho_{\pi}(s')} \Big[Q(s',a')\Big]\Big) \Big)+ \lambda  \\
\!&\!=\!&\!  \rho_{\pi}(s) \Big( r(a,s)+ \mathbb{E}_{\pi(a'|s)\rho_{\pi}(s')} [Q(s',a')]\Big) - \alpha \rho_{\pi}(s) \frac{\partial  } {\partial \pi(a|s) } \Big( \pi(a|s) \log \pi(a|s) \Big)+ \lambda \\
\!&\!=\!&\!  \rho_{\pi}(s) \Big( r(a,s)+ \mathbb{E}_{\pi(a'|s)\rho_{\pi}(s')} [Q(s',a')]\Big) - \alpha \rho_{\pi}(s) \Big( \log \pi(a|s)  + 1 \Big)+ \lambda. \\
\end{eqnarray*}
Setting $\frac{\partial \cL(\pi,\lambda)  }{\partial \pi(a|s)}$ to 0: 
\begin{eqnarray}
\frac{\partial \cL(\pi,\lambda)  }{\partial \pi(a|s)} =0 &\Longleftrightarrow & \Big( r(a,s)+ \mathbb{E}_{\pi(a'|s)\rho_{\pi}(s')} [Q^{\pi}(s',a')]\Big) - \alpha + \frac{\lambda}{\rho_{\pi}(s)} =\alpha \log \pi(a|s) \nonumber \\
& \Longleftrightarrow &\frac{1}{\alpha} \Big( r(a,s)+ \mathbb{E}_{\pi(a'|s)\rho_{\pi}(s')} [Q(s',a')]\Big) - 1 + \frac{\lambda}{\alpha\rho_{\pi}(s)} = \log \pi(a|s) \nonumber\\
& \Longleftrightarrow & \pi(a|s) = \frac{ \exp \Big(\frac{1}{\alpha} \Big( r(a,s)+ \mathbb{E}_{\pi(a'|s)\rho_{\pi}(s')} [Q(s',a')]\Big) \Big) }{   \exp \Big( 1- \frac{\lambda}{\alpha\rho_{\pi}(s)} \Big) } \nonumber\\
& \Longleftrightarrow & \pi(a|s) = \frac{ \exp \Big(\frac{1}{\alpha} \Big( r(a,s) + \cH(\pi(\cdot|s))+ \mathbb{E}_{\pi(a'|s)\rho_{\pi}(s')} [Q(s',a')]\Big) \Big) }{   \exp \Big( 1- \frac{\lambda}{\alpha\rho_{\pi}(s)} \Big) } \nonumber\\
& \Longleftrightarrow & \pi(a|s) = \frac{ \exp \Big(\frac{1}{\alpha} Q(s,a) \Big) }{   \exp \Big(  \frac{\cH(\pi(\cdot|s))}{\alpha} + 1- \frac{\lambda}{\alpha\rho_{\pi}(s)} \Big) } 
\end{eqnarray}

We choose $\lambda$ such that $\int_{a} \pi(a|s) da=1 $, \ie

\begin{equation}
\int_{a}\!\frac{\exp \Big(\frac{1}{\alpha} Q(s,a) \Big) }{\exp \Big( \frac{\cH(\pi(\cdot|s))}{\alpha}\!+\!1\!-\!\frac{\lambda}{\alpha\rho_{\pi}(s)} \Big) } da\!=\!1\!\Longleftrightarrow\!\lambda\!=\!-\alpha \rho_{\pi}(s) \Big(\log\!\int_{a}\exp\Big(\frac{1}{\alpha} Q(s,a)\Big)da\!-\!\frac{\cH(\pi(\cdot|s))}{\alpha}\!-\!1\Big).
\end{equation}
Hence, $\pi^{*}(a|s) = \exp( \frac{1}{\alpha} Q(s,a))/ \int_{a}\exp(\frac{1}{\alpha} Q(s,a))$. A similar proof follows for any state and action pairs. Trivially, $\pi^{*}$ is also the global minimum of Eq.\eqref{eq:ebm_}.
\end{proof}

\section{Derivation of the actor objective (Eq.\eqref{eq:actor_loss})}
\label{appendix:actor_loss}
In the following, we prove that the objective

\begin{equation*}
     \arg\min_{\theta} \mathbb{E}_{s_t \sim \cD} \Big[ D_{KL} \Big(\pi_{\theta}(\cdot|s_t) \Big|\Big| \exp \Big( \frac{1}{\alpha} Q_{\phi}(s_t,\cdot)\Big)/Z(\phi) \Big) \Big] \\
\end{equation*}
is equivalent to
\begin{equation*}
     \arg\max_{\theta} \mathbb{E}_{s_t \sim \cD, a_t\sim\pi_{\theta}(a_t|s_t)} \Big[ Q_{\phi}(s_t, a_t)\Big] + \mathbb{E}_{s_t}\Big[\alpha \cH(\pi_{\theta}(a_t|s_t))  \Big], 
\end{equation*}
with $\cD$ being a replay buffer.

\begin{proof}
    \begin{eqnarray*}
        \theta^{*} \!&\!=\!&\! \arg\min_{\theta} \mathbb{E}_{s_t \sim \cD} \Big[ D_{KL} \Big(\pi_{\theta}(\cdot|s_t) \Big|\Big| \exp \Big( \frac{1}{\alpha} Q_{\phi}(s_t,\cdot)\Big)/Z(\phi) \Big) \Big] \\
        \!&\!=\!&\! \arg\min_{\theta} \mathbb{E}_{s_t \sim \cD, a_t\sim\pi_{\theta}(a_t|s_t)} \Big[ \log(\pi_{\theta}(a_t|s_t)) - \Big( \frac{1}{\alpha} Q_{\phi}(s_t, a_t) -\log Z(\phi)  \Big)  \Big] \\
        \!&\!=\!&\! \arg\min_{\theta} \mathbb{E}_{s_t \sim \cD, a_t\sim\pi_{\theta}(a_t|s_t)} \Big[ \log(\pi_{\theta}(a_t|s_t)) -  \frac{1}{\alpha} Q_{\phi}(s_t, a_t)   \Big] \\
        \!&\!=\!&\! \arg\max_{\theta} \mathbb{E}_{s_t \sim \cD, a_t\sim\pi_{\theta}(a_t|s_t)} \Big[ - \log(\pi_{\theta}(a_t|s_t)) +  \frac{1}{\alpha} Q_{\phi}(s_t, a_t)   \Big] \\
        \!&\!=\!&\! \arg\max_{\theta} \mathbb{E}_{s_t \sim \cD, a_t\sim\pi_{\theta}(a_t|s_t)} \Big[  \frac{1}{\alpha} Q_{\phi}(s_t, a_t) +\cH(\pi_{\theta}(a_t|s_t))  \Big] \\
        \!&\!=\!&\! \arg\max_{\theta} \mathbb{E}_{s_t \sim \cD, a_t\sim\pi_{\theta}(a_t|s_t)} \Big[ Q_{\phi}(s_t, a_t)\Big] + \mathbb{E}_{s_t \sim \cD}\Big[\alpha \cH(\pi_{\theta}(a_t|s_t))  \Big] \\
    \end{eqnarray*}
\end{proof}

\newpage
\section{Proof of Theorem~\ref{thm:generic_entropy}}
\label{app:proof_theorem_41}

\begin{theorem*}
Let $F:\mathbb{R}^{n} \rightarrow \mathbb{R}^{n}$ be an invertible transformation of the form $F(a) = a + \epsilon h(a) $. We denote by $q^L(a^L)$ the distribution obtained from repeatedly ($L$ times) applying $F$ to a set of action samples (called ``particles'') $\{a^{0}\}$ from an initial distribution $q^{0}(a^0)$, \ie $a^{L} = F \circ F \circ \cdots \circ F(a^{0})$. Under the condition $ \epsilon ||\nabla_{a_i} h(a_i)||_{\infty} \ll 1 $, the closed-form expression of $\log q^{L}(a^L)$ is:
\begin{equation}
\log q^{L}(a^L) = \log q^0(a^0) - \epsilon \sum_{l = 0}^{L-1}  \, \Tr(\nabla_{a^{l}} h(a^{l})).
\end{equation}
\end{theorem*}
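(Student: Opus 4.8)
The plan is to combine the change-of-variable formula, applied once per iteration and telescoped over the $L$ steps, with a first-order expansion of $\log\det$ near the identity, and then to control the higher-order terms that are discarded. The invertibility of $F$ (guaranteed under the stated hypothesis, and established for the SVGD update in Proposition~\ref{prop:svgd_invertibility}) is precisely what licenses the use of the change-of-variable formula at each step.

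First I would establish the exact per-step relation. Since $F(a) = a + \epsilon h(a)$, its Jacobian is $\nabla_a F = I + \epsilon\,\nabla_a h$, so the change-of-variable formula for densities gives, at each step $l$,
\[
\log q^{l+1}(a^{l+1}) = \log q^{l}(a^{l}) - \log\bigl|\det(I + \epsilon\,\nabla_{a^{l}} h(a^{l}))\bigr|.
\]
Summing over $l = 0,\dots,L-1$ telescopes the left-hand side into $\log q^{L}(a^{L}) - \log q^{0}(a^{0})$, yielding the exact identity
\[
\log q^{L}(a^{L}) = \log q^{0}(a^{0}) - \sum\nolimits_{l=0}^{L-1} \log\bigl|\det(I + \epsilon\,\nabla_{a^{l}} h(a^{l}))\bigr|.
\]

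Next I would linearize each determinant. Writing $A_l = \nabla_{a^{l}} h(a^{l})$, the hypothesis $\epsilon\,\|A_l\|_{\infty} \ll 1$ makes $I + \epsilon A_l$ close to the identity, hence of positive determinant, so the absolute value may be dropped. I would then invoke the identity $\log\det M = \Tr\log M$ (valid for $M$ near $I$), together with the matrix-logarithm series $\log(I + \epsilon A_l) = \epsilon A_l - \tfrac{1}{2}\epsilon^2 A_l^2 + \cdots$; taking traces gives
\[
\log\det(I + \epsilon A_l) = \epsilon\,\Tr(A_l) - \tfrac{1}{2}\epsilon^2\,\Tr(A_l^2) + \cO(\epsilon^3).
\]
This is exactly the corollary of Jacobi's formula cited in the excerpt. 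Retaining only the leading term in each of the $L$ summands produces the claimed expression $-\epsilon\sum_l \Tr(A_l)$.

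The main obstacle, and the only genuinely quantitative step, is controlling the accumulated error. I would bound the second-order contribution of each term by $|\Tr(A_l^2)| \le d\,\|A_l\|_{\infty}^2$, the dimensional factor $d$ arising from summing over the $d$ eigenvalues in the trace, so each discarded remainder is $\cO(\epsilon^2 d)$; summing the $L$ contributions yields a total error of $\cO(\epsilon^2 d L)$, matching the stated order in Theorem~\ref{thm:generic_entropy}. The subtle point to watch is that the expansion must hold \emph{uniformly} along the whole trajectory $\{a^{l}\}_{l=0}^{L}$, which is exactly what the assumption $\epsilon\,\|\nabla_{a^{l}} h(a^{l})\|_{\infty} \ll 1$ for every $l$ secures; without this uniformity the per-step remainders need not be simultaneously small and the telescoped error would not collapse to a clean $\cO(\epsilon^2 d L)$ bound.
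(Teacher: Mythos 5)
Your proposal is correct and follows essentially the same route as the paper's own proof: a per-step application of the change-of-variable formula telescoped (the paper phrases this as induction on the product of Jacobian determinants) into the exact identity $\log q^{L}(a^L) = \log q^0(a^0) - \sum_{l=0}^{L-1}\log\bigl|\det(I+\epsilon\nabla_{a^l}h(a^l))\bigr|$, followed by the first-order expansion of $\log\det$ near the identity with accumulated error $\cO(\epsilon^2 d L)$, which is precisely the corollary of Jacobi's formula the paper invokes. Your explicit eigenvalue-based bound $|\Tr(A_l^2)|\le d\,\|A_l\|_{\infty}^2$ and the remark on uniformity of the hypothesis along the trajectory are slightly more detailed than the paper's citation-level treatment, but they do not constitute a different argument.
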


\begin{proof}
Based on the change of variable formula (Appendix~\ref{appendix:change_of_variable}), when for every iteration $l \in[1,L]$, the transformation $a^{l}=L(a^{l-1})$ (of the action sampler in our paper) is invertible, we have:
\begin{align*}
q^{l}( a^{l} ) = q^{l-1}( a^{l-1} )\left|\det\frac{da^{l}}{da^{l-1}} \right|^{-1} ,\forall l \in[1,L].
\end{align*}

By induction, we derive the probability distribution of sample $a^{L}$:   
\begin{align*}
q^{L}( a^{L} ) = q^0( a^0 )\prod_{l = 1}^{L}\left| \det\frac{da^{l}}{da^{l-1}} \right|^{-1} = q^0( a^0 )\prod_{l = 0}^{L-1}\left| \det \big(I+\epsilon\nabla_{a^{l}}h(a^{l})\big) \right|^{-1}
\end{align*}

By taking the $\log$ for both sides, we obtain:
\begin{align*}
\log q^{L}(a^L) = \log q^0(a^0) - \sum_{l = 0}^{L-1} \log \left| \det\big( I+ \epsilon\nabla_{a^{l }}h(a^{l }) \big) \right|.
\end{align*}

Let $A=I+ \epsilon\nabla_{a^{l }}h(a^{l })$, under the assumption $ \epsilon ||\nabla_{a_i} h(a_i)||_{\infty} \ll 1 $, \ie $||A-I||_{\infty} \ll 1$,  we apply the corollary of Jacobi's formula (Appendix~\ref{appendix:jacobi}) and get
\begin{align*}
\log q^{L}(a^L) &\approx \log q^0(a^0) - \sum_{l = 0}^{L-1} \, \Tr\big( (I+ \epsilon\nabla_{a^{l }}h(a^{l }))-I) \big) + \cO(\epsilon^2 d L)\\
&\approx \log q^0(a^0) - \epsilon \sum_{l = 0}^{L-1}  \, \Tr\left(\nabla_{a^{l}} h(a^{l})\right) + \cO(\epsilon^2 d L).
\end{align*}
Here, $d$ is the action space dimension.
\end{proof}

\newpage
\section{Samplers Invertibility Proofs}
\label{appendix:invertibility_proof}

We start by state the implicit function theorem which we will be using in the following proofs.
\begin{theorem}[\textbf{Implicit function theorem}]
Let $f: \mathbb{R}^{n} \rightarrow \mathbb{R}^{n}$ be continuously
differentiable on some open set containing $a$, and suppose $\det{(Jf(a))} = \det{(\nabla_{a}f(a))} \neq 0$. Then, there is some open set $V$ containing $a$ and an open $W$ containing $f(a)$ such
that $f: V \rightarrow W$ has a continuous inverse $f^{-1}: W \rightarrow V$ which is differentiable $\forall y \in W$.

\label{thm:inverse_function} 
\end{theorem}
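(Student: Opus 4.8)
The statement is the classical inverse function theorem, so the plan is to give the standard proof built on the Banach contraction mapping principle. Write $A = \nabla_a f(a)$, which is invertible by hypothesis since $\det(\nabla_a f(a)) \neq 0$. The whole idea is to solve $f(x) = y$ for $x$ near $a$, for every $y$ near $f(a)$, by recasting it as a fixed-point problem. To this end I would introduce the auxiliary map $\phi_y(x) = x + A^{-1}(y - f(x))$ and observe that $x$ is a fixed point of $\phi_y$ if and only if $f(x) = y$.

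First I would establish the contraction property. Differentiating gives $\nabla_x \phi_y(x) = I - A^{-1}\nabla_x f(x)$, which vanishes at $x = a$ because $\nabla_x f(a) = A$. Since $\nabla f$ is continuous, I can pick a closed ball $\bar B(a,r)$ on which $\|\nabla_x \phi_y(x)\| \le \tfrac12$; the mean value inequality on this convex set then shows $\phi_y$ is a $\tfrac12$-contraction. Shrinking $r$ if necessary, I would also arrange that $\nabla_x f(x)$ stays invertible throughout $\bar B(a,r)$, which is possible since invertibility is an open condition and $\nabla_x f(a) = A$ is invertible. Next I would verify the self-mapping property: restricting $y$ to the open set $W = \{ y : \|A^{-1}(y - f(a))\| < r/2 \}$, the estimate $\|\phi_y(x) - a\| \le \tfrac12\|x - a\| + \|A^{-1}(y - f(a))\| \le r$ shows $\phi_y$ maps $\bar B(a,r)$ into itself. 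The Banach fixed-point theorem then yields, for each such $y$, a unique fixed point, defining the inverse $g = f^{-1}: W \to \bar B(a,r)$, with $V = g(W)$ giving the required neighborhoods.

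The remaining steps concern regularity of $g$. Continuity (in fact Lipschitz continuity) follows by subtracting the two fixed-point identities for $y_1, y_2 \in W$: one obtains $\|g(y_1) - g(y_2)\| \le \tfrac12\|g(y_1)-g(y_2)\| + \|A^{-1}\|\,\|y_1 - y_2\|$, hence $\|g(y_1) - g(y_2)\| \le 2\|A^{-1}\|\,\|y_1 - y_2\|$. For differentiability at $y_0 = f(x_0)$, I would set $T = (\nabla_x f(x_0))^{-1}$ and substitute $x = g(y)$, $x_0 = g(y_0)$ into the first-order expansion of $f$ at $x_0$, then solve for $g(y) - g(y_0)$ and use the Lipschitz bound to absorb the error term into $o(\|y - y_0\|)$, yielding $\nabla_y g(y_0) = (\nabla_x f(x_0))^{-1}$.

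I expect the differentiability step to be the main obstacle: it requires carefully converting the $o(\|g(y) - g(y_0)\|)$ error coming from the expansion of $f$ into an $o(\|y - y_0\|)$ error, which is exactly where the previously established Lipschitz bound on $g$ is needed. The contraction and self-map set-up is routine but must be carried out in the right order, since the radius $r$ has to be chosen simultaneously small enough for the contraction estimate, for invertibility of $\nabla f$ on the ball, and for the self-mapping inclusion.
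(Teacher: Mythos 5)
Your proof is correct, but there is nothing in the paper to compare it against: the paper states this result as background (``We start by state the implicit function theorem which we will be using in the following proofs'') and never proves it, since it is the classical textbook theorem --- in fact the \emph{inverse} function theorem, which the paper mislabels as the implicit function theorem; your proposal correctly identifies and proves the statement actually written. Your argument is the standard contraction-mapping proof (Rudin-style): the auxiliary map $\phi_y(x) = x + A^{-1}\bigl(y - f(x)\bigr)$, the choice of $r$ making $\phi_y$ a $\tfrac12$-contraction with $\nabla f$ invertible on $\bar B(a,r)$, the self-mapping estimate on the ball for $y \in W$, the Lipschitz bound $\|g(y_1)-g(y_2)\| \le 2\|A^{-1}\|\,\|y_1-y_2\|$, and the conversion of the $o(\|g(y)-g(y_0)\|)$ error into $o(\|y-y_0\|)$ to get $\nabla g(y_0) = (\nabla f(g(y_0)))^{-1}$ --- all of these steps are in the right order and correct. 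Two small points you gloss over: you should note that $V = g(W)$ is open, which follows because the strict inequality in the self-mapping estimate places each fixed point in the \emph{open} ball, so $V = f^{-1}(W) \cap B(a,r)$ is open by continuity of $f$; and injectivity of $f$ on $V$ (so that $g$ really is a two-sided inverse) follows from uniqueness of the fixed point of $\phi_{f(x)}$ in $\bar B(a,r)$. Neither gap is substantive. Since the paper only ever uses this theorem as a black box to establish invertibility of the SVGD update (Proposition on SVGD invertibility), supplying a self-contained proof is a strengthening of the paper's presentation rather than a deviation from it.
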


\subsection{Stochastic Gradient Langevin Dynamics}
\label{appendix: sgld_invertibility_proof}

\begin{proposition*}[SGLD]
The SGLD update in Eq.\eqref{eq:sgld_update} is not invertible. 
\end{proposition*}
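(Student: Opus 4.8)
The plan is to show that the SGLD update fails the most basic requirement underlying the change-of-variable argument of \thmref{thm:generic_entropy}: it is not a well-defined \emph{deterministic} map of the particle $a$, and therefore cannot be injective, let alone invertible. Recall that the SGLD update has the form $a^{l+1} = a^l + \epsilon\,\nabla_{a^l}\log p(a^l) + \sqrt{2\epsilon}\,\xi^l$, with $\xi^l \sim \cN(0, I)$ drawn independently at each step. The drift part $a \mapsto a + \epsilon\,\nabla_a \log p(a)$ is a deterministic function of $a$, but the additive Gaussian term $\sqrt{2\epsilon}\,\xi^l$ is exogenous randomness that is \emph{not} a function of $a^l$. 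First I would make precise the notion of invertibility that is actually needed: the change-of-variable formula applies only to a deterministic bijection $F:\mathbb{R}^n \to \mathbb{R}^n$, so that the density $q^{l}$ can be expressed in terms of $q^{l-1}$ through the factor $|\det \nabla_a F|^{-1}$. The proposition is the statement that no such $F$ on $a$-space reproduces the SGLD transition.

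The core of the argument is a non-injectivity observation, which I would phrase in two equivalent ways. Viewed as a map of $a^l$ alone, the update is not even single-valued: fixing $a^l$ and letting $\xi^l$ range over its support $\mathbb{R}^n$ produces an entire $n$-dimensional continuum of possible images $a^{l+1}$, so the relation $a^l \mapsto a^{l+1}$ is one-to-many and admits no inverse. Alternatively, if one augments the domain and treats the pair $(a^l,\xi^l)$ as the input, the update becomes a smooth map $\mathbb{R}^{2n}\to\mathbb{R}^n$; by a dimension count its generic fibers are $n$-dimensional, so it is not injective. Either way there is no inverse transformation on the original variable, and hence the Jacobian-based likelihood recursion of \thmref{thm:generic_entropy} cannot be applied to propagate $q^l$ through the SGLD dynamics in $a$.

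I do not expect a genuine obstacle here, since the claim is almost definitional; the only delicate point is pinning down the right notion of invertibility so the argument is not circular. The statement is \emph{not} that some particular augmented map fails to be invertible, but that no deterministic invertible map on $a$-space can reproduce a stochastic transition kernel with full-support Gaussian noise. I would close by remarking that this is precisely the reason prior work is forced to augment the energy-based model with the noise variable to attempt a change-of-variable derivation, and that such augmentation makes the relevant Jacobian determinant equal to one, yielding a log-likelihood that is independent of the sampler dynamics---in sharp contrast to the SVGD case treated in \thmref{thm:svgd_entropy}.
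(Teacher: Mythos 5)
Your proposal is correct and takes essentially the same route as the paper: the paper's proof likewise rests on the observation that the exogenous noise term $\sqrt{2\epsilon}\,\xi$ prevents the SGLD update from being an injective deterministic map of $a$ (the paper phrases it as choosing $\xi_1 \neq \xi_2$ so that two distinct particles $a_1 \neq a_2$ collide, while you phrase it as the map being one-to-many in $a$, equivalently non-injective on the augmented space $(a,\xi)$ by dimension count --- two views of the same fact). Your closing remark about noise augmentation forcing a unit Jacobian also matches the paper's own discussion of prior work in its related-work appendix.
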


\begin{proof}
We show that SGLD are not invertible using two different methods: (1) We show that SGLD is not a bijective transformation, (2) Using the implicit function theorem, we show that the Jacobian of the dynamics is not invertible.

\subsubsection{Method1: SGLD is not invertible $\iff$ SGLD is not a bijection} 

The update rule $F(\cdot)$ for SGLD and DGLD are given by Eq.\eqref{eq:sgld_update} and Eq.\eqref{eq:dld_update}, respectively. In the following, we drop the dependency on the time step for ease of notation.

\textbf{Injectivity} is equivalent to checking: $F( a_{1} ) = F( a_{2} ) \Longrightarrow a_{1} = a_{2}$.
This, however, doesn't hold in case of SGLD as the noise terms $\xi_1$ and $\xi_2$ can be chosen such that the equality 
$$a_{1} + \ \epsilon\nabla_{a_{1}}\log p\left( a_{1} \right) + \sqrt{2\epsilon}\xi_{1}\ ={\ a}_{2} + \ \epsilon\nabla_{a_{2}}\log p\left( a_{2} \right) + \sqrt{2\epsilon}\xi_{2}$$ holds with $a_1 \neq a_2$. 
Therefore, SGLD is not injective. The same holds for DGLD, where the equality $a_{1} + \nabla_{a_{1}}\log p\left( a_{1} \right) =
a_{2} + \nabla_{a_{2}}\log p\left( a_{2} \right) $ can be valid for $a_1 \neq a_2$. A counter-example: $a_{1}=a_{2} + \eta$ and
$\eta + \nabla_{a_{1}}\log p\left( a_{1} \right) = \nabla_{a_{2}}\log p\left( a_{2} \right)$, with $\eta$ being an arbitrary constant. 

\textbf{Surjectivity} is equivalent to checking:  \( \forall a^{l+1}\in \mathbb{R}^{d}, \exists a^{l}\in \mathbb{R}^{d}\) s.t.
\(a^{l+1} = F( a^{l})\). Assume that
\(a^{l+1} = \ a^{l} + \ \epsilon\nabla_{a^{l}}\log p\left( a^{l} \right)\),
and \(a^{l}\in \mathbb{R}^{d},\) we can always choose an
adaptive learning rate \(\epsilon\) such that
\(\epsilon\nabla_{a^{l}}\log p\left( a^{l} \right)\)
\(= \ {a^{l+1}  -  a}^{l}\ \).

\subsubsection{Method2: Implicit function theorem } 
We compute the derivative of the update rule in Eq~\ref{eq:sgld_update} with respect to $a$: $J_F = I + \epsilon \nabla^{2}_{a} \log p(a) $. It's possible for $J_F$ not to be invertible, \eg in case $I = -\epsilon \nabla^{2}_{a} \log p(a)$. Hence, in general $F(a)$ is not guaranteed to be a bijection.
\end{proof}

\subsection{Hamiltonian Monte Carlo (HMC)}
\label{appendix: hmc_invertibility_proof}
\begin{proposition*}[HMC] The HMC update in Eq.\eqref{eq:hmc_update} is not invertible w.r.t. $a$. 
\end{proposition*}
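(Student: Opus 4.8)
The plan is to follow the same two-pronged template used for SGLD in Appendix~\ref{appendix: sgld_invertibility_proof}, adapting it to the fact that the HMC update acts on an \emph{augmented} state $(a,v)$, where $v$ is the auxiliary momentum. First I would isolate the position component of the leapfrog update in Eq.\eqref{eq:hmc_update}: after eliminating the half-step momentum $v^{l+1/2} = v^{l} + \tfrac{\epsilon}{2}\nabla_{a}\log p(a^{l})$, the position map takes the form $a^{l+1} = a^{l} + \epsilon v^{l} + \tfrac{\epsilon^2}{2}\nabla_{a}\log p(a^{l})$. The crucial observation is that this depends not only on $a^{l}$ but also on the independent momentum $v^{l}$, so the position update is \emph{not} a self-map of $\mathbb{R}^{d}$ in $a$ alone --- exactly the role the stochastic noise $\xi$ played for SGLD.

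Next, I would mirror Method~1 (bijectivity). For injectivity with respect to $a$, I would exhibit $a_1 \neq a_2$ that are identified by the update: since $v$ is a free variable, one can solve $a_{1} + \epsilon v_{1} + \tfrac{\epsilon^2}{2}\nabla_{a}\log p(a_{1}) = a_{2} + \epsilon v_{2} + \tfrac{\epsilon^2}{2}\nabla_{a}\log p(a_{2})$ for suitable momenta $v_1,v_2$, so distinct positions collapse to the same image. Hence the position update fails to be injective as a transformation of $a$, and the change-of-variable formula cannot be applied directly to the marginal over $a$.

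Then I would give the complementary Method~2 via the implicit function theorem (\thmref{thm:inverse_function}). Differentiating the position update with respect to $a$ yields the block $\partial a^{l+1}/\partial a^{l} = I + \tfrac{\epsilon^2}{2}\nabla^{2}_{a}\log p(a^{l})$; as in the SGLD case this can be singular (\eg whenever $\tfrac{\epsilon^2}{2}\nabla^{2}_{a}\log p(a^{l}) = -I$), so even setting aside the $v$-dependence the Jacobian is not guaranteed invertible. I would then record the positive counterpart that explains the qualifier ``w.r.t.\ $a$'': if one instead treats the full symplectic leapfrog map on $(a,v)$ --- equivalently, conditions on a fixed momentum --- the map \emph{is} invertible, but the determinant of this augmented update equals $1$, so the induced log-density becomes independent of the dynamics, which is precisely the degeneracy discussed in Appendix~\ref{sec:entropy_ebms}.

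I expect the main obstacle to be conceptual rather than computational: stating cleanly what ``invertible w.r.t.\ $a$'' should mean, and keeping separate the genuinely invertible extended-space map $(a,v)\mapsto(a',v')$ from the non-invertible marginal position map. The delicate point is to argue that the momentum acts as a hidden degree of freedom that destroys injectivity of the position update, rather than merely pointing at a possibly-singular Jacobian; the Jacobian argument alone would leave open the special choices of $p$ and $\epsilon$ for which the position block happens to be nonsingular, which is why the non-injectivity argument through the free momentum is the more robust of the two.
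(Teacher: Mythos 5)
Your proposal is correct and takes essentially the same route as the paper: the paper's proof simply cites \cite{neal2011mcmc} for the fact that HMC is invertible only with respect to the augmented state $(a,v)$, notes that the randomly sampled momentum $v$ plays exactly the role of the noise variable $\xi$ in SGLD, and concludes that the SGLD non-injectivity argument applies. Your write-up just executes that argument explicitly (the leapfrog position map, the collision of distinct positions via free momenta, the possibly singular Jacobian block, and the determinant-one degeneracy of the extended map), all of which the paper leaves implicit in its sketch.
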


\cite{neal2011mcmc} show that HMC update rule is only invertible with respect to the $(a,v)$, \ie when conditioning on $v$. Since $v$ is sampled from a random distribution, it has the effect of the noise variable in SGLD. Hence, a similar proof applies.



\subsection{Stein Variational Gradient Descent}
\label{appendix: svgd_invertibility}

\begin{proposition*}[SVGD]
Under the assumption that $\epsilon \ll \sigma $, the update rule of SVGD dynamics defined in Eq.\eqref{eq:svgd_update} with an RBF kernel is invertible.
\end{proposition*}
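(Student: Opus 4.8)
The plan is to treat one full SVGD sweep as a single map on the joint particle vector and to certify that its Jacobian never degenerates. Stack the $m$ particles into $A = (a_1,\dots,a_m) \in \mathbb{R}^{md}$ and write the update as $F(A) = A + \epsilon\, G(A)$, where the $i$-th block of $G$ is the empirical Stein field $G_i(A) = \Delta f(a_i) = \frac{1}{m}\sum_{j}\big[k(a_i,a_j)\nabla_{a_j}\log p(a_j) + \nabla_{a_j}k(a_i,a_j)\big]$. Because each particle's update depends on all the others, $G$ genuinely couples the blocks, so $F$ must be analyzed as a map $\mathbb{R}^{md}\to\mathbb{R}^{md}$ rather than coordinate-wise. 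By the implicit function theorem, invertibility follows once I show the Jacobian $J_F = I_{md} + \epsilon\,\nabla_A G$ is nonsingular at every $A$; to promote this to a genuine (global) inverse I would additionally note that $F$ is a small perturbation of the identity, so that when $\epsilon\,\mathrm{Lip}(G) < 1$ the fixed-point map $A \mapsto B - \epsilon\, G(A)$ is a contraction and $F$ is a bijection of $\mathbb{R}^{md}$.

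The core computation is the block structure of $J_F$. Writing $k_{ij} = \exp(-\|a_i-a_j\|^2/2\sigma^2)$, I would differentiate $G_i$ and collect the diagonal blocks $\partial G_i/\partial a_i$ and the off-diagonal blocks $\partial G_i/\partial a_{i'}$ for $i'\neq i$. Every resulting term is the Gaussian $k_{ij}$ multiplied by a polynomial in the rescaled gap $(a_i-a_j)/\sigma$ (coming from $\nabla k_{ij} = k_{ij}(a_j-a_i)/\sigma^2$ and its second derivative) and, in the first term, by the score $\nabla_{a_j}\log p(a_j)$. The analytic fact I would lean on is that each scalar map $x\mapsto x^{p}e^{-x^{2}/2}$ is uniformly bounded on $\mathbb{R}$; hence, assuming the score $\nabla\log p \propto \nabla Q/\alpha$ is bounded (the standing regularity assumption on the energy landscape), every entry of $\nabla_A G$ is bounded by a fixed constant times an inverse power of $\sigma$ — the first-order repulsion and score contributions scale like $1/\sigma$ and the second-order repulsion term like $1/\sigma^2$. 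Consequently the diagonal blocks of $J_F$ are $I_d$ plus a perturbation of this order, and every off-diagonal block has operator norm of the same order in $\epsilon$.

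With these bounds in hand, the final step is a strict-diagonal-dominance argument. For each row of $J_F$ the diagonal entry lies within this small perturbation of $1$, while the sum of the absolute values of the off-diagonal entries in that row (summing over the other $m-1$ particles and the $d$ coordinates) is also controlled by $\epsilon$ times an inverse power of $\sigma$; under the hypothesis $\epsilon \ll \sigma$ (equivalently the small-step condition $\epsilon\,\|\nabla_a h\|_{\infty} \ll 1$ used in the entropy theorem) the latter is strictly smaller than the former. By the Levy--Desplanques theorem (equivalently, the Gershgorin circle theorem), a strictly diagonally dominant matrix is nonsingular, so $J_F$ is invertible at every $A$; the implicit function theorem then gives local invertibility of $F$, which combined with the contraction argument yields global invertibility.

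I expect the main obstacle to be the bookkeeping of the coupled block Jacobian: one must verify that after summing the off-diagonal contributions over all particle pairs and all coordinate directions the total row-sum stays below the diagonal, and in particular pin down the exact power of $\sigma$ so that $\epsilon \ll \sigma$ is the right threshold (the second-order repulsion term naively suggests $\epsilon \ll \sigma^2$, so the precise constants deserve care). The structural reason the argument succeeds is that the Gaussian decay of the RBF kernel dominates the polynomial prefactors produced by differentiation; this is exactly the inter-particle coupling term, and it foreshadows the paper's later observation that the kernel must induce genuine dependence between particles for invertibility — and hence the change-of-variable entropy formula — to hold.
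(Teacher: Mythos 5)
Your overall skeleton---write the update as identity plus $\epsilon$ times a vector field, bound the Jacobian entries by exploiting the fact that Gaussian-times-polynomial factors are uniformly bounded, conclude nonsingularity by strict diagonal dominance, and invoke the inverse/implicit function theorem---is the same as the paper's argument in Appendix~\ref{appendix: svgd_invertibility}. But you instantiate it on a genuinely different object: you stack all $m$ particles into $A\in\mathbb{R}^{md}$ and analyze the coupled block Jacobian of the joint map, whereas the paper differentiates the update of a single particle $a_i$ with respect to $a_i$ \emph{only}, holding the other particles fixed (they enter $h$ merely as parameters), so its Jacobian is $d\times d$. The paper's choice is not an oversight: it is the form used downstream, since Theorem~\ref{thm:generic_entropy} and the entropy formula apply the change of variables with the per-particle determinant $\det\!\left(I+\epsilon\nabla_{a^{l}}h(a^{l})\right)$; joint invertibility on $\mathbb{R}^{md}$ would instead justify a change of variables for the joint density of the whole ensemble, not the per-particle density $q^{L}(a^{L})$ that the paper manipulates. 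On the other hand, your contraction-mapping step is a real improvement in rigor: when $d>1$ an everywhere-nonsingular Jacobian yields only \emph{local} invertibility, and the paper's multi-dimensional argument stops there (it obtains global injectivity only in its one-dimensional case, via strict monotonicity), whereas your fixed-point argument---$A\mapsto B-\epsilon G(A)$ is a contraction once $\epsilon\,\mathrm{Lip}(G)<1$, so $F=I+\epsilon G$ is a bijection of $\mathbb{R}^{md}$---genuinely delivers a global inverse.

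There is, however, a concrete gap in your bookkeeping. You claim that, assuming only a bounded score, every entry of $\nabla_A G$ is a Gaussian kernel factor times a polynomial in $(a_i-a_j)/\sigma$ times at worst the score, hence $O(1/\sigma)$ or $O(1/\sigma^2)$. This is false for the off-diagonal blocks: differentiating the drift term $k(a_i,a_j)\nabla_{a_j}\log p(a_j)$ with respect to $a_j$ produces $k(a_i,a_j)\,\nabla^2_{a_j}\log p(a_j)$, the \emph{Hessian} of the log-density (\ie of $Q/\alpha$), multiplied by a kernel factor $k\le 1$ that provides no decay whatsoever. The same Hessian appears in the diagonal block through the self-interaction term $j=i$ (where $k(a_i,a_i)=1$), unless that term is excluded. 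So your diagonal-dominance and Lipschitz bounds require an additional standing assumption---bounded $\nabla^2\log p$ (a Lipschitz score)---that your proposal never states. The paper's per-particle route avoids this entirely: because it differentiates only in $a_i$ and (consistently with its entropy derivation) drops the self-term, the scores $\nabla_{a_j}\log p(a_j)$ for $j\neq i$ are constants under the differentiation and no Hessian ever appears, so bounded score is genuinely sufficient there. Your closing worry about the correct power of $\sigma$ is well placed---the paper's own one-dimensional bound effectively needs $\sigma^2\gtrsim \epsilon\alpha$ on top of $\epsilon\ll\sigma$---but the missing Hessian control is the step of your argument that would actually fail as written.
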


\subsubsection{Method2: SVGD is invertible $\iff$ SVGD is a bijection}

\textbf{Injectivity.} The equality $F(a_1)=F(a_2)$:
\[ a_{1} + \frac{\epsilon}{m}\sum_{j = 1}^{m}{\lbrack k\left( a_{j},a_{1} \right)\nabla_{a_{j}}\log p\left( a_{j} \right) - \nabla_{a_{j}}k\left( a_{j},a_{1} \right)\rbrack\ \text{~}}\!=\!{\ a}_{2} + \frac{\epsilon}{m}\sum_{l = 1}^{m}{\lbrack k\left( a_{l},a_{2} \right)\nabla_{a_{l}}g\left( a_{l} \right) - \nabla_{a_{l}}k\left( a_{l},a_{2} \right)\rbrack\ \text{~}}\]
is too complex to hold for a solution other than $a_1 = a_2$ given the sum over multiple particles on both sides and the dependency on the kernel.

$\Longrightarrow \text{not~obvious~(~depends~on~the~Kernel)}$

\textbf{Surjectivity.} Similarly, to Langevin dynamics, surjectivity can be achieved by choosing a suitable learning rate.

\subsubsection{Method 2: Implicit function theorem}\label{appendix:invertability_1d_proof}
We start by proving the proposition above for the 1-Dimensional case, \ie $a\in \mathbb{R}$. Then, we extend the proof to the multi-dimensional case, \ie $a\in \mathbb{R}^d$.\\
\textbf{1-Dimensional Case.} We prove that $F$ is invertible by showing that $F$ is bijective, which is equivalent to showing that $F$ is strictly monotonic, \ie $\nabla_{a_{i}} F(a_i) > 0$ or $\nabla_{a_{i}} F(a_i) < 0, \quad \forall a_i$. \\
Computing the derivative of the SVGD update (Eq.~\ref{eq:svgd_update}) rule w.r.t $a_i$ results in: 
$$
\nabla_{a_{i}} F(a_i)=1+\frac{\varepsilon}{m} \sum_{i=1}^m \nabla_{a_{i}} k\left(a_i, a_j\right) \nabla_{a_{j}} g\left(a_j\right)+\nabla_{a_{i}} \nabla_{a_j} k\left(a_i, a_j\right).
$$
For $k(a_i, a_j)=e^{-\frac{\|a_i-a_j\|^2}{2\sigma^2}}$, we have: 
$\begin{cases}
    &\nabla_{a_{j}} k(a_i, a_j)=\frac{-2(a_i-a_j)}{2 \sigma^2} k(a_i, a_j)=\frac{-(a_i-a_j)}{\sigma^2} k(a_i, a_j)\\
    &\nabla_{a_{j}} k(a_i, a_j)=\frac{(a_i-a_j)}{\sigma^2} k(a_i, a_j)\\
    &\nabla_{a_{i}} \nabla_{a_{j}}=\frac{1}{\sigma^2} k(a_i,a_j)\left(1-\frac{1}{\sigma^2}\|a_i-a_j\|^2\right)
\end{cases}$

Hence, 
\begin{equation*}
\nabla_{a_i} F(a_i) = 1 + \frac{\epsilon}{m} \sum_{i=1}^{m} \frac{k(a_i,a_j)}{\sigma^2} \left( -(a_{i} -a_{j})  \nabla_{a_j}\log p(a_j) +    1 - \frac{ \|a_i - a_j\|  }{\sigma^2}  \right). 
\end{equation*}

Next, under the condition $\epsilon < \sigma$, we show that $\nabla_{a_i} F(a_i)>0$, $\forall a_i$.
This is equivalent to showing that $\nabla_{a_i} F(a_i)>-1$.

\begin{eqnarray*}
\nabla_{a_i} F(a_i)>-1 
\iff  \frac{\epsilon}{m \sigma^2} \sum_{j=1}^{m} k(a_i,a_j) \left(  -(a_i-a_j) \nabla_{a_j} \log p_{a_j}(a_j) +1- \frac{\| a_i -a_j \|^2}{\sigma^2}   \right)>-1
\end{eqnarray*}

We compute a lower bound on the LHS and investigate when it's strictly larger than $-1$. We can safely assume that $-3 \sigma \leq k(a_i,a_j) (a_i  - a_j) \leq 3 \sigma$ and $-3 \sigma \leq k(a_i,a_j) \|a_i  - a_j\|^2\leq 3 \sigma$. We compute the lower bound as:
$  \sum_{j=1}^m \frac{\epsilon  \alpha}{m \sigma^2} \left(  -3 \sigma \|\nabla_{x_j} \log p(x_j)\| + 1 - \frac{(3 \sigma)^2}{\sigma^2}   \right) =  \frac{\epsilon  \alpha}{m \sigma^2}  \left( -3 \sigma  \sum_{j=1}^m \|\nabla_{x_j} \log p(x_j)\| - 8 m \right) $. This results in: 

\begin{equation}
\nabla_{a_i} F(a_i)\!>\!\frac{\epsilon  \alpha}{m \sigma^2}  \left( -3 \sigma  \sum_{j=1}^m \|\nabla_{a_j} \log p(a_j)\|\!-\!8 m \right)\!>\!-1 \iff \sum_{j=1}^{m} \|\nabla_{a_j} \log p(a_j)\|\!<\!\frac{ m \sigma }{3 \epsilon \alpha }\!-\!\frac{8m}{3 \sigma}
\end{equation}

Hence, $ \max_{a_j} \|\nabla_{a_j} \log p(a_j)\|\!<\!\frac{  \sigma }{3 \epsilon \alpha }\!-\!\frac{8}{3 \sigma}$. The LHS is guaranteed to be a large positive number when $\epsilon \ll\sigma$.

\textbf{Multi-Dimensional Case.} We assume that $\log p(a_j)$ is continuously differentiable. Note that in practice, this can be easily satisfied by choosing the activation function to be Elu instead of Relu. We can easily show that:
\begin{eqnarray*}
\nabla_{a_i} F(a_i) = I + \frac{\epsilon}{m \sigma^2} \sum_{i=1}^{m} k(a_i,a_j)
\left( - \nabla_{a_j}\log p(a_j) (a_i - a_j)^{\top} - \frac{1}{\sigma^2} (a_i - a_j) (a_i - a_j)^{\top} + I \right)
\end{eqnarray*}

Next, we will show that $\nabla_{a_i} F(a_i)$ is diagonally dominated and is, hence, invertible, \ie $\det(\nabla_{a_i} h(a_i)) \neq 0$. For this, we show that $\nabla_{a_i} h(a_i)|_{kl} <1 $, $\forall k,l \in [1,d]$. 
\begin{equation*}
\nabla_{a_i} h(a_i)|_{kl} = \frac{1}{m} \sum_{i=1}^{m} k(a_i,a_j)  \left( -\partial_{a_j^{(k)}} \log p(a_j) (a_i^{(l)} - a_j^{(l)}) - (a_i^{(k)} - a_j^{(k)}) (a_i^{(l)} - a_j^{(l)})+1 \right)
\end{equation*}

Following the proof in Section~\ref{appendix:invertability_1d_proof} for the 1-Dimensional case, we show that $ \nabla_{a_i} h(a_i)|_{kl} \ll 1 $ if $\sigma \ll \epsilon$.

\newpage 
\section{Derivation of Closed-Form Likelihood for Samplers}\label{appendix:closed_form_entropy}
\subsection{Proof of Theorem~\ref{thm:svgd_entropy}}
\label{appendix: svgd_likelihood_proof}
\begin{theorem*} 
The closed-form estimate of the log-likelihood $\log q^L(a^L|s)$ for the SVGD-based sampler with an RBF kernel $k(\cdot,\cdot)$ is 
\begin{small}
\begin{equation*}
\log q^L(a^L|s)\approx\log q^{0}(a^{0}|s)-\frac{\epsilon}{m \sigma^2}\! \sum_{l=0}^{L-1}\sum_{ \substack{j=1\\ a^l \neq a^{l}_{j} }  }^{m}
k(a^{l}_{j}, a^l)
\left( -(a^{l}-a^{l}_{j})^{\top} \nabla_{a^{l}_{j}} Q(s,a^{l}_{j}) -\frac{\alpha}{\sigma^2}\|a^{l}-a^{l}_{j}\|^{2} +d \alpha \right),
\end{equation*}
\end{small}
where $d$ is the feature space dimension.
\end{theorem*}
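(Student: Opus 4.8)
The plan is to specialize the generic result of Theorem~\ref{thm:generic_entropy} to the SVGD velocity field. Since Proposition~\ref{prop:svgd_invertibility} already guarantees that the update $F(a)=a+\epsilon h(a)$ is invertible for $\epsilon\ll\sigma$, Theorem~\ref{thm:generic_entropy} applies verbatim and reduces everything to evaluating the per-step trace, namely
\[
\log q^L(a^L|s)\approx \log q^0(a^0|s)-\epsilon\sum\nolimits_{l=0}^{L-1}\Tr\big(\nabla_{a^l}h(a^l,s)\big)+\cO(\epsilon^2 dL).
\]
So the entire task is a trace computation for the empirical SVGD field $h(a)=\frac{1}{m}\sum_j\big[k(a,a_j)\nabla_{a_j}\log p(a_j)+\nabla_{a_j}k(a,a_j)\big]$, where the target is the Q-value EBM $p(\cdot|s)\propto\exp(Q(s,\cdot)/\alpha)$, so that $\nabla_{a_j}\log p(a_j)\propto\nabla_{a_j}Q(s,a_j)$. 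The $\cO(\epsilon^2 dL)$ term is inherited directly and never needs to be revisited.

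First I would compute the Jacobian $\nabla_a h(a)$ for the RBF kernel $k(a,a_j)=e^{-\|a-a_j\|^2/2\sigma^2}$, reusing the calculation already performed in the invertibility proof of Appendix~\ref{appendix: svgd_invertibility}. The crucial modeling choice is to treat the ensemble $\{a_j\}$ as fixed and view $h$ as a function of the single free variable $a$: then only the explicit $a$-dependence inside $k(a,a_j)$ and $\nabla_{a_j}k(a,a_j)$ is differentiated, and the score terms $\nabla_{a_j}\log p(a_j)$ are constants, so \emph{no Hessian of $\log p$ ever appears}. Using $\nabla_a k(a,a_j)=-\tfrac{1}{\sigma^2}(a-a_j)k(a,a_j)$ and $\nabla_a\big[\nabla_{a_j}k(a,a_j)\big]=\tfrac{k(a,a_j)}{\sigma^2}\big(I-\tfrac{1}{\sigma^2}(a-a_j)(a-a_j)^\top\big)$, the Jacobian becomes a sum over $j$ of a rank-one piece (from the score term) and an identity-plus-rank-one piece (from the repulsion term). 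Evaluating at $a=a^l$, the self-term $a_j=a^l$ drops out automatically, since $(a^l-a^l_j)=0$ and the RBF gradient vanishes at zero separation — this is precisely the origin of the $a^l\neq a^l_j$ restriction in the statement.

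Next I would take the trace term by term, using $\Tr(uv^\top)=u^\top v$ for the rank-one contributions and $\Tr(I)=d$. The score piece gives $-(a^l-a^l_j)^\top\nabla_{a^l_j}Q$, the repulsive second-derivative piece gives $d-\tfrac{1}{\sigma^2}\|a^l-a^l_j\|^2$, and both carry the common scalar weight $\tfrac{k(a^l_j,a^l)}{m\sigma^2}$. Substituting this trace back into the generic formula and flipping the overall sign yields exactly the claimed expression; the temperature $\alpha$ then enters through $\nabla_{a_j}\log p=\tfrac{1}{\alpha}\nabla_{a_j}Q$, equivalently by folding $\alpha$ into the effective repulsion strength and step size so that the bracketed factors read $\tfrac{\alpha}{\sigma^2}\|a^l-a^l_j\|^2-d\alpha$. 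The punchline, which I would emphasize, is that only vector dot products and first-order derivatives of $Q$ survive — all matrix structure collapses under the trace.

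I expect the main obstacle to be the bookkeeping of the mixed kernel derivative $\nabla_a\nabla_{a_j}k(a,a_j)$ together with the careful justification that the diagonal ($j=i$) term contributes nothing: one must argue that the second-order score (Hessian) term, which would naively arise from differentiating $k(a_i,a_i)\nabla_{a_i}\log p(a_i)$, genuinely cancels for the RBF kernel because $\nabla_a k(a,a_i)$ vanishes at $a=a_i$. Once that cancellation is secured, the remaining manipulations are elementary trace identities, and correctness of the constant prefactor $\tfrac{\epsilon}{m\sigma^2}$ follows by matching against the Jacobian already derived in Appendix~\ref{appendix: svgd_invertibility}.
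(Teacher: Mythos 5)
Your overall route is the same as the paper's: invoke Proposition~\ref{prop:svgd_invertibility} for invertibility, specialize Theorem~\ref{thm:generic_entropy} to the SVGD field, and reduce everything to the per-step trace of the Jacobian, which you evaluate with the rank-one trace identity exactly as in Appendix~\ref{appendix: svgd_likelihood_proof}; your off-diagonal ($j\neq i$) algebra and the $\alpha$ bookkeeping are fine. The genuine gap is the diagonal term. You claim it ``drops out automatically, since $(a^l-a^l_j)=0$ and the RBF gradient vanishes at zero separation,'' and later that the Hessian ``genuinely cancels because $\nabla_a k(a,a_i)$ vanishes at $a=a_i$.'' Both claims are false, and your own Jacobian formula refutes the first: evaluating $\nabla_a\big[\nabla_{a_j}k(a,a_j)\big]=\tfrac{k(a,a_j)}{\sigma^2}\big(I-\tfrac{1}{\sigma^2}(a-a_j)(a-a_j)^\top\big)$ at $a=a_j$ gives $\tfrac{1}{\sigma^2}I$, whose trace is $d/\sigma^2\neq 0$ --- the \emph{first} derivative of the kernel vanishes at coincidence, but the mixed second derivative is the curvature at the peak and does not. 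So under your ``freeze the ensemble'' convention the self-term injects a spurious $d\alpha$ bracket (the $k=1$, $\|a-a_j\|=0$ instance of the summand) into every step. Second, the Hessian does not cancel either: writing the self-score term as $g(a)=k(a,b)\,\nabla_b\log p(b)\big|_{b=a}$, the product rule gives $\nabla g=\nabla_a k(a,b)\big|_{b=a}\,(\nabla\log p)^\top+k(a,a)\,\nabla^2\log p(a)=0+\nabla^2\log p(a)$; the vanishing of $\nabla_a k$ at coincidence kills only the rank-one piece, while the Hessian survives with coefficient $k(a,a)=1$.

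The paper resolves this not by a cancellation but by an explicit modeling choice, stated in Appendix~\ref{appendix: svgd_likelihood_proof}: the expectation in $h(a_i^l,s)$ is approximated by the empirical mean over particles \emph{different from} $a_i^l$, i.e., the sampler whose likelihood is being computed simply has no self-interaction term. That is the true origin of the restriction $a^l\neq a^l_j$ in the statement --- it is part of the definition of the dynamics, not a consequence of the RBF structure. To repair your argument you should make the same move up front (define $h$ as the mean over $j\neq i$), after which your remaining trace computation goes through verbatim; as written, your dynamics retain the self-term, and an honest differentiation of it would force you to carry both the Hessian $\nabla^2_a Q$ and the extra $d/\sigma^2$ trace, yielding a different formula from the one claimed.
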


\begin{proof}
We generate a chain of samples using SVGD starting from $a^{0}\sim q^{0}$, and following the update rule $a^{l+1}_{i} \leftarrow a^{l}_{i} + \epsilon \, h(a^{l}_{i}, s)$, where $h(a_{i}^{l},s) = \mathbb{E}_{a^{l}_{j} \sim q^{l} }\Big[ k(a^{l}_{i}, a^{l}_{j}) \nabla_{a_{j}^{l}} Q(s,a_{j}^{l}) + \nabla_{a_{j}^{l}} k(a_{i}^{l}, a_{j}^{l}) \Big]$ and $k(a_{i}^{l}, a_{j}^{l}) = \exp{(- \frac{\|a^{l}_{i}-a^{l}_{j}\|^{2}}{2\sigma^2})}$. This update rule is the optimal direction in the reproducing kernel Hilbert space of $k(\cdot,\cdot)$ for minimizing the KL divergence objective (actor loss):
\begin{equation}
\pi^{*} = \argmin\nolimits_{\pi} \sum_t \mathbb{E}_{s_t \sim \rho_\pi}\Big[ D_{K\!L}\big(\pi(\cdot|s_t) \| \exp\!{(  Q(s_t,\cdot)/\alpha )}/Z \big)\Big].
\end{equation}




According to Proposition~\ref{prop:svgd_invertibility}, the iteration step (Eq.\eqref{eq:svgd_update}) is invertible. Hence, following Theorem~\ref{thm:generic_entropy} and substituting $h(a_{i}^{l},s)$ with the above formula for SVGD, for each action particle $a^{L}_{i}$ we obtain:
\begin{equation*}
\log q^{L}(a^L_i) \approx \log q^0(a^{0}_{i})-\frac{1}{m}\sum_{l=0}^{L-1} \sum_{ \substack{j=1 \\ a_{i}^{l} \neq a_{j}^{l} } }^{m} \Big[ \underbrace{\Tr\Big( \nabla_{a^{l}_{i}} (k(a^{l}_{i}, a^{l}_{j}) \nabla_{a^{l}_{j}} Q(s,a^{l}_{j}))  \Big)}_{\textcircled{1}}+ \underbrace{\Tr \alpha \Big( \nabla_{a^{l}_{i}} \nabla_{a^{l}_{j}} k(a^{l}_{i},a^{l}_{j})  \Big)}_{\textcircled{2}} \Big].
\end{equation*}
Note that we empirically approximate the expectation in $h(a_{i}^{l},s)$ by an empirical mean over particles that are different from $a_{i}^{l}$, in order to avoid computing Hessians in the derivation below. Next we compute simplifications for terms \textcircled{1} and \textcircled{2} respectively. In the following, we denote by $(\cdot)^{(k)}$ the $k$-th dimension of the vector. \\\\
\textbf{Term \textcircled{1}:}
\begin{eqnarray*}
\Tr\left( \nabla_{a^{l}_{i}} (k(a^{l}_{j}, a^{l}_{j}) \nabla_{a^{l}_{j}} Q(s,a^{l}_{j}))  \right)& = & \Tr\left(  \nabla_{a^{l}_{i}} k(a^{l}_{j}, a^{l}_{j}) (\nabla_{a^{l}_{j}} Q(s,a^{l}_{j}))^{\top}   +  k(a^{l}_{j}, a^{l}_{j}) \nabla_{a^{l}_{i}} \nabla_{a^{l}_{j}}Q(s,a^{l}_{j}))   \right)\\
&=& \sum_{t=1}^{d}  \frac{\partial k(a^{l}_{j}, a^{l}_{j})}{\partial(a^{l}_{i})^{(t)}} \frac{\partial Q(s,a^{l}_{j})}{\partial(a^{l}_{i})^{(t)}} + 0\\ 
&=& (\nabla_{a^{l}_{i}} k(a^{l}_{j}, a^{l}_{j}) )^{\top} \nabla_{a^{l}_{j}} Q(s,a^{l}_{j})\\
&=& -\frac{\alpha}{\sigma^2} k(a^{l}_{j}, a^{l}_{j}) (a^{l}_{i}-a^{l}_{j})^{\top} \nabla_{a^{l}_{j}} Q(s,a^{l}_{j})
\end{eqnarray*}

\textbf{Term \textcircled{2}:}
\begin{eqnarray*}
\Tr\left(\nabla_{a^{l}_{i}} \nabla_{a^{l}_{j}} k(a^{l}_{i},a^{l}_{j})\right)& = & \alpha\Tr\left(\nabla_{a^{l}_{i}} \left(\frac{1}{\sigma^2} \, k(a^{l}_{i},a^{l}_{j}) (a^{l}_{i}-a^{l}_{j}) \right)\right) \\
& = &  \frac{\alpha}{\sigma^2}\Tr \left( \nabla_{a^{l}_{i}} k(a^{l}_{i},a^{l}_{j}) (a^{l}_{i}-a^{l}_{j})^{\top}  + k(a^{l}_{i},a^{l}_{j})\cdot I \right)\\
& = &  \frac{\alpha}{\sigma^2}\Tr \left( -\frac{1}{\sigma^2} k(a^{l}_{i},a^{l}_{j}) (a^{l}_{i}-a^{l}_{j})(a^{l}_{i}-a^{l}_{j})^{\top} + k(a^{l}_{i},a^{l}_{j}) \cdot I \right)\\
& = &  \frac{\alpha}{\sigma^2} \,  \sum_{t=1}^{d} \left( -\frac{1}{\sigma^2} k(a^{l}_{i},a^{l}_{j})  (a^{l}_{i}-a^{l}_{j})^{(t)} (a^{l}_{i}-a^{l}_{j})^{(t)}+ k(a^{l}_{i},a^{l}_{j})\right)\\
& = &  -\frac{\alpha}{\sigma^4}\times k(a^{l}_{i}, a^{l}_{j}) \|a^{l}_{i}-a^{l}_{j}\|^{2} + \frac{\alpha}{\sigma^2} \times d \times k(a^{l}_{i},a^{l}_{j}) \\
& = &  k(a^{l}_{i},a^{l}_{j}) \left(-\frac{\alpha}{\sigma^4}\|a^{l}_{i}-a^{l}_{j}\|^{2} + \frac{d\alpha}{\sigma^2} \right)
\end{eqnarray*}


By combining \textbf{Terms \textcircled{1}} and \textbf{\textcircled{2}}, we obtain:
\begin{equation*}
\log q^{L}(a^L_i)\!\approx\!\log p^{0}(a^{0}_{i})\!-\!\frac{\epsilon}{m\sigma^2} \sum_{l=0}^{L-1}\sum_{j=1}^{m} k(a^{l}_{j}, a^{l}_{j}) \left(-(a^{l}_{i}-a^{l}_{j})^{\top} \nabla_{a^{l}_{j}} Q(s,a^{l}_{j})\!-\!\frac{\alpha}{\sigma^2}\|a^{l}_{i}-a^{l}_{j}\|^{2}\!+\!d\alpha \right)\\
\end{equation*}
Proof done if we take a generic action particle $a_i$ in place of $a$.
\end{proof}

\newpage

\section{Additional Results: Entropy Evaluation}
\label{sec:entr_eval}

The SVGD hyperparameters for this set of experiments are summarized in Table~\ref{tab:tb1}. We include additional figures for (1) the effect of 
(2) the kernel variance (\Figref{fig:kernel_variance_vis}) and (2) number of SVGD steps and particles (\Figref{fig:fig_inc_ent}).

\begin{table}[htb]
\centering
\caption{\label{tab:tb1} Parameters}
\setlength\tabcolsep{7pt}
\begin{tabular}{c|c|c}
\toprule
& Parameter & Value   \\\midrule
Figure~\ref{fig:test_entr}-\ref{fig:kernel_variance}& Target distribution & $p\!=\!\cN([-0.69,0.8],[[1.13,0.82],[0.82,3.39]])$\\
& Initial distribution & $q^{0}=\cN([0,0], 6 \bm{I})$\\ \midrule
Figure~\ref{fig:steps_particles}& Target distribution & $p_{\text{GMM}_{M}}\! =\! \sum_{i=1}^{M} \cN([0,0], 0.1 \bm{I})/M$\\
 & Initial distribution & $q^{0}=\cN([0,0], 6 \bm{I})$\\ \midrule
Default& Learning rate & $\epsilon=0.5$  \\
SVGD& Number of steps& $L=200$ \\
parameters& Number of particles & $m=200$ \\ 
& Kernel variance& $\sigma=5$  \\
\bottomrule
\end{tabular}
\end{table}

\begin{figure}[htb]
\centering
\includegraphics[width=\linewidth]{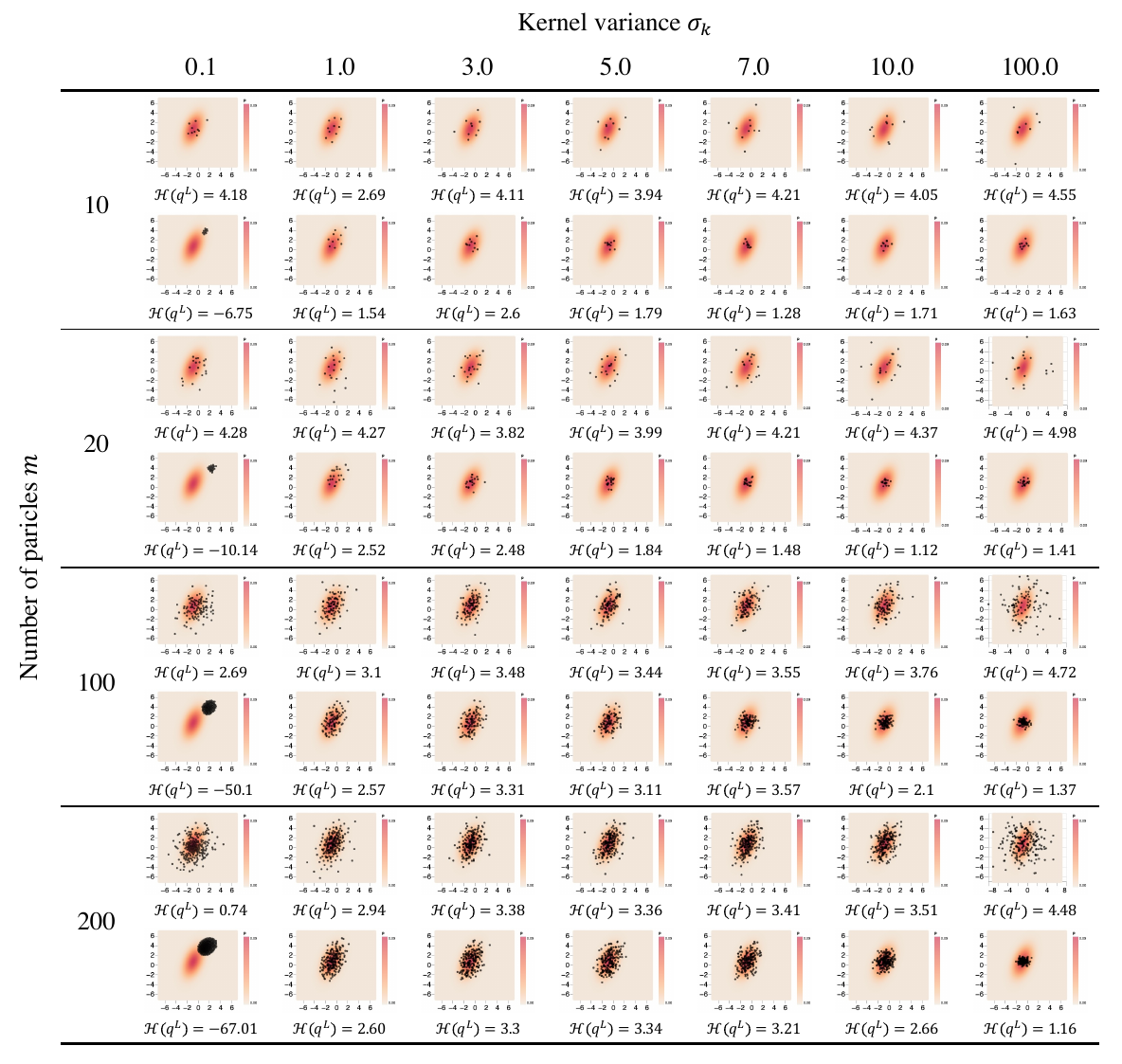}
\caption{ Visualization of the particles after $L$ steps of SVGD for the different configurations of kernel variance $\sigma$ and number of particles $m$ in Figure~\ref{fig:kernel_variance}. }
\label{fig:kernel_variance_vis}
\end{figure}

\begin{figure}[htb]
\centering
    \begin{subfigure}[t]{0.49\linewidth}
        \centering
        \includegraphics[width=0.7\linewidth]{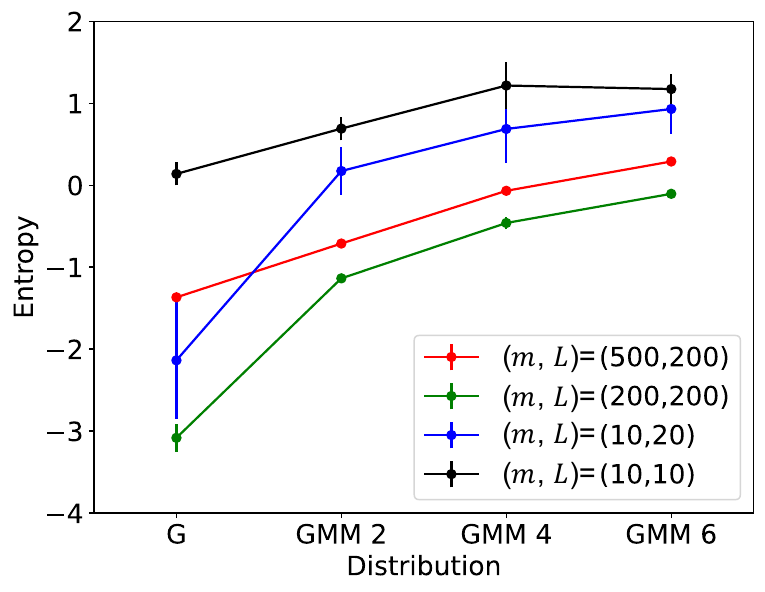}
        \caption{GMM distributions}\label{fig_spp_igmm}
    \end{subfigure} \hfill
    \begin{subfigure}[t]{0.49\linewidth}
        \centering         
        \includegraphics[width=0.7\linewidth]{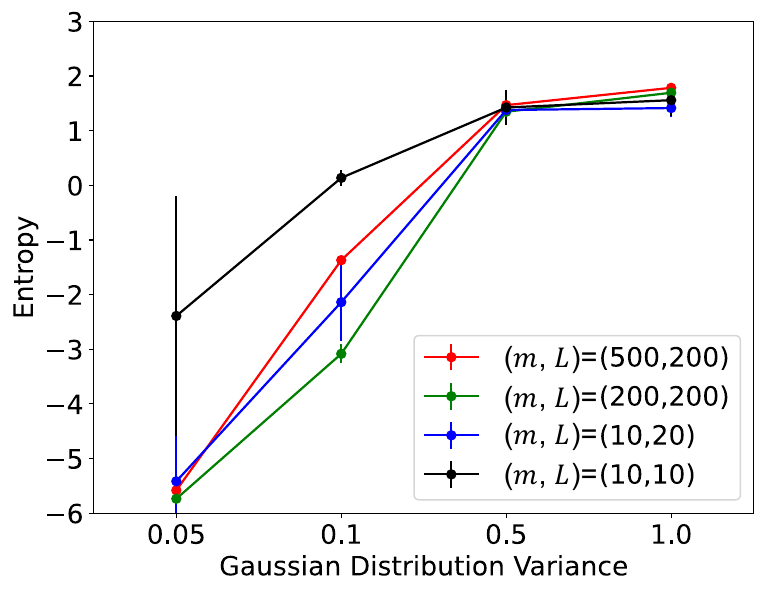}
        \caption{Gaussian distributions}\label{fig_spp_iv}
    \end{subfigure}%
\vspace{-1mm}
\caption{Sensitivity of our entropy formula to the  number of  SVGD steps ($L$) and particles ($m$). Our entropy consistently increases with increasing $\sigma$ and increasing number of GMM components, even when a small number of SVGD steps and particles is used \eg $L=10, m=10$.} 
\label{fig:fig_inc_ent}
\end{figure}

\newpage



\section{Additional Results: Multi-goal Results}
\label{appendix:multi-goal}
Hyperparameters are reported in \tabref{tab:STAC_multigoal_hyperparams}. Additionally, we include results for (1) the effect of the parametrization of the initial distribution (\Figref{fig:entropy_states}), (2) the entropy heatmap (\Figref{fig:entropy_heatmap}), (3) the effect of the entropy on the learned Q-landscapes (\Figref{fig:entropy_states} and \Figref{fig:smoothness}), 
(4) the robustness/adaptability of the learned policies (\Figref{fig:obstacles}) and (5) Amortized \STAC\ results (\Figref{fig:amortized_multigoal}). 

\begin{table}[htb]
\caption{Hyperparameters for multi-goal environment.}
\centering
\setlength\tabcolsep{7pt}
\begin{tabular}{c|c|c}
\toprule
 & Hyperparameter & Value   \\\midrule
 \multirow{3}{*}{Training} & Optimizer & Adam  \\
& Learning rate  & $3 \cdot 10^{-4}$ \\
& Batch size & $100$   \\
\midrule
\multirow{3}{*}{Deepnet} & Number of hidden layers (all networks) & $2$ \\
& Number of hidden units per layer & $256$  \\
& Nonlinearity & ReLU\\
\midrule
\multirow{4}{*}{RL} & Discount factor $\gamma$  & $0.8$ \\
& Replay buffer size $|\cD|$  & $10^{6}$ \\
& Target smoothing coefficient & $0.005$ \\
& Target update interval& $1$ \\
\midrule
\multirow{5}{*}{SVGD} & initial distribution $q^0$& $\cN(\bm{0}, 0.3\!\bm{I})$ \\
& Learning rate $\epsilon$ &  $0.01$ \\
& Number of steps $L$ & $10$ \\
& Number of particles $m$ & $10$\\ 
& Particles range (num. std) $t$ & 3\\ 
& Kernel variance & $\sigma = \frac{ \sum_{i,j} \| a_i - a_j \|^2  }{ 4(2 \log{m+1}) }$\\\bottomrule
\end{tabular}
\label{tab:STAC_multigoal_hyperparams}
\end{table}

\begin{figure}[H]
\centering
    \begin{subfigure}[t]{0.4\linewidth}
        \centering
        \includegraphics[width=\linewidth]{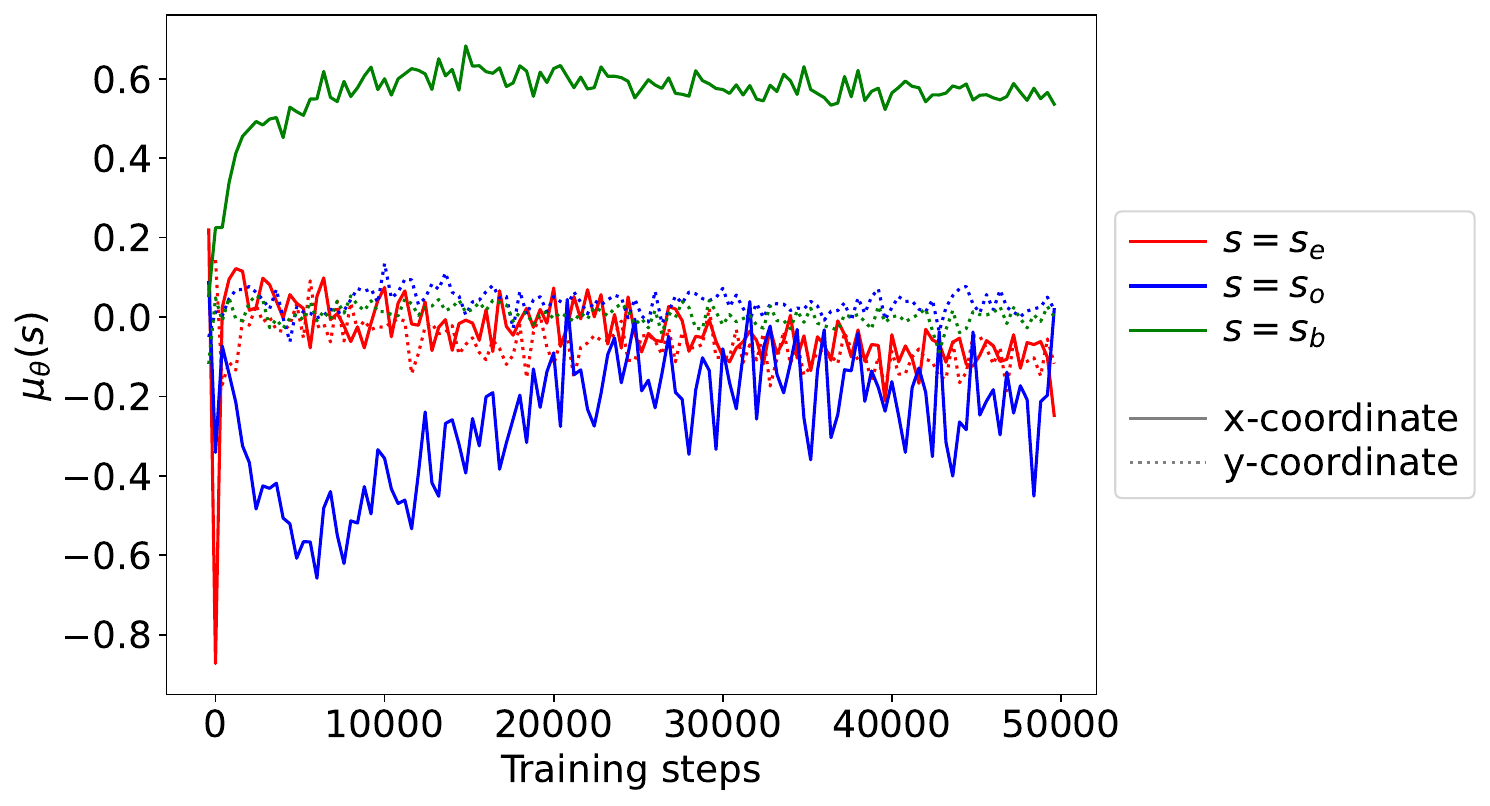}
        \caption{Mean $\mu_{\theta}(s)$}\label{fig:mean_curve_initial_actor}
    \end{subfigure} \hfill
    \begin{subfigure}[t]{0.4\linewidth}
        \centering         
        \includegraphics[width=\linewidth]{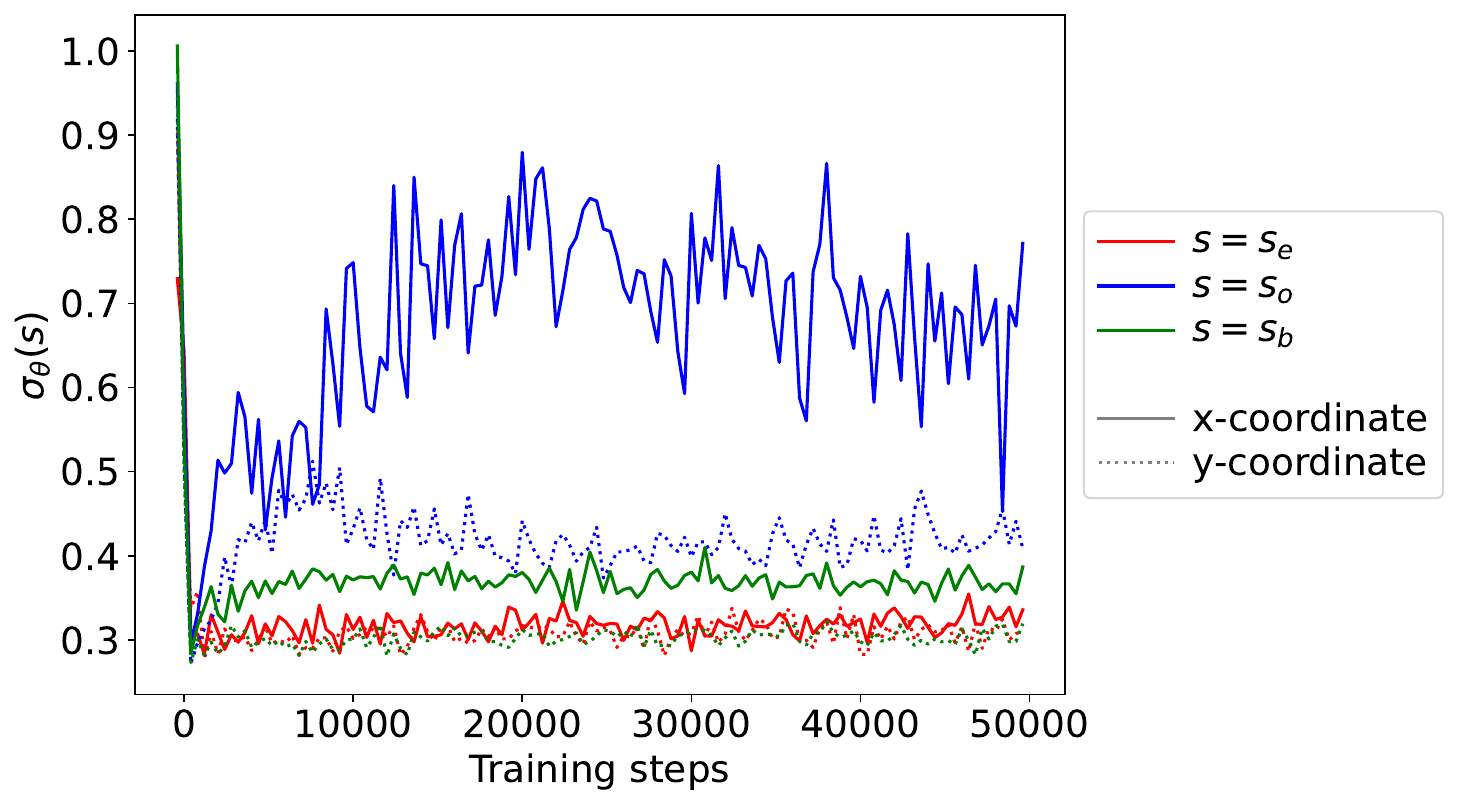}
        \caption{Standard deviation $\sigma_{\theta}(s)$}\label{fig:std_curve_initial_actor}
    \end{subfigure}%
\caption{Trends of $x$ and $y$ coordinates for the mean and standard deviation of the parameterized initial distribution for some critical states, during training.} 
\label{fig:mean_std_curve_initial_actor}
\end{figure}

\textbf{Distribution of reached goals for the multi-goal environment.} 
\Figref{fig:multi-goal-distribution} shows the distribution of reached goals for \STAC/SAC for the agents in \Figref{fig:multigoal-trajectory}. Trajectories are collected from $20$ episodes of $20$ different agents trained with $20$ different seeds for each algorithm. We observe that with higher $\alpha$'s, more agent trajectories converge to the left two goals (G2 and G3), which is not the case for SAC and SQL. This shows that \STAC\ learns a more optimal solution to the MaxEnt objective in Eq.\eqref{eq:max_entr_obj}.
\begin{figure}[H]
\centering
\begin{subfigure}{.24\linewidth}
    \includegraphics[width=\linewidth]{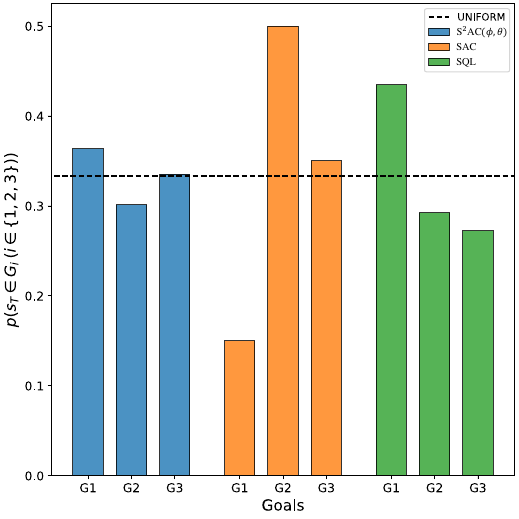}
    \caption{$\alpha=0.2$}
    \label{fig:multi-goal_distribution_a-0.2}
\end{subfigure}
\begin{subfigure}{0.24\linewidth}
    \includegraphics[width=\linewidth]{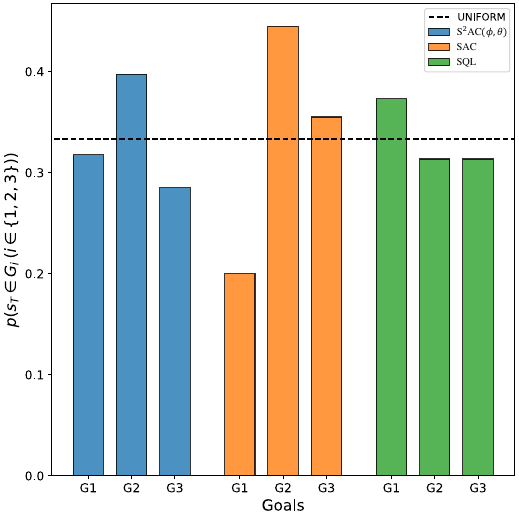}
    \caption{$\alpha=1$}
    \label{fig:multi-goal_distribution_a-1}
\end{subfigure}
\begin{subfigure}{0.24\linewidth}
    \includegraphics[width=\linewidth]{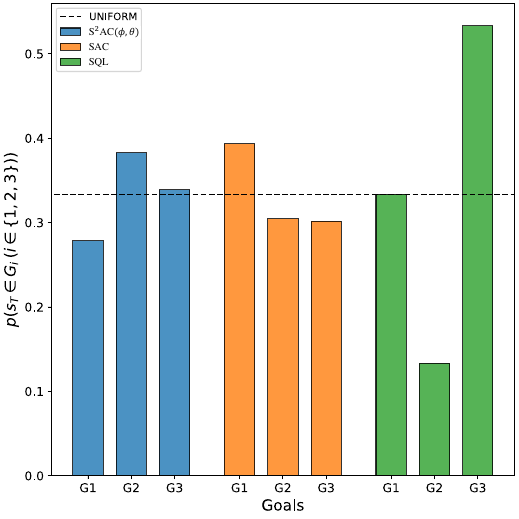}
    \caption{$\alpha=10$}
    \label{fig:multi-goal_distribution_a-10}
\end{subfigure}
\begin{subfigure}{.24\linewidth}
    \includegraphics[width=\linewidth]{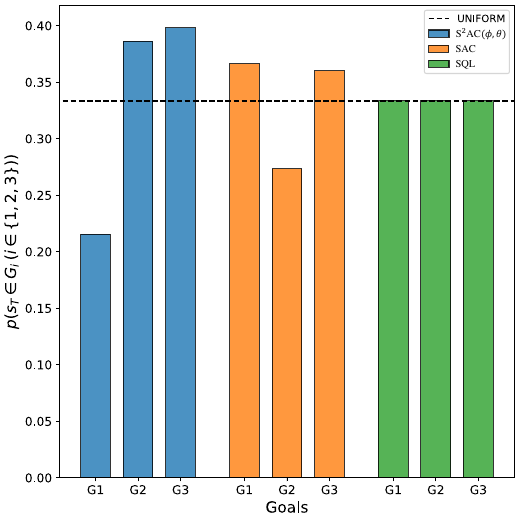}
    \caption{$\alpha=20$}
    \label{fig:multi-goal_distribution_a-20}
\end{subfigure}
    \caption{Distribution of reached goals for \STAC, SAC and SQL with different $\alpha$'s. The x-axis denotes different goals. The y-axis represents the ratio of trajectories that reach the goal.}
    \label{fig:multi-goal-distribution}
\end{figure}

\textbf{Entropy heatmap of \STAC\ in the multi-goal environment.}
\Figref{fig:entropy_heatmap} shows the entropy heatmap of \STAC\ with different $\alpha$'s. A brighter color corresponds to higher entropy. For \STAC, the higher $\alpha$, the higher the entropy on the left quadrant compared to the right one, i.e., the more contrast between the left and the right quadrants. For instance, In Figure~\ref{fig:multi-goal_stac_a-20_HM} (\STAC, $\alpha=20$), notice a clear green/yellow patch spanning the left side, while the right side is mostly dark blue except for the edges.
\begin{figure}[H]
\centering
\begin{subfigure}{.22\linewidth}
    \includegraphics[width=\linewidth]{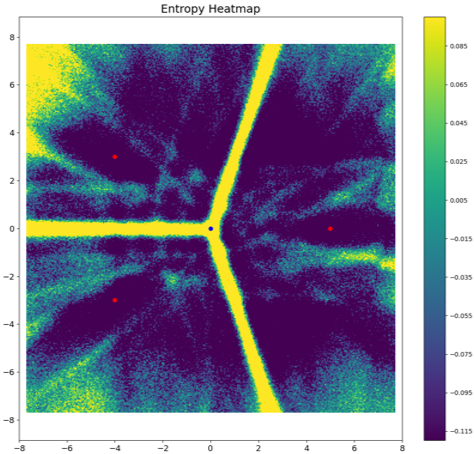}
    \caption{\STAC, $\alpha=0.2$}
    \label{fig:multi-goal_stac_a-0.2_HM}
\end{subfigure}
\begin{subfigure}{0.22\linewidth}
    \includegraphics[width=\linewidth]{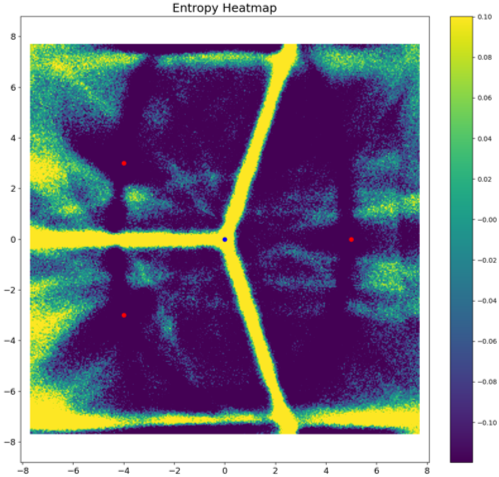}
    \caption{\STAC, $\alpha=1$}
    \label{fig:multi-goal_stac_a-1_HM}
\end{subfigure}
\begin{subfigure}{0.22\linewidth}
    \includegraphics[width=\linewidth]{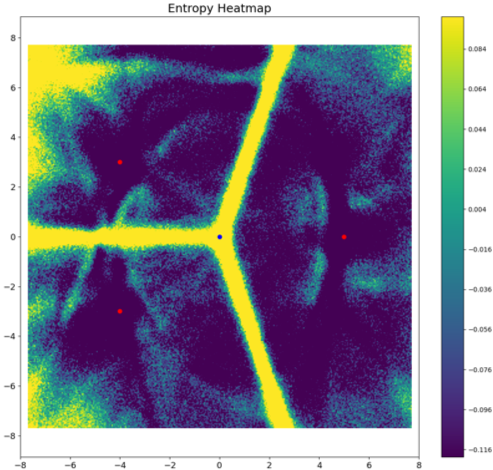}
    \caption{\STAC, $\alpha=10$}
    \label{fig:multi-goal_stac_a-10_HM}
\end{subfigure} 
\begin{subfigure}{.22\linewidth}
    \includegraphics[width=\linewidth]{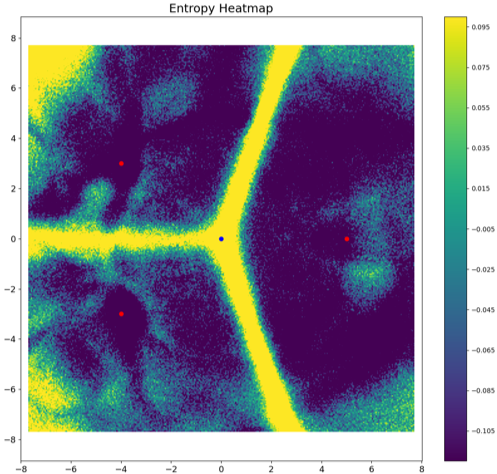}
    \caption{\STAC, $\alpha=20$}
    \label{fig:multi-goal_stac_a-20_HM}
\end{subfigure}\\
\caption{The entropy heatmap of \STAC\ in the multi-goal environment for different $\alpha$}
\label{fig:entropy_heatmap}
\end{figure}

\textbf{Smoothness of the Q-landscapes.} To assess the effect of the entropy, we visualize the Q-landscapes corresponding to six typical states $s\!\in\!\{s_o, s_a, s_b, s_c, s_d, s_e\}$ (marked in blue on the upper left of \Figref{fig:entropy_states}) across different trajectories to the goal and report their associated entropy $\cH(\cdot|s)$ (bottom left of \Figref{fig:entropy_states}). The blue dots correspond to $10$ SVGD particles at convergence. We observe that the Q-landscape becomes smoother with increasing $\alpha$. For instance, notice how the modes for state $s_c$ become more connected. Quantified measurements of smoothness are in \Figref{fig:smoothness}. We use two metrics $M_1$ and $M_2$ to measure the smoothness of the learned Q-landscape: (1) $M_1$: the average over the L1-norm of the gradient of the Q-value with respect to the actions across trajectories, \ie $\mathbb{E}_{\tau\sim\pi(a|s)}\Big[\mathbb{E}_{(s_t,a_t)\in \tau} \big[\frac{||\nabla_{a_t}Q(s_t,a_t)||_1}{d} \big] \Big]$. (2) $M_2$: The average over the L1-norm of the Hessian of the Q-value with respect to the actions across trajectories, \ie $\mathbb{E}_{\tau\sim\pi(a|s)}\Big[\mathbb{E}_{(s_t,a_t)\in \tau} \big[\frac{1}{d^2}\sum_{i,j} |\nabla^2_{a_t}Q(s_t,a_t)|_{i,j} \big] \Big]$. \Figref{fig:smoothness} shows that increasing $\alpha$ leads to consistently smaller gradients (\Figref{fig:smoothness_grad}) and less curvature (\Figref{fig:smoothness_hessian}). Hence, the entropy results in a smoother landscape that helps the sampling convergence.

\begin{figure}[H]
\centering
\includegraphics[width=0.9\linewidth]{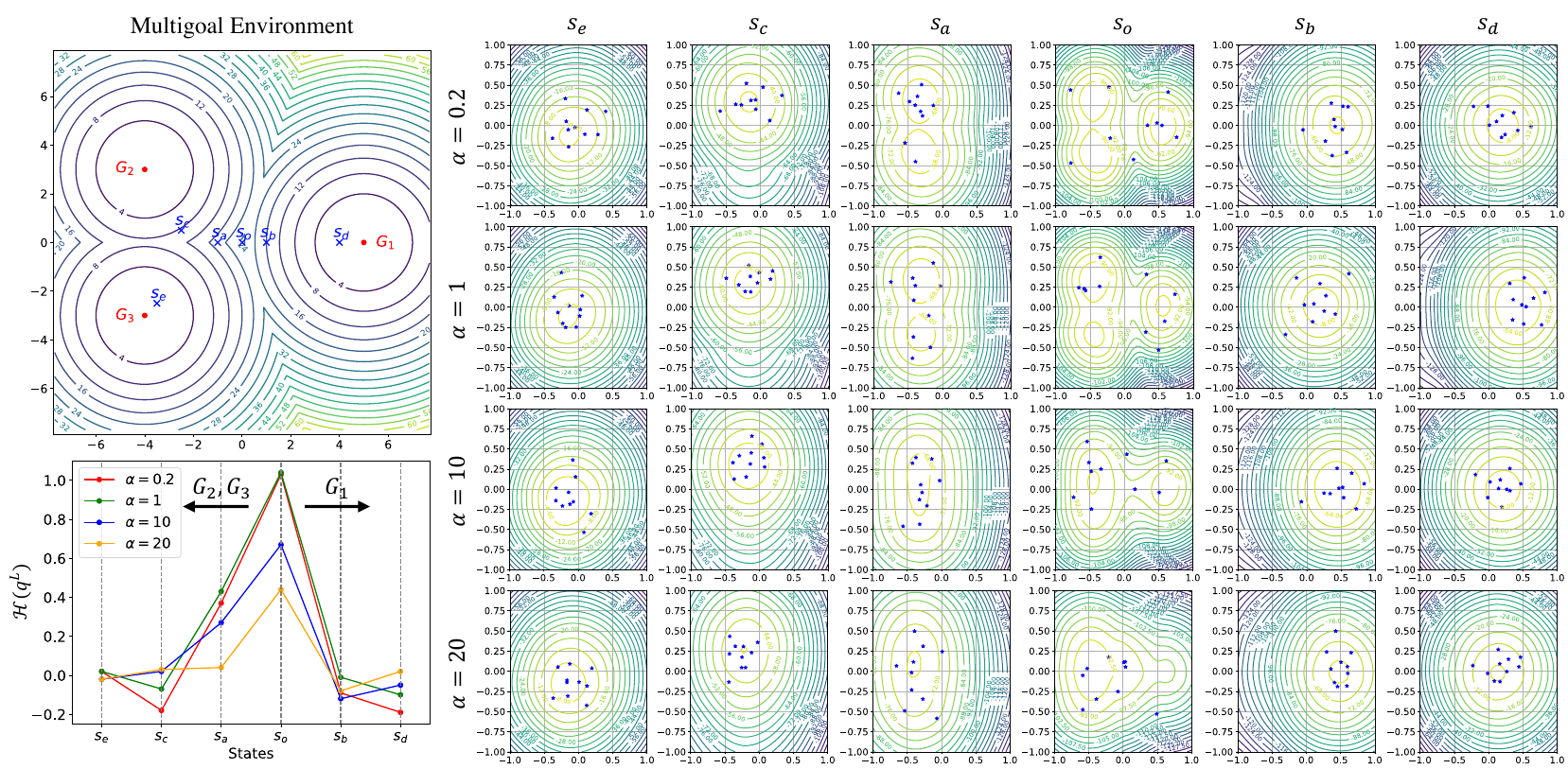}
\caption{
Results on the multi-goal environment. Increasing $\alpha$ yields smoother landscapes (\eg $s_o$). Notice how the modes become more connected (\eg for $s=s_a$ with increasing $\alpha$). The entropy at the different states is reported in the lower left figure. }
\label{fig:entropy_states}
\vspace{-2mm}
\end{figure}

\begin{figure}[H]
\centering
\begin{subfigure}{0.4\linewidth}
    \centering
    \includegraphics[width=0.6\linewidth]{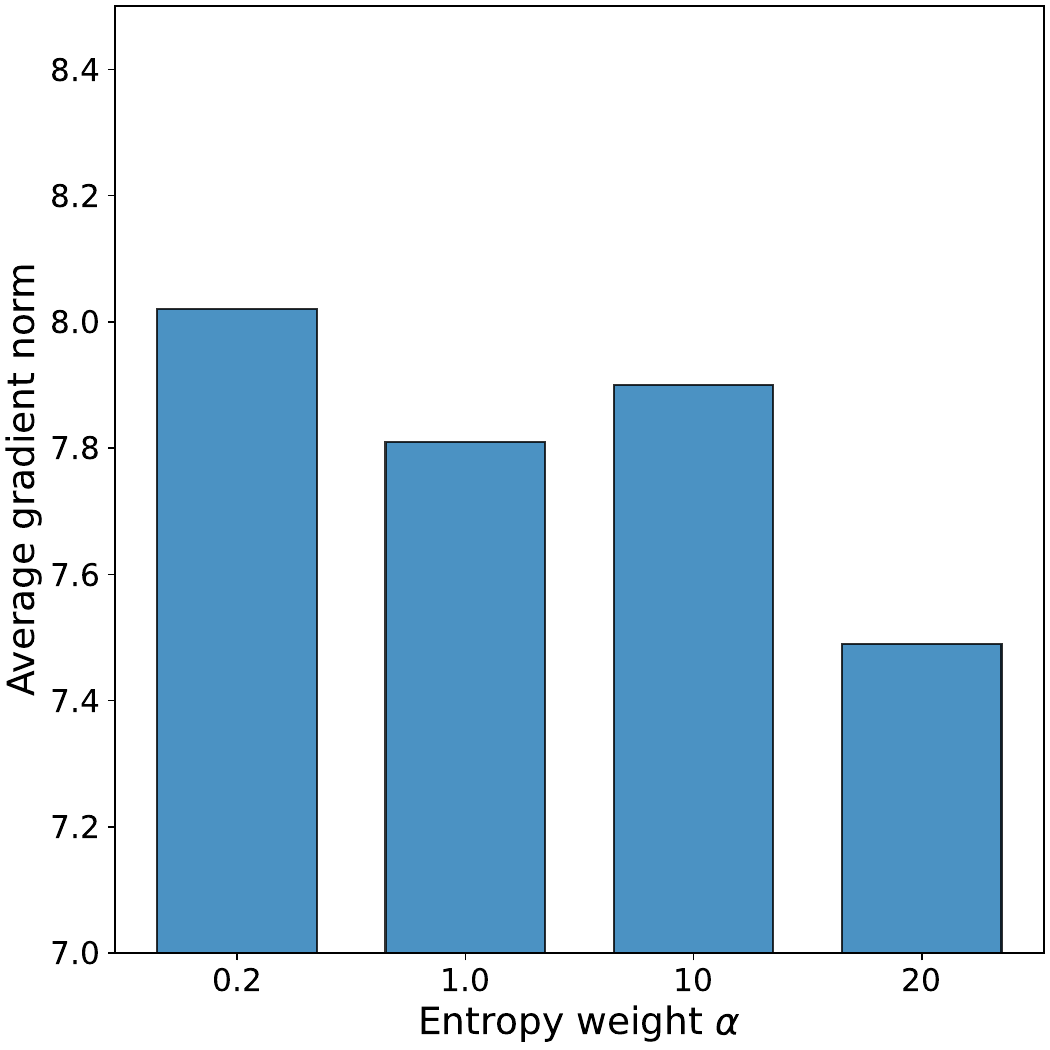}
    \caption{Average gradient across trajectories}
    \label{fig:smoothness_grad}
\end{subfigure} \hspace{2mm}
\begin{subfigure}{.4\linewidth}
    \centering
    \includegraphics[width=0.6\linewidth]{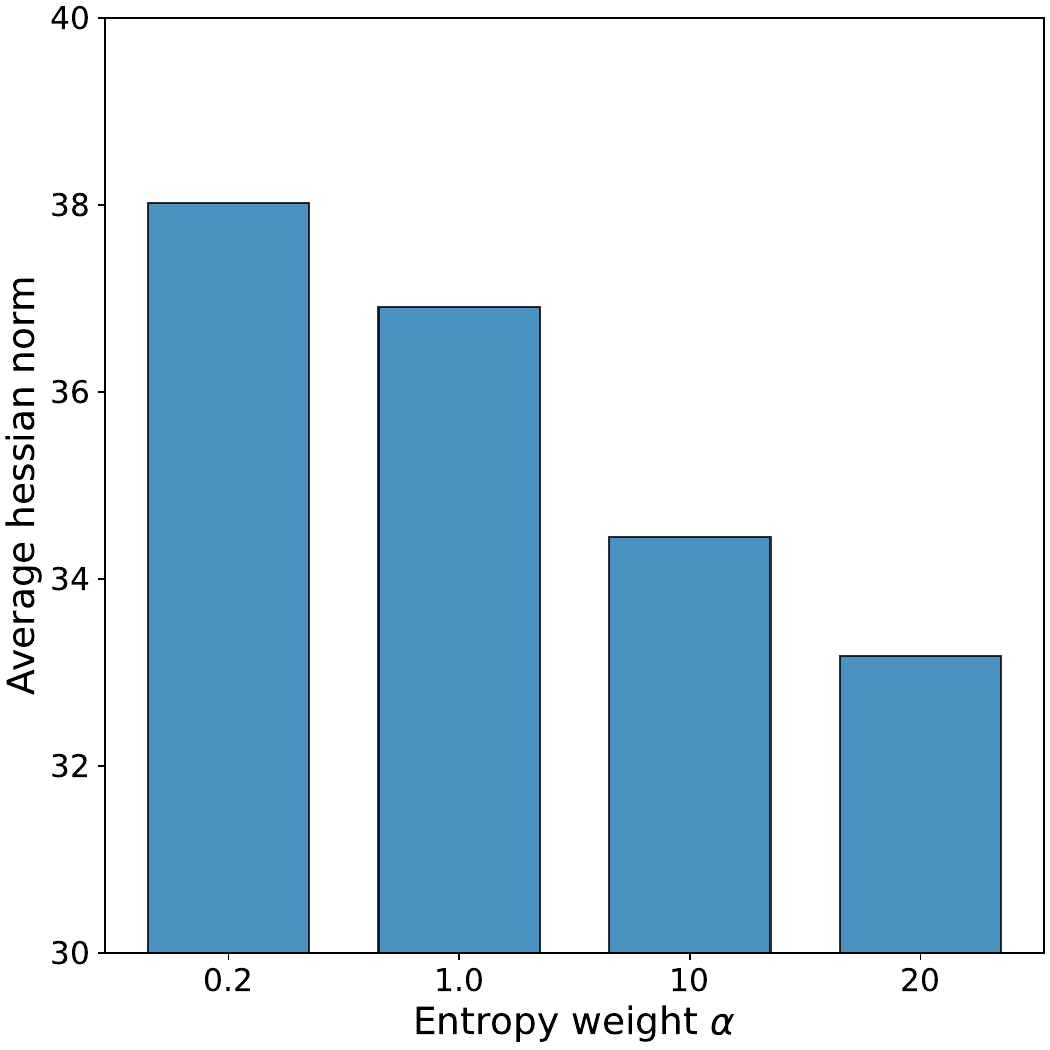}
    \caption{Average Hessian across trajectories}
    \label{fig:smoothness_hessian}
\end{subfigure}
\caption{Quantitative evaluation of the smoothness the Q-landscape of \STAC\ for different $\alpha$'s.}
\label{fig:smoothness}
\end{figure}

\textbf{Parametrization of $q^{0}$}. In \Figref{fig:mean_std_curve_initial_actor}, we visualizes the coordinates of the mean $\mu_\theta(s)$ and standard deviation $\sigma_\theta(s)$ of $q^{0}_{\theta}$ at different states $s \in \{s_o, s_e, s_b \}$ in the multigoal environment. As training goes on, $\mu_\theta(s)$ shifts closer to the nearest goals. For example, $\mu_\theta(s_b)$ becomes more positive during the training as it is shifting to $G_1$. Additionally, the model learns a high variance $\sigma_{\theta}(s)$ for the multimodal state $s_o$ and becomes more deterministic for the unimodal ones (\eg $s_e$ and $s_b$). As a result, in \Figref{fig:multigoal-robustness}, we observe that \STAC($\phi,\theta$) requires a smaller number of steps to convergence than \STAC($\phi$).

\textbf{Entropy estimation.} \Figref{fig:entropy_states} shows that the entropy is higher for states on the left side due to the presence of two goals, as opposed to a single goal on the right side (\eg $\cH(\pi_{\theta}(\cdot|s_a))\!<\!\cH(\pi_{\theta}(\cdot|s_o))$). Also, the entropy decreases when approaching the goals (\eg $\cH(\pi_{\theta}(\cdot|s_d))\!<\!\cH(\pi_{\theta}(\cdot|s_b))\!<\!\cH(\pi_{\theta}(\cdot|s_o))$). The same is valid along the paths to goal $G_1$. 

\begin{figure} [htb]
    \centering
     \includegraphics[width=0.35\linewidth]{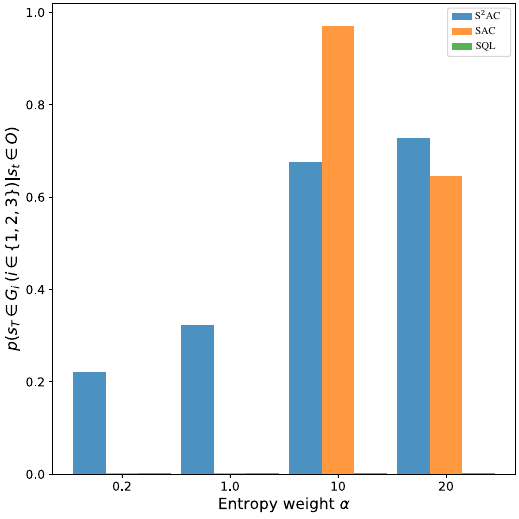}
      \caption{Distribution of reached goals after hitting an obstacle for \STAC, SAC and SQL.}
      \label{fig:obstacles}
\end{figure}

\textbf{Robustness/Adaptability.} In \Figref{fig:obstacles}, we report the distribution of reached goals after hitting an obstacle for \STAC, SAC and SQL for different $\alpha$'s. Notice that \STAC\ robustness, measured by the probability of reaching the goal for \STAC\ is consistently increasing with increasing $\alpha$. Intuitively, exploration is better with large values of $\alpha$, leading to better learning of the Q-landscape. In other words, from a given state, the agent is more likely to have explored more sub-optimal ways to reach the goal. So, when the optimal path is blocked with the barrier, the agents trained with \STAC\ are more likely to have learned several other ways to go around it. This is different from SAC, when the policy is uni-modal (Gaussian) and the agents are only able to escape the barrier and get to the goal for large $\alpha$'s ($\alpha \in {10,20}$). However, robustness in the case of SAC trained with large $\alpha$'s come at the expense of performance, \ie increased number of steps (See row 3 in Figure~\ref{fig:multigoal-robustness}). Besides, note that the number of SAC agents reaching the goals for $\alpha=20$ is less than the one for $\alpha=10$. This is due to the fact that higher $\alpha$'s lead to higher stochasticity and less structured exploration (the standard deviation of the Gaussian becomes very large). SQL fails to reach the goals once the obstacle is added. This shows that the implicit entropy in SQL is not as efficient as the explicit entropy in SAC and \STAC.

\textbf{Amortized \STAC.} In \Figref{fig:amortized_multigoal}, we report results of the amortized version of \STAC, \ie \STAC($\phi,\theta,\psi$) on the multigoal environment. Performance and robustness are comparable with the non-amortized version \STAC($\phi,\theta$) while having a faster inference (feedforward pass through $f_{\psi}(s,z)$). 

\begin{figure}[H]
\centering
    \centering
    \begin{subfigure}{1\linewidth}
    \centering
    \includegraphics[width=1\linewidth]{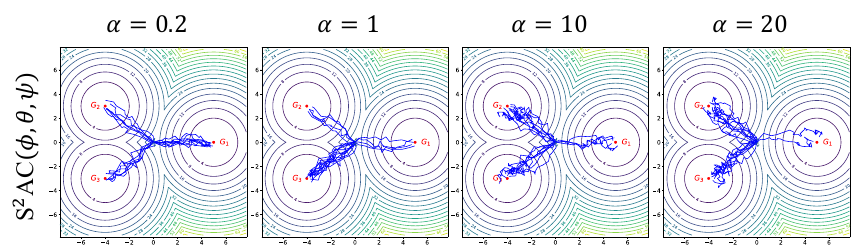}
    \caption{Performance of \STAC($\phi,\theta, \psi$) on the Multigoal environment}
\end{subfigure} \hspace{2mm}
\begin{subfigure}{1\linewidth}
    \centering
    \includegraphics[width=1\linewidth]{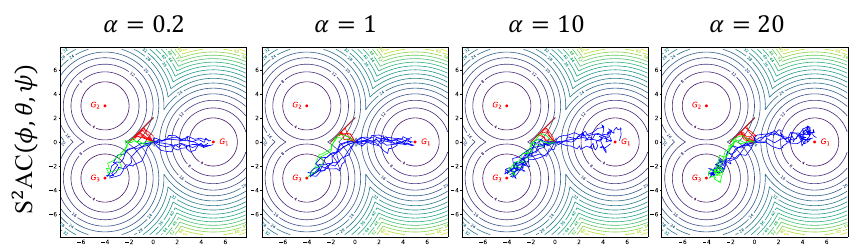}
    \caption{Performance of \STAC($\phi,\theta, \psi$) on the Multigoal environment with obstacles}
\end{subfigure}
    \caption{Performance of Amortized \STAC\ on the Multigoal environment}
    \label{fig:amortized_multigoal}
\end{figure}

\newpage



\section{Additional Results: MuJoCo}\label{appendix:mujoco}
\tabref{tab:mujoco_hyperparams} lists the \STAC\ hyper-parameters used in our experiments. Additionally, we give details on accelerating \STAC.

\begin{table}[h!]
\caption{Hyperparameters}
\centering
\setlength\tabcolsep{7pt}
\begin{tabular}{c|c|c}
\toprule
& Hyperparameter & Value   \\\midrule
\multirow{3}{*}{Training} & Optimizer & Adam  \\
& Learning rate  & $3 \cdot 10^{-4}$ \\
& Batch size & $100$ \\
\midrule
\multirow{4}{*}{Deepnet}
& Number of hidden layers (all networks) & $2$ \\
& Number of hidden units per layer & $256$ \\
& Number of samples per minibatch & $256$ \\
& Nonlinearity & ReLU \\
\midrule
\multirow{5}{*}{RL} & Target smoothing coefficient & $0.005$ \\
& Discount $\gamma$  & $0.99$ \\
& Target update interval& $1$ \\
& Entropy weight $\alpha$ &  1.0 for all environments, 0.2 Ant \\ 
& Replay buffer size $|\cD|$  & $10^{6}$ \\
\midrule
\multirow{5}{*}{SVGD} & initial distribution $q_0$& $\cN(\bm{0}, 0.5\bm{I})$ \\
& Learning rate $\epsilon$ &  $0.1$ \\
& Number of steps $L$ (\STAC($\phi$)) & $20$ \\
& Number of steps $L$ (\STAC($\phi,\theta$))& $3$ \\
& Number of particles $m$ & $10$\\ 
& Particles range (num. std) $t$ & 3\\ 
& Kernel variance & $\sigma = \frac{ \sum_{i,j} \| a_i - a_j \|^2  }{ 4(2 \log{m+1}) }$\\
\bottomrule
\end{tabular}
\label{tab:mujoco_hyperparams}
\end{table}
\textbf{Computational Efficiency.} 
Compared to SAC, running SVGD for $L$ steps requires $L$ additional back-propagation passes through the Q-network and a factor of $m$ (number of particles) increase in the memory complexity. In order to improve the efficiency of \STAC, we limit the number of particles $m$ to 10/20 and the number of SVGD steps $L$ to 10/20.

Additionally, we experiment with the following amortized version of \STAC. Specifically, we train a deepnet $f_{\psi}(s,z)$ to mimic the SVGD dynamics during testing, where $z$ is a random vector that allows mapping the same state to different particles. Note that we cannot use this deepnet during training as we need to estimate the closed-form entropy which depends on the SVGD dynamics. One way to train $f_{\psi}(s,z)$ is to run SVGD to convergence and train $f_{\psi}(s,z)$ to fit SVGD outputs. This however requires collecting a large training set of state action pairs by repeatedly deploying the policy. This might be slow and result in low coverage of the states that are rarely visited by the learned policy and hence result in poor robustness in case of test time perturbations. We instead propose an incremental approach in which $\psi$ is iteratively adjusted so that the network output $a = f_{\psi}(s,z)$ changes along the Stein variational gradient direction that decreases the KL divergence between the policy and the EBM distribution, \ie
\begin{equation}
\Delta f_{\psi}(z,s) = \frac{1}{m} \sum_{i=1}^{m} k(a_{i}, f_{\psi}(s,z)) \nabla_{a_i} Q(s,a_i) + \alpha \nabla_{a_i}k(a_i,f_{\psi}(s,z))  
\end{equation}
Note that $\Delta f_{\psi}$ is the optimal direction in the reproducing kernel Hilbert space, and is thus not strictly the gradient of Eq.\eqref{eq:sac_pi}, but it still serves a good approximation, \ie $\frac{\partial J}{\partial a_t} \propto \Delta f_{\psi}$, as explained by \cite{wang2016learning}. Thus, we can use the chain rule and backpropagate the Stein variational gradient into the policy network according to
\begin{equation}
\frac{\partial J(s)}{\partial \psi} \propto \E_{z} \left[ \Delta f_{\psi}(s,z)  \frac{\partial f_{\psi}(z,s)}{\partial \psi}  \right]. 
\label{eq:amortized}
\end{equation}
to learn the optimal sampling network parameters $\psi^{*}$. Note that the amortized network takes advantage of a Q-value that estimates the expected future entropy which we compute via unrolling the SVGD steps using Eq~\eqref{eq:svgd_entr_formula}.

The modified \STAC\ algorithm is described in Algorithm~\ref{alg:stac_2}. 

\begin{algorithm}[H]
    \caption{Stein Soft Actor Critic (\STAC) with Amortized policy (test-time)}
    \begin{algorithmic}[1]
    \STATE Initialize parameters $\phi$, $\theta$, $\psi$, hyperparameter $\alpha$, and replay buffer $\mathcal{D} \leftarrow \emptyset$ 
    \FOR{each iteration}
    \FOR{each environment step $t$}
    \STATE Sample action particles $\{a\}$ from $\pi_{\theta}(\cdot|s_t)$
    \STATE Select $a_t \in \{a\}$ using  exploration strategy
    \STATE Sample next state $s_{t+1} \sim p(s_{t+1}|s_{t},a_{t})$
    \STATE Update replay buffer $\mathcal{D} \leftarrow \mathcal{D} \cup (s_t,a_t,r_t,s_{t+1})$
    \ENDFOR
    \FOR{each gradient step}
    \STATE \textbf{Critic update:}
    \STATE \hspace{4mm} Sample particles $\{a\}$ from an EMB sampler $\pi_{\theta}(\cdot|s_{t+1})$
    \STATE \hspace{4mm} Compute entropy $\mathcal{H}(\pi_{\theta}(\cdot|s_{t+1}))$ using Eq.\eqref{eq:closed_form_entropy} 
    \STATE \hspace{4mm} Update $\phi$ using Eq.\eqref{eq:critic_update}
    \STATE \textbf{Actor update:}
    \STATE \hspace{4mm} Update $\theta$ using Eq.\eqref{eq:actor_loss}
    \STATE \hspace{4mm} Update $\psi$ using Eq.\eqref{eq:amortized}
    \ENDFOR
    \ENDFOR
    \end{algorithmic}
    \label{alg:stac_2}
    \end{algorithm}



\paragraph{Evaluation with the Rliable Library.}
Performances curves in \Figref{fig:mujoco_3env} are averaged over 5 random seeds and then smoothed using Savitzky-Golay filtering with window size 10. Additionally, we report metrics from the Rliable Library~\citep{agarwal2021deep} in Fig.~\ref{fig:mujoco_3env}, including 

\begin{itemize}

\item \textbf{Median}: Confidence interval of the median performance of each algorithm across different seeds, averaged over different MuJoCo environments.

\item \textbf{Mean}: Confidence interval of the average performance of each algorithm across different seeds and environments. 

\item \textbf{IQM (Interquantile means)}: Instead of computing the average performance on all trials, IQM shows the mean of the middle 50 percent of performance across different seeds.

\item \textbf{Optimality Gap}: The area between results curve of baseline algorithms and the horizontal line at the average performance of \STAC\ $(\phi,\theta)$.

\item \textbf{Probability of improvement over baselines}: The average probability that \STAC\ $(\phi,\theta)$ can make performance improvements over baseline algorithms.

\end{itemize}

The parameterized version of \STAC\ has the best performance among baselines in all the considered metrics. It has a probability of $\sim$65\% in outperforming  SAC-NF and $\sim$80\% in outperforming IAF.


\end{document}